\newtheorem*{remark}{Note}
\newtheoremstyle{thmstyle}
  {0pt}   
  {+0pt}   
  {\itshape}  
  {}       
  {\bfseries} 
  {.}      
  { }      
  {}       
\renewenvironment{proof}[1][\proofname]{\par
  \pushQED{\qed}%
  \normalfont\topsep=0pt \partopsep=0pt 
  \trivlist
  \item[\hskip\labelsep
        \itshape
    #1\@addpunct{.}]\ignorespaces
}{%
  \popQED\endtrivlist\@endpefalse
}
\theoremstyle{thmstyle}
\newtheorem{theorem}{Theorem}[section]
\newtheorem{proposition}[theorem]{Proposition}
\newtheorem{lemma}[theorem]{Lemma}
\newtheorem{corollary}[theorem]{Corollary}
\newtheorem{definition}[theorem]{Definition}
\DeclareMathOperator*{\argmax}{arg\,max}
\DeclareMathOperator{\E}{\mathbb{E}}
\DeclareRobustCommand{\exp}[1]{\operatorname{exp}\left(#1\right)}
\newcommand{\adv}[2]{\textrm{adv}(#1,#2)}
\begin{document}

%

%
\runningauthor{ Arsenii Mustafin, Aleksei Pakharev, Alex Olshevsky, Ioannis Ch. Paschalidis}

\twocolumn[

\aistatstitle{MDP Geometry, Normalization and Reward Balancing Solvers}

\aistatsauthor{ Arsenii Mustafin$^1$ 
\And Aleksei Pakharev$^2$ 
\And  Alex Olshevsky$^1$ \And Ioannis Ch. Paschalidis$^1$}
\vspace{5pt}
\aistatsaddress{ $^1$ Boston University
  \And  
  $^2$ Memorial Sloan Kettering Cancer Center \\
 } ]

\begin{abstract}
We present a new geometric interpretation of Markov Decision Processes (MDPs) with a natural normalization procedure that allows us to adjust the value function at each state without altering the advantage of any action with respect to any policy. This advantage-preserving transformation of the MDP motivates a class of algorithms which we call {\em Reward Balancing}, which solve MDPs by iterating through these transformations, until an approximately optimal policy can be trivially found.  We provide a convergence analysis of several algorithms in this class, in particular showing that for MDPs for unknown transition probabilities we can improve upon state-of-the-art sample complexity results. 
\end{abstract}

\section{INTRODUCTION}

A {\em Markov Decision Process (MDP)} is a common mathematical model for sequential decision making. It was initially introduced in the late 1950s with main algorithms such as Value Iteration \citep{bellmandp} and Policy Iteration \citep{howard1960dynamic}. The topic has been developed over the course of the next decades, and the most important results were summarized in 1990 by \citet{puterman1990markov}. More recently, the rise of practical {\em Reinforcement Learning (RL)} algorithms has drawn new attention to MDP analyses, leading to several important results, especially in terms of finite-time convergence analysis. At the same time, we have not seen fundamentally new algorithms for solving MDPs and state value estimation remains a backbone of existing MDP algorithms.

In this paper, we introduce a novel geometric view of MDP actions. Inspired by this view we suggest a new class of algorithms which solve MDPs without assigning state values. We present a particular algorithm from this class, {\em Safe Reward Balancing (RB-S)}, that achieves state-of-the-art convergence results in several settings.

\subsection{Related work}
There is a vast and growing literature providing theoretical analyses of MDP algorithms. In this section we mention only those works, which are directly related to our paper.

We are aware of only one work which suggests an MDP analysis from a geometric perspective \citep{vfpolytope}, which was subsequently utilized by \cite{bellemare2019geometric,dabney2021value}. 

Value Iteration algorithm (VI) was initially proposed by \cite{bellmandp} as an algorithm to solve known MDPs, and its convergence was proven by \cite{howard1960dynamic} based on $\gamma$-contraction properties of the Bellman update (where $\gamma$ is the MDP discount factor). More recently, additional properties of VI convergence were explored in \cite{puterman2014markov, feinberg2014value}.

$Q$-learning might be seen as a version of VI applied to MDPs with unknown transition dynamics~\citep{watkins1989learning, watkins1992q}. Its asymptotic convergence was analyzed in \cite{jaakkola1993convergence, tsitsiklis1994asynchronous, szepesvari1997asymptotic, borkar2000ode}. Lately, a few finite-time convergence results have appeared in \citep{even2001learning, beck2012error, wainwright2019stochastic, chen2020finite}, including \citet{Q-learning_sota} showing sample complexity of $\tilde{\mathcal{O}}\left(\frac{1}{(1-\gamma)^4}\right)$\footnote{The notation $\tilde{O}(\cdot)$ subsumes constant and logarithmic factors.}, which is the current state-of-the-art result. Federated $Q$-learning was analysed in \cite{chen2022sample, shen2023towards, woo2023blessing}, with the latter work obtaining the current state-of-the-art sample complexity of $\tilde{\mathcal{O}}\left(\frac{1}{K(1-\gamma)^5}\right)$.

\subsection{Main Contributions} \label{ssec:contribution}

In this work we introduce a new geometric interpretation of an MDP and demonstrate how this perspective helps to analyze existing algorithms and develop a new algorithmic approach for solving MDPs. Our main contributions are as follows:
\vspace{-6pt}
\begin{itemize}
    \item We present a new geometric interpretation of the classical MDP framework and show that the MDP problems of policy evaluation and finding the optimal policy have geometric representations.   
    Building on this interpretation, we introduce a geometric transformation that modifies the MDP action rewards into so-called normal form while preserving the optimal policy. For an MDP in {\em normal form}, the problem of finding an optimal policy becomes trivial
    (Section~\ref{sec:geom_action_space}).
    \vspace{-4pt}
    \item Leveraging this intuition, we propose to solve MDPs by doing what we call {\em Reward Balancing}, which modifies an MDP to approach its normal form. We show that value iteration and several of its modifications can be viewed as doing this. We consider one algorithm in the Reward balancing class,  {\em Safe Reward Balancing (RB-S)}, and extend it to the unknown MDP, resulting in an an algorithm which is new to our knowledge. 
     The stochastic version of RB-S is learning-rate-free (i.e., no step-size needs to be set), maintains only a single vector at each iteration just like $Q$-learning, while achieving a state-of-the-art $O\left((1-\gamma)^{-4} \right)$ sample complexity for producing an $\epsilon$-optimal policy (and not just computing an approximation to the true $Q$-values). 
    Additionally, stochastic RB-S is parallelizable and one can speed up performance by a multiplicative factor of $K$ with $K$ independent workers sampling from the same MDP; this is not known to be the case for Q-learning (Section~\ref{sec:stoch_rb}).
\end{itemize}
\vspace{-10pt}

\section{Basic MDP setting}
We assume a standard infinite-horizon, discounted reward MDP setting \citep{sutton2018reinforcement, puterman2014markov}. An MDP is defined by the tuple $\mathcal{M} = \langle \mathcal{S}, \mathcal{A}, \mathrm{st}, \mathcal{P}, \mathcal{R}, \gamma \rangle$, where $\mathcal{S} = \{s_1,\ldots, s_n\}$ is a finite set of $n$ states; $\mathcal{A}$ is a finite set of size $m$ of all possible actions over all states combined; $\mathcal{P}$ is the assignment of a probability distribution $P(\,\cdot \,|\, a)$ over $\mathcal{S}$ to each action; $\mathcal{R}: \mathcal{A} \to \mathbb{R}$ specifies deterministic rewards given for actions in $\mathcal{A}$; and $\gamma \in (0,1)$ is a discount factor. The only non-standard attribute of $\mathcal{M}$ is $\mathrm{st}$, with which we denote the map from $\mathcal{A}$ to $\mathcal{S}$ that associates an action $a$ can be chosen at state $s$ to the state $s$, and we need it for the following reason.
\begin{remark}
We define $\mathcal{A}$ such that each action is attributed to a particular state via the map $\mathrm{st}$, \textit{i.e.,} if in two different states there is an option to choose two actions with the same rewards and the same transition probability distribution, we consider them as two different actions. In other words, we treat as separate actions what is called "state-action pairs" in earlier literature. Consequently, the overall number of actions $m = |\mathcal{A}|$ in our setting corresponds to $|\mathcal{S}|\cdot|\tilde{\mathcal{A}}|$ in the ``universal'' action setting, where $\tilde{\mathcal{A}}$ is the number of universal actions, i.e., actions available at every state.
\end{remark}

Having an MDP, an agent chooses a policy to interact with it, that is a map $\pi: \mathcal{S} \rightarrow \mathcal{A}$ such that $\mathrm{st}(\pi(s)) = s$ for any $s\in\mathcal{S}$. We consider only deterministic stationary policies, which are policies that choose a single action to perform in each state throughout the entire trajectory. If policy $\pi$ chooses action $a$ in a state $s$, we say that $a \in \pi$. Each policy can be defined by the actions it takes in all states, $\pi = \{a_1, \dots, a_n\}$, where $a_i$ denotes the action taken in state $i$.

Given a policy $\pi$, we can define its value over a state $s$ to be the expected infinite discounted reward over trajectories starting from the state $s$:
\begin{equation*}
V^\pi (s) = \E \left[ \sum_{t=1}^{\infty} \gamma^t r_t \right],    
\end{equation*} 
where $r_t$ is the reward received at time step $t$ starting with state $s$ and actions taken according to policy $\pi$. The vector $V^\pi$ will be the unique vector that satisfies the Bellman equation $T^\pi V^\pi = V^\pi$, where $T^\pi$ is the Bellman operator, defined as:
\[ (T^\pi V)(s) =R(\pi(s)) + \sum_{s'} P\left(s'\,|\,\pi(s)\right) \gamma V(s'). \]
Evaluating the policy values of a given policy $\pi$ is an important problem in MDPs called the Policy Evaluation problem. However, the main problem of interest given an MDP is to find the optimal policy $\pi^*$ such that:
\[ V^{\pi^*} (s) \ge V^\pi(s), \quad \forall \pi, s. \]
The exact solution to this problem can be found using the {\em Policy Iteration} algorithm, but each iteration requires inverting a square matrix of size $n \times n$, which is computationally expensive. Instead, we might be interested in a non-exact solution, or an $\epsilon$-optimal policy $\pi^\epsilon$~--- a policy that satisfies:
\[ V^{\pi^*} (s) - V^{\pi^\epsilon}(s) < \epsilon, \quad \forall s. \]
An $\epsilon$-optimal policy can be found using the VI algorithm in a number of iterations dependent on $\epsilon$, with each iteration having a computational complexity of $\mathcal{O}(n m)$.

\section{GEOMETRY OF ACTION SPACE} \label{sec:geom_action_space}

\begin{figure*}[!t] 
\begin{center}
\includegraphics[width=\textwidth]{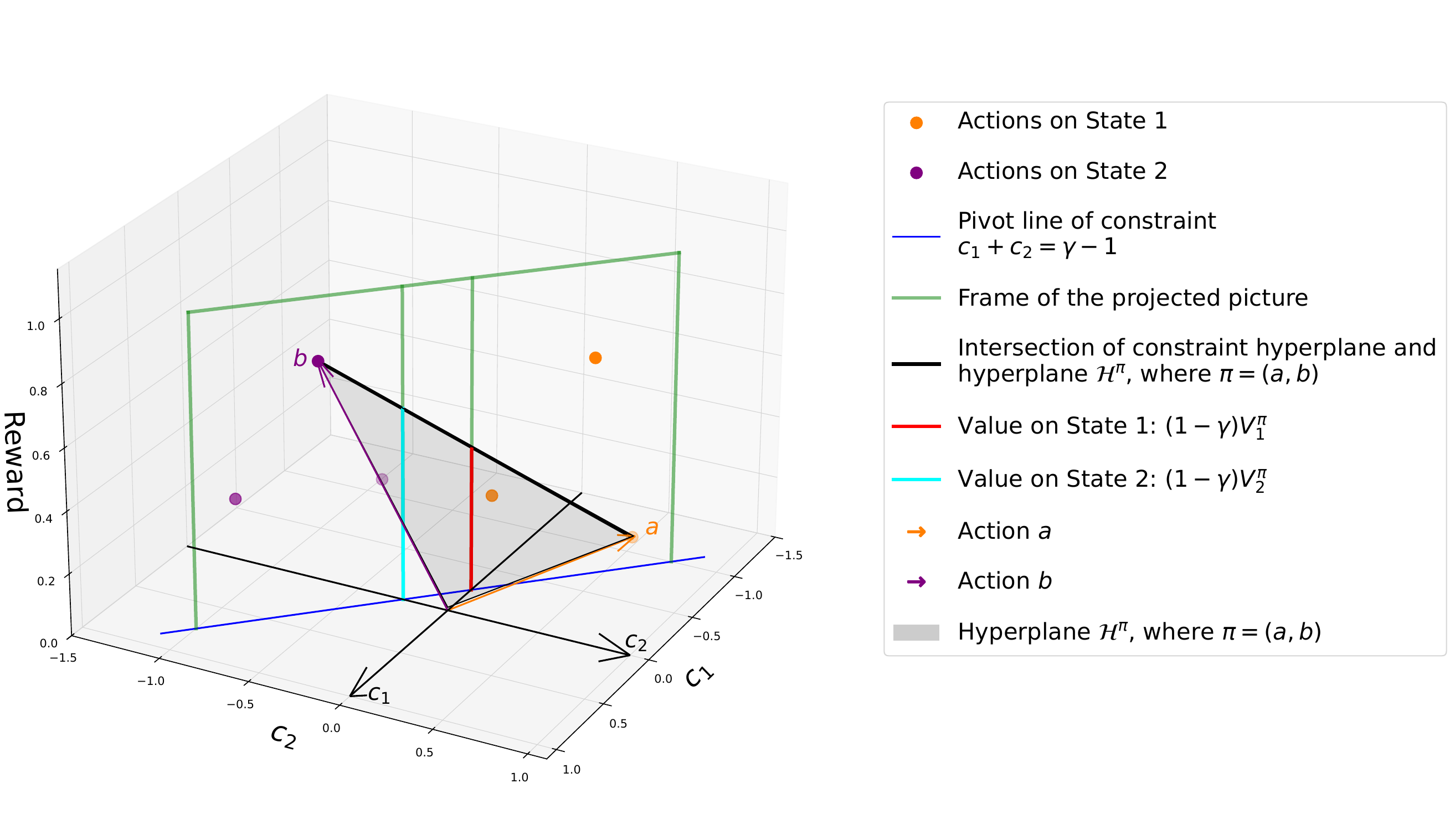}%
\end{center}
\caption{An example of the action space for a 2-state MDP with 3 actions in each state. The vertical axis is the action reward axis, while the two horizontal axes correspond to the first (axis $c_1$) and second (axis $c_2$) coefficients of the action vector (the same example with actions only can be found in the Appendix, Figure \ref{fig:action_space_example}). The figure also illustrates the application of Theorem \ref{thm:selfloop_values}.  The shaded area represents the policy hyperplane $\mathcal{H}^\pi$ of the policy $\pi = (a, b)$. The black line connecting actions $a$ and $b$ indicates the intersection of the policy hyperplane and the action constraint hyperplane. The red and cyan bar heights correspond to the values of the policy $\pi$ in States 1 and 2.}
\label{fig:action_space_with_policy}
\vspace{-10pt}
\end{figure*}
Take an action $a$ at state $1$ participating in a policy $\pi$ with values $V^\pi$. Denote the reward associated with this action as $r_a$, and the probabilities to move to state $s$ as $p^a_s$. We know that the policy values satisfy the Bellman equation, which can be rewritten as:
\begin{equation} \label{eq:action_equation}
r^a\cdot 1 + (\gamma p^a_1-1)V^\pi(1) + \gamma \sum_{i=2}^{n} p^a_i V^\pi(i) = 0.
\end{equation}
The left hand side can be written as the dot product of two vectors: $(r^a, \gamma p^a_1-1, \gamma p^a_2, \dots, \gamma p^a_n)$ and $(1, V^\pi(1), V^\pi(2), \dots, V^\pi(n))$. Note that the former vector contains all the information related to the action $a$~--- reward and transition probabilities~--- and can be constructed without defining a policy. The latter vector consists of policy values and can be constructed without knowing the actions participating in the policy. We denote the first vector as $a^+$ and call it the \textbf{action vector}, assuming it is a row vector, and the second vector as $V^\pi_+$ and call it the \textbf{policy vector}, assuming it is a column vector. With this notation we can establish 
the following proposition.

\begin{proposition}\label{pro:orthogonal_action_policy}
For a policy $\pi$ and an action $a \in \pi$, the dot product of the corresponding action vector and policy vector is $0$. In other words, the action vector $a^+$ is \textbf{orthogonal} to the policy vector $V^\pi_+$ if $a \in \pi$.
\end{proposition}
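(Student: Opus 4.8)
The plan is to recognize that the asserted orthogonality is nothing but the Bellman equation for the single state at which action $a$ is taken, rewritten coordinate by coordinate. Since $a \in \pi$, the policy $\pi$ selects $a$ at the state $\mathrm{st}(a)$; as in the derivation preceding the statement, I would take $\mathrm{st}(a) = 1$ without loss of generality, so that $\pi(1) = a$ and the reward and transition data attached to $a$ coincide with those used by $\pi$ at state $1$.

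First I would write out the defining fixed-point relation $T^\pi V^\pi = V^\pi$ in its first coordinate. Because $\pi(1) = a$, the Bellman operator evaluated at state $1$ uses exactly $r^a$ and $p^a_1,\dots,p^a_n$, yielding
\begin{equation*}
V^\pi(1) = r^a + \gamma \sum_{i=1}^{n} p^a_i\, V^\pi(i).
\end{equation*}
Next I would move $V^\pi(1)$ to the right-hand side and absorb it into the $i=1$ term of the sum, converting the coefficient of $V^\pi(1)$ from $\gamma p^a_1$ into $\gamma p^a_1 - 1$. This produces exactly Equation~\eqref{eq:action_equation}, whose left-hand side is the coordinate expansion of the inner product $a^+ \cdot V^\pi_+$ under the definitions $a^+ = (r^a, \gamma p^a_1 - 1, \gamma p^a_2, \dots, \gamma p^a_n)$ and $V^\pi_+ = (1, V^\pi(1), \dots, V^\pi(n))$: the leading entries pair $r^a$ with $1$, while the remaining entries pair the (shifted) transition coefficients with the corresponding values. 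Hence $a^+ \cdot V^\pi_+ = 0$.

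There is no genuine analytic obstacle here — the content is essentially definitional, so I expect no difficulty beyond bookkeeping. The one point requiring care is that the $-1$ correction must sit in the coordinate indexed by $\mathrm{st}(a)$ (here state $1$), since it arises precisely from subtracting $V^\pi(\mathrm{st}(a))$ from the Bellman right-hand side. For a general action attached to state $j$ the identical argument applies with the $-1$ appearing in the $j$-th transition slot, which is why the orthogonality holds for every $a \in \pi$ irrespective of the state to which $a$ is attached.
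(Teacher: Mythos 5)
Your proof is correct and coincides with the paper's own justification, which establishes the proposition by exactly this inline derivation: rewriting the Bellman fixed-point relation $T^\pi V^\pi = V^\pi$ at the state $\mathrm{st}(a)$ into Equation~\eqref{eq:action_equation} and reading its left-hand side as the coordinate expansion of $a^+ \cdot V^\pi_+$. Your added remark that the $-1$ correction must sit in the coordinate indexed by $\mathrm{st}(a)$ is the same bookkeeping the paper handles via its designated-zone convention, so nothing is missing.
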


Consider the $(n+1)$-dimensional linear space where the action vectors lie, which we call the \textbf{action space}. Index the coordinates starting from $0$, so that an action vector $a^+$ would have its zeroth coordinate equal to the reward $r^a$. The possible locations of action vectors in this space are limited. First, the sum of all vector entries apart from the zeroth one is equal to $\gamma - 1$, which means that all of them lie on a fixed $n$-dimensional affine hyperplane. Additionally, exactly one of the last $n$ entries of an action vector $a^+$ must be negative, specifically the one corresponding to the state $\mathrm{st}(a)$, and all other entries are non-negative. Therefore, the actions of an MDP lie in the designated pairwise non-intersecting polytope zones in the action space, with each state having its own zone. For a state $s$, any corresponding action vector $a^+=(c(0), c(1), \ldots, c(n))$ must obey the following inequalities:
\begin{equation}\label{equ:designated_zone}\begin{cases}
\begin{aligned}
&c(1) + \cdots + c(n) = \gamma - 1, \\
&c(1), \ldots, c(s-1), c(s+1),\ldots, c(n) \geq 0,\\
&c(s) < 0.
\end{aligned}
\end{cases}\end{equation}
Figure \ref{fig:action_space_with_policy} shows an example of an action space for a 2-state MDP (and a more clear figure with actions only is given in Appendix \ref{sec:equ_and_dynamics}, Figure \ref{fig:action_space_example}). We suggest to visualize the reward, the zeroth coordinate in the action space, as the height function, as presented in the figure. We will refer to this choice of a height function later when we talk about horizontal and vertical directions or the relation of being higher or lower.

From the point of view of the action space, the policy vector $V^\pi_+$ defines the hyperplane of vectors orthogonal to $V^\pi_+$. Denote this hyperplane by $\mathcal{H}^\pi$. From Proposition~\ref{pro:orthogonal_action_policy}, it follows that for any action $a\in \pi$ in the policy $\pi$, the corresponding action vector $a^+$ lies in the hyperplane $\mathcal{H}^\pi$. Therefore, an alternative definition of $\mathcal{H}^\pi$ for a policy $\pi = \{a_1,\ldots, a_n\}$ is the linear hull of the action vectors $a_1^+, \ldots, a_n^+$. This allows us to restate the Bellman equation in more geometric terms.
\begin{proposition} \label{pro:policy_eval}
The problem of evaluating the values of a policy $\pi = \{a_1, \dots, a_n\}$ (Policy Evaluation Problem) is equivalent to the problem of calculating the coordinates of a vector normal to the $n$-dimensional linear span $\mathcal{H}^\pi = \langle a_1^+, \ldots, a_n^+ \rangle$ of the participating action vectors $a_1^+, \ldots, a_n^+$.
\end{proposition}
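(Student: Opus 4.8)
The plan is to read off the equivalence directly from Proposition~\ref{pro:orthogonal_action_policy}, which already supplies half the statement: it tells us that $V^\pi_+$ is orthogonal to each $a_i^+$, hence $V^\pi_+$ is a normal vector to $\mathcal{H}^\pi=\langle a_1^+,\ldots,a_n^+\rangle$. To upgrade this to an \emph{equivalence} of problems I must establish two things. First, that the normal direction to $\mathcal{H}^\pi$ is unique up to scaling, so that ``computing a normal vector'' is a well-posed problem whose answer is essentially a single ray; this is exactly the assertion that $\mathcal{H}^\pi$ is genuinely $n$-dimensional, i.e. that the orthogonal complement $(\mathcal{H}^\pi)^\perp$ is one-dimensional. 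Second, that the policy values $V^\pi$ can be recovered from any such normal vector by a trivial rescaling, so that the two problems carry the same information.

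The crux is the dimension count, which amounts to showing that the $n$ action vectors $a_1^+,\ldots,a_n^+$ are linearly independent. I would prove this by projecting away the zeroth (reward) coordinate and looking only at the last $n$ entries. Since $a_i$ is the action taken at state $i$, its projected vector has entries $\gamma p^{a_i}_j-\delta_{ij}$, so stacking these projections as rows produces the matrix $\gamma P^\pi - I = -(I-\gamma P^\pi)$, where $P^\pi$ is the row-stochastic transition matrix of the policy. Because $\gamma\in(0,1)$ and every eigenvalue of $\gamma P^\pi$ has modulus at most $\gamma<1$, the value $1$ is not an eigenvalue of $\gamma P^\pi$, so $I-\gamma P^\pi$ is invertible and its rows are linearly independent. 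Independence of the projections forces independence of the full vectors $a_i^+$ (projecting cannot increase rank), so $\dim\mathcal{H}^\pi=n$ and therefore $\dim(\mathcal{H}^\pi)^\perp=1$.

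With uniqueness in hand the remainder is bookkeeping. By Proposition~\ref{pro:orthogonal_action_policy}, $V^\pi_+\in(\mathcal{H}^\pi)^\perp$, and since its zeroth coordinate equals $1$ it is nonzero, whence $(\mathcal{H}^\pi)^\perp=\mathrm{span}(V^\pi_+)$. Consequently any nonzero normal vector $w=(w_0,w_1,\ldots,w_n)$ satisfies $w=\lambda V^\pi_+$ for some $\lambda\neq 0$; in particular $w_0=\lambda\neq 0$, and dividing through gives $w/w_0=V^\pi_+=(1,V^\pi(1),\ldots,V^\pi(n))$, so $V^\pi(i)=w_i/w_0$. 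This closes the equivalence: computing $V^\pi$ is the same as computing the normal direction to $\mathcal{H}^\pi$ and fixing its scale through the zeroth coordinate. The only nontrivial step is the linear-independence argument of the second paragraph; everything else follows immediately from Proposition~\ref{pro:orthogonal_action_policy} and elementary linear algebra, and the invertibility of $I-\gamma P^\pi$ used there is precisely the fact that already guarantees $V^\pi$ is the unique solution of the Bellman equation.
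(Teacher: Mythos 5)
Your proof is correct and follows the same basic route as the paper's: the forward direction is immediate from Proposition~\ref{pro:orthogonal_action_policy}, and the backward direction recovers $V^\pi$ by dividing a normal vector by its zeroth coordinate. The substantive difference is that you prove the step the paper merely asserts. The paper's proof says that ``the coefficients of this hyperplane are defined uniquely up to a scaling'' with no justification, whereas you establish it by showing the $n$ action vectors $a_1^+,\ldots,a_n^+$ are linearly independent: projecting away the reward coordinate stacks the rows of $\gamma P^\pi - I = -(I-\gamma P^\pi)$, which is invertible since every eigenvalue of $\gamma P^\pi$ has modulus at most $\gamma<1$, and independence of the projections forces independence of the full vectors, so $\dim \mathcal{H}^\pi = n$ and $(\mathcal{H}^\pi)^\perp$ is a line. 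Your argument also closes a second implicit gap: the paper divides by $c(0)$ without remarking that $c(0)\neq 0$, while in your version this follows because any normal vector is a nonzero scalar multiple of $V^\pi_+$, whose zeroth coordinate is $1$. In short, same approach, but your writeup supplies the dimension count and the nonvanishing of the zeroth coordinate that the paper's two-line proof takes for granted; pleasingly, the key fact you invoke (invertibility of $I-\gamma P^\pi$) is, as you observe, the same one that makes $V^\pi$ the unique solution of the Bellman equation, so no genuinely new machinery is needed.
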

\begin{proof}
$\implies$: If the policy values are known, Proposition~\ref{pro:orthogonal_action_policy} implies that vector $(1, V^\pi(1), \dots, V^\pi(n) )$ is normal to the span $\mathcal{H}^\pi$.

$\impliedby$: Having the coefficients $(c(0),c(1),\ldots, c(n))$ of a vector normal to hyperplane $\mathcal{H}^\pi$, the values of the policy $\pi$ can be calculated as $(c(1), c(2), \ldots, c(n))/c(0)$. Indeed, the coefficients of this hyperplane are defined uniquely up to a scaling, and are equal to the values if the zeroth coordinate is equal to $1$.
\end{proof}
The important property of this interpretation is that the values of the policy $\pi$ defining the hyperplane $\mathcal{H}^\pi$ can be recovered in an intuitive geometric way, presented in Figure \ref{fig:action_space_with_policy}. Recall that for a state $s$ we have a designated zone for action vectors defined by ~\ref{equ:designated_zone}. Consider the following one-dimensional line in the boundary of the zone: $(r,0,\ldots,0,\gamma-1,0,\ldots,0)$, where the zeroth coordinate $r$ is equal to a free parameter, and the $s$-th coordinate is equal to $\gamma-1$. Denote this line by $L_s$. In fact, this line represents all vectors of the self-loop actions in state $s$.

\begin{theorem}\label{thm:selfloop_values}
The values of a policy $\pi$ can be recovered as the rewards of the intersection points between $\mathcal{H}^\pi$ and the self-loop lines $L_s$, $s\in\mathcal{S}$, divided by $1-\gamma$.
\end{theorem}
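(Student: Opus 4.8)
The plan is to compute the intersection point of each self-loop line $L_s$ with the policy hyperplane $\mathcal{H}^\pi$ explicitly, using the orthogonality description of $\mathcal{H}^\pi$ established above. Recall from Proposition~\ref{pro:orthogonal_action_policy} and the subsequent discussion that $\mathcal{H}^\pi$ is precisely the set of vectors orthogonal to the policy vector $V^\pi_+ = (1, V^\pi(1), \ldots, V^\pi(n))$. A generic point of $L_s$ has the form $x(r) = (r, 0, \ldots, 0, \gamma - 1, 0, \ldots, 0)$, where the free reward parameter $r$ occupies the zeroth coordinate and the entry $\gamma - 1$ occupies the $s$-th coordinate.

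First I would impose the defining condition $x(r) \in \mathcal{H}^\pi$, that is $\langle x(r), V^\pi_+ \rangle = 0$. Since every coordinate of $x(r)$ vanishes except the zeroth and the $s$-th, this inner product collapses to the two surviving terms, giving $r \cdot 1 + (\gamma - 1)\, V^\pi(s) = 0$. Solving for the reward coordinate yields $r = (1 - \gamma)\, V^\pi(s)$, and hence $V^\pi(s) = r / (1 - \gamma)$, which is exactly the claimed formula: the value at state $s$ equals the reward of the intersection point divided by $1 - \gamma$.

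It then remains to justify that this intersection point is well defined, i.e.\ that $L_s$ meets $\mathcal{H}^\pi$ in exactly one point. The line $L_s$ has direction vector $(1, 0, \ldots, 0)$, and its inner product with $V^\pi_+$ equals $1 \neq 0$, so $L_s$ is transverse to the hyperplane $\mathcal{H}^\pi$; a line not parallel to a hyperplane meets it in a single point, which the computation above locates at $r = (1 - \gamma)\, V^\pi(s)$. Repeating this for every $s \in \mathcal{S}$ recovers the full value vector $V^\pi$.

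Since the whole argument reduces to a one-line linear computation once the orthogonality characterization of $\mathcal{H}^\pi$ is in hand, I do not anticipate a genuine obstacle. The only points requiring minor care are the transversality remark guaranteeing existence and uniqueness of the intersection, and keeping the zero-indexed coordinate convention straight so that the reward coordinate is indeed the one paired with the leading $1$ of $V^\pi_+$ in the inner product.
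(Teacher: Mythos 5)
Your proposal is correct and follows essentially the same route as the paper's proof: both impose the orthogonality condition $\langle l_s, V^\pi_+\rangle = 0$ on the intersection point of $L_s$ with $\mathcal{H}^\pi$, reduce it to $l_s(0) + (\gamma-1)V^\pi(s) = 0$, and solve to get $V^\pi(s) = l_s(0)/(1-\gamma)$. Your additional transversality check (the direction vector $(1,0,\ldots,0)$ pairs to $1 \neq 0$ with $V^\pi_+$, so the intersection exists and is unique) is a small but welcome refinement that the paper leaves implicit.
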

\begin{proof}
For each state $s$, take the point $l_s$ of the intersection between $L_s$ and $\mathcal{H}^\pi$. We know that the vector $l_s$ is orthogonal to the vector $V^\pi_+$:
\begin{gather*}
0 = l_s V^\pi_+ = l_s(0) + \sum_{s'=1}^n l_s(s')V^\pi(s') = \\
= l_s(0) + (\gamma-1) V^\pi(s) \implies
 V^\pi(s) = l_s(0)/(1-\gamma).
\end{gather*}
Thus, the zeroth coordinate of $l_s$ corresponds to the policy value in state $s$.\end{proof}

Theorem \ref{thm:selfloop_values} simplifies the analysis by allowing us to focus solely on the constrained space $\sum_{i=1}^n c(i) = \gamma - 1$  as all meaningful information, including both actions and policy values, is contained within it. Consequently, Figure \ref{fig:action_space_with_policy} can be reduced to Figure \ref{fig:policy_2d_projection}. The theorem helps to view policies in geometric terms in the action space: every policy can be thought of as a hyperplane in this space, and any non-vertical hyperplane (\textit{i.e.}, non-parallel to the reward axis) can be thought of as a policy substitute, but not necessarily produced by a set of available actions. Such hyperplanes, or sets of values which are not produced by the actions in the MDP, often appear in practice, for example, during the run of the Value Iteration algorithm. 
To underline their similarity to regular policies, we call them pseudo-policies.

\begin{figure*}[t!] 
\begin{center}
\includegraphics[width=\textwidth]{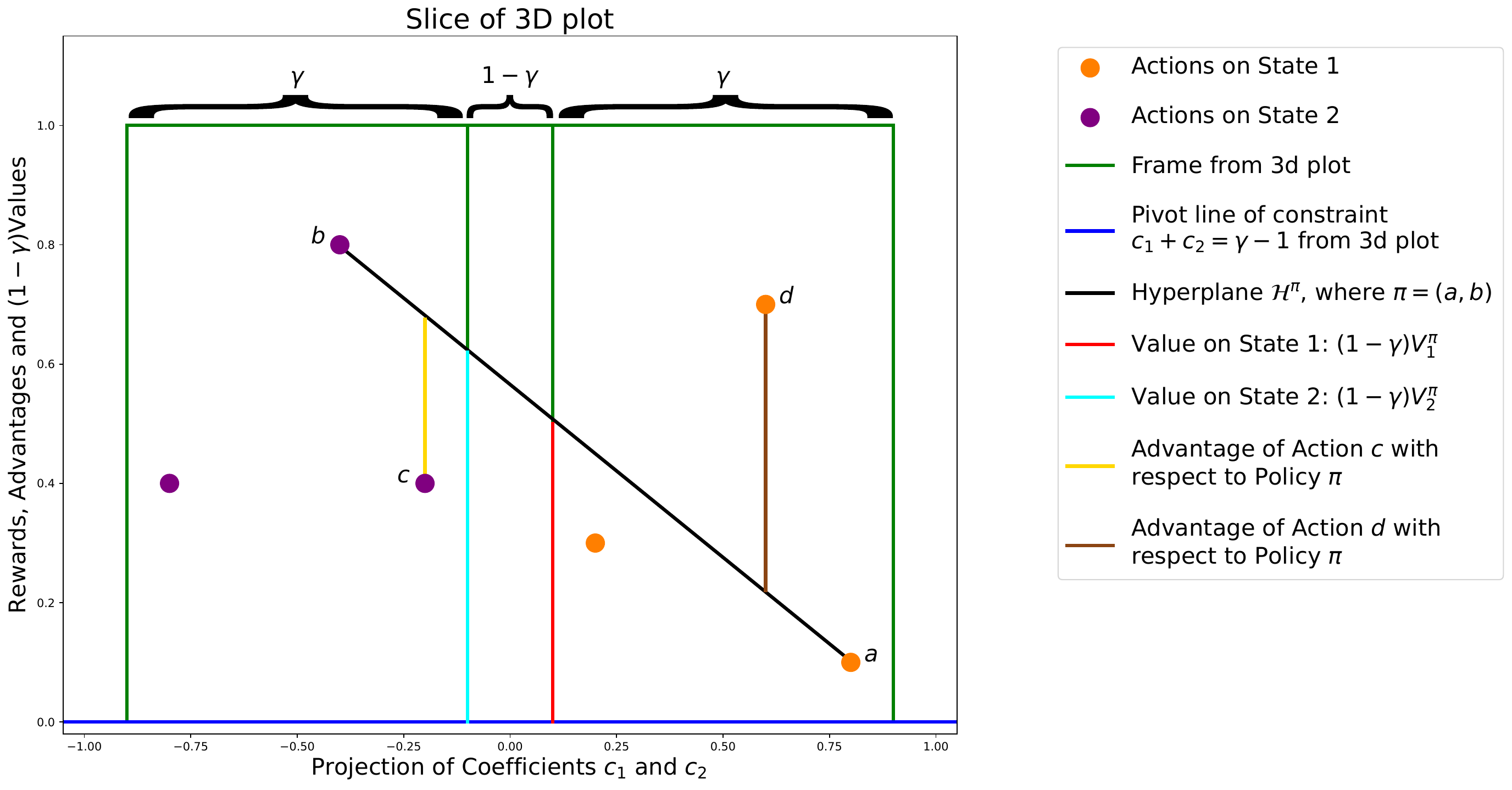}%
\end{center}
\caption{Two-dimensional plot corresponding to the constrained action space from Figure \ref{fig:action_space_with_policy}.  All the information required to analyze the MDP~--- action coefficients and policy values~--- is presented in the plot. Additionally, the plot includes the advantages of two actions, $c$ and $d$, that do not participate in the policy $\pi$.}
\label{fig:policy_2d_projection}
\vspace{-10pt}
\end{figure*}

\subsection{Optimal policy in action space} \label{sec:opt_policy}
Consider an action $b$ and a policy $\pi$ such that $b \notin \pi$, assuming without loss of generality that $b$ is an action on the state 1. What is the dot product of the action vector $b^+$ and the policy vector $V^\pi_+$? Mathematically, it is:
\begin{equation*}
b^+V^\pi_+ = r^b + \gamma \sum_{i=1}^{n} p^b_i V^\pi(i) - V^\pi(1).
\end{equation*}
This quantity is often used in the reinforcement learning literature and is known as an \textbf{advantage} or an \textbf{appeal} of action $b$ with respect to policy $\pi$, denoted as $\adv{b}{\pi}$ (in this paper we stick to the term advantage). It has an intuitive meaning: the quantity shows the potential gain (or loss) in the infinite sum of discounted rewards if, instead of following the policy $\pi$ from State 1, the agent takes action $b$ and then follows $\pi$.

Advantages are key quantities in MDPs since they define the dynamics of the main MDP algorithms:
\vspace{-4pt}
\begin{itemize}
\item Given a policy $\pi_t$, the \textbf{Policy Iteration} algorithm chooses the action $a_s$ in a state $s$ with the highest advantage among other actions in $s$ with respect to $\pi_t$.
\vspace{-3pt}\item Similarly, while choosing actions to update values, the \textbf{Value Iteration} algorithm chooses actions with the highest advantage with respect to the pseudo-policy implied by current values.
\end{itemize}
\vspace{-4pt}
Now let's establish the relation between an action advantage and the policy hyperplane in the action space. Since the advantage of an action $b$ with respect to a policy $\pi$ is the inner product of the vector $b^+$ with the vector $V^\pi_+$, which is orthogonal to the hyperplane $\mathcal{H}^\pi$, a positive advantage means that the vector $b^+$ lies above $\mathcal{H}^\pi$, and a negative advantage means that $b^+$ lies below $\mathcal{H}^\pi$. With this observation, we are ready to present the second important result of this paper.

\begin{theorem} \label{thm:opt_policy_hplane}
The problem of \textbf{solving an MDP}, or finding a stationary optimal policy $\pi^*$ for an MDP, is equivalent to the problem of constructing a hyperplane $\mathcal{H}$ such that at least one action from every state is incidental to it and no action lies above it.
\end{theorem}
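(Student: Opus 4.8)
The plan is to reduce this geometric statement to the classical Bellman optimality criterion: a policy $\pi$ is optimal if and only if no action has positive advantage with respect to it, i.e. $\adv{a}{\pi}\le 0$ for every action $a$. I would first record why this holds. Using the advantage formula $\adv{a}{\pi}=R(a)+\gamma\sum_{s'}P(s'\mid a)V^\pi(s')-V^\pi(\mathrm{st}(a))$, the condition $\adv{a}{\pi}\le 0$ for all $a$ is equivalent to $V^\pi(s)=\max_{a:\mathrm{st}(a)=s}[R(a)+\gamma\sum_{s'}P(s'\mid a)V^\pi(s')]$ for all $s$ (equality is attained by $a=\pi(s)$, which has advantage $0$). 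This is the Bellman optimality equation, whose unique solution is $V^{\pi^*}$; hence $\adv{a}{\pi}\le 0$ for all $a$ holds exactly when $V^\pi=V^{\pi^*}$, that is, when $\pi$ is optimal. I would then translate both halves through the dictionary of the excerpt: by Proposition~\ref{pro:orthogonal_action_policy} an action in $\pi$ lies on $\mathcal{H}^\pi$, while the sign of $\adv{a}{\pi}=a^+V^\pi_+$ records whether $a^+$ sits above or below $\mathcal{H}^\pi$, as observed just before the theorem.

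For the forward implication I would take $\mathcal{H}=\mathcal{H}^{\pi^*}$. Each state $s$ contributes the action $\pi^*(s)$, whose vector lies on $\mathcal{H}^{\pi^*}$ by Proposition~\ref{pro:orthogonal_action_policy}, so at least one action per state is incident to $\mathcal{H}$; and optimality gives $\adv{a}{\pi^*}\le 0$ for every $a$, so no action vector lies above $\mathcal{H}$.

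The substance is the converse. Given a hyperplane $\mathcal{H}$ with the two stated properties (non-vertical, so that ``above'' is meaningful), I would scale its normal so that the zeroth coordinate equals $1$, writing it as $W_+=(1,W(1),\dots,W(n))$; since raising the reward coordinate raises the inner product with $W_+$, ``$a^+$ above $\mathcal{H}$'' is exactly $a^+W_+>0$ and ``$a^+$ on $\mathcal{H}$'' is $a^+W_+=0$. I would then pick, in each state $s$, one action incident to $\mathcal{H}$, obtaining a bona fide policy $\pi$. For every chosen action the relation $a^+W_+=0$ is precisely the Bellman equation $W(\mathrm{st}(a))=R(a)+\gamma\sum_{s'}P(s'\mid a)W(s')$, so $T^\pi W=W$, and by uniqueness of the Bellman fixed point $W=V^\pi$. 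Hence $\mathcal{H}=\mathcal{H}^\pi$ and $W_+=V^\pi_+$, so the hypothesis that no action lies above $\mathcal{H}$ becomes $\adv{a}{\pi}=a^+V^\pi_+\le 0$ for every action $a$; the Bellman criterion then yields that $\pi$ is optimal, matching the recovery of its values from $\mathcal{H}$ via Proposition~\ref{pro:policy_eval}.

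I expect the converse to be the main obstacle, concentrated in the step that promotes the abstract hyperplane to a genuine policy hyperplane: fixing the orientation of the normal so that ``above'' coincides with positive advantage, and arguing that the recovered labels $W$ are the true values $V^\pi$ of the extracted policy rather than merely a consistent assignment. I would settle the latter through the fixed-point uniqueness above, or equivalently by checking that the $n$ selected incident action vectors are linearly independent: their last $n$ coordinates form the matrix $\gamma P^\pi-I=-(I-\gamma P^\pi)$, which is invertible because $\gamma<1$ and $P^\pi$ is row-stochastic, so these vectors span the $n$-dimensional $\mathcal{H}$ and therefore $\mathcal{H}=\mathcal{H}^\pi$.
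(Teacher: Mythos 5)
Your proof is correct and takes essentially the same route as the paper's: for the forward direction take $\mathcal{H}=\mathcal{H}^{\pi^*}$, and for the converse extract a policy $\pi$ from incident actions, identify $\mathcal{H}$ with $\mathcal{H}^\pi$, and conclude optimality from the nonpositivity of all advantages $a^+V^\pi_+\le 0$. The only difference is that you carefully justify the two steps the paper treats as immediate~--- that $\mathcal{H}=\mathcal{H}^\pi$ (via uniqueness of the Bellman fixed point, or equivalently invertibility of $I-\gamma P^\pi$) and that a policy stable under greedy improvement is optimal, plus the non-verticality/orientation caveat~--- which is a tightening of the same argument rather than a different one.
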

\begin{proof}
$\implies$: Suppose we know the optimal policy $\pi^*$. Every action has $0$ or negative advantage with respect to it; therefore, none of the action vectors lie above the hyperplane $\mathcal{H}^{\pi^*}$. On the other hand, each state has the optimal action, and the corresponding action vector is incidental to $\mathcal{H}^{\pi^*}$.

$\impliedby$: Let's choose a set of actions $(a_1, \ldots, a_n)$ such that their action vectors are incidental with $\mathcal{H}$ and denote the policy they form as $\pi$. Clearly $\mathcal{H} = \mathcal{H}^\pi$. We know that $a_i^+ V^\pi_+ = 0$ and $b^+ V^\pi_+ \le 0, \forall \mathrm{st}(b) = s $. Then $a_i = \argmax_{b} \adv{b}{\pi}$, which implies that policy $\pi$ is stationary with respect to a greedy upgrade and, consequently, optimal. \end{proof}

We can view Theorem \ref{thm:opt_policy_hplane}
as a rephrasing of the linear programming formulation of finding an optimal policy in an MDP. However, this more geometric rephrasing,  together with Proposition \ref{pro:policy_eval} and Theorem \ref{thm:selfloop_values}, provides a foundation for the approach to MDP solving that we develop in this paper.

\subsection{MDP transformation} \label{sec:mdp_transform}

If we try to analyze the dynamics of the policy iteration algorithm in a 2-state MDP (which we do in Appendix \ref{ssec:policy_iteration} and Figure \ref{fig:policy_iteration_dynamics}), we see that it is not very convenient to do the analysis when the rewards for two states are not comparable (e.g., large rewards in one state and small rewards in another state), making all policy hyperplanes steep. Can we modify the MDP to make the corresponding hyperplanes flatter? It turns out we can! In this section, we describe the transformation $\mathcal{L}$ of an MDP which changes the values of every policy in one particular state, while keeping all advantages unchanged.

Choose a state $s$ and a real number $\delta \in \mathbb{R}$~--- the desired increment of the state value $V^\pi (s)$ of every policy $\pi$. We denote this transformation as $\mathcal{L}_s^\delta$. To perform it, we need to modify \textbf{every reward in the MDP} using the following rules:
\vspace{-6pt}
\begin{itemize}
\item For every action $a, \mathrm{st}(a) = s$ the reward $r^a$ should be modified as:
\begin{equation*}
   r^a := r^a - \delta(\gamma p^a_s - 1). 
\end{equation*} 
\vspace{-3pt}\item For every action $b, \mathrm{st}(b) \ne s$ the reward $r^b$ should be modified as:
\begin{equation*} r^b := r^b - \delta \gamma p^b_s, \end{equation*} 
\end{itemize}
\vspace{-4pt}
where $p^a_s$ and $p^b_s$ are the probabilities of getting to state $s$ from the state of the action.

The geometric interpretation of $\mathcal{L}_s^\delta$ is as follows. We transform the whole action space with the linear transformation
$$
(x(0), x(1), \ldots, x(n)) \mapsto (x(0) - \delta x(s), x(1), \ldots, x(n)).
$$
This transformation maps the designated action zones for each state to itself. Any point on a line $L_{s'}$, $s'\neq s$, is fixed by this transformation, and any point on the line $L_{s}$ is lifted by $\delta(1-\gamma)$. Coupled with Theorem~\ref{thm:selfloop_values}, we see that the values of any hyperplane $\mathcal{H}$ stay the same for $s'\neq s$, and increases by $\delta$ on the state $s$. The following theorem shows the most important property of $\mathcal{L}$.

\begin{theorem} \label{thm:adv_preservation}
Transformation $\mathcal{L}^{\delta}_s$ preserve the advantage of any action with respect to any policy.    
\end{theorem}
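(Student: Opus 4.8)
The plan is to exploit the fact that both the action rewards and the policy values are acted on by a single pair of mutually adjoint linear maps, so that the bilinear advantage pairing $\adv{a}{\pi} = a^+ V^\pi_+$ is left invariant. First I would identify the reward update with the linear map on the action space already introduced just before the theorem, namely $\Phi(x) = (x(0) - \delta x(s), x(1), \ldots, x(n))$. A one-line check against the two bullet cases confirms this: for an action $a$ at state $s$ the $s$-th coordinate of $a^+$ is $\gamma p^a_s - 1$, while for an action $b$ elsewhere it is $\gamma p^b_s$, and in both cases the new reward is exactly $x(0) - \delta x(s)$. Writing $\Phi = I - \delta E_{0s}$, where $E_{0s}$ has a single nonzero entry in row $0$, column $s$, I note that $E_{0s}^2 = 0$ since $s \neq 0$, so $\Phi$ is invertible with $\Phi^{-1} = I + \delta E_{0s}$.

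Next I would pin down how the policy vector changes. By the discussion preceding the theorem, i.e.\ Theorem~\ref{thm:selfloop_values} applied to the line $L_s$ lifted by $\delta(1-\gamma)$, the value of every policy $\pi$ is unchanged in states $s' \neq s$ and increased by exactly $\delta$ in state $s$; since the zeroth coordinate of a policy vector is always $1$, the transformed policy vector is $\tilde V^\pi_+ = (1, V^\pi(1), \ldots, V^\pi(s) + \delta, \ldots, V^\pi(n))^\top$. The key observation is that this equals $(\Phi^\top)^{-1} V^\pi_+$: because $(\Phi^\top)^{-1} = I + \delta E_{s0}$ adds $\delta$ times the zeroth coordinate into the $s$-th coordinate and fixes everything else, and the zeroth coordinate is $1$, it reproduces precisely the claimed value shift while preserving the normalization.

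With the action vector transforming (in row form) as $a^+ \mapsto a^+ \Phi^\top$ and the policy vector as $V^\pi_+ \mapsto (\Phi^\top)^{-1} V^\pi_+$, the new advantage telescopes:
\[
\bigl(a^+ \Phi^\top\bigr)\bigl((\Phi^\top)^{-1} V^\pi_+\bigr) = a^+ V^\pi_+ = \adv{a}{\pi},
\]
and since this holds for every action $a$ and every policy $\pi$, the advantage is preserved. Equivalently, one can avoid matrices entirely and expand the transformed pairing $\sum_{i=0}^n \tilde a(i)\,\tilde V(i)$ directly: the only two terms carrying $\delta$ are $-\delta\,a^+(s)$ coming from the reward update in the zeroth coordinate and $+\delta\,a^+(s)$ coming from the value shift in the $s$-th coordinate, and these cancel, leaving exactly $a^+ V^\pi_+$.

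I expect the main obstacle to be not the final cancellation, which is essentially immediate, but correctly justifying the policy-vector transformation together with its normalization: one must verify that applying $(\Phi^\top)^{-1}$ keeps the zeroth coordinate equal to $1$ (so the resulting vector genuinely encodes policy values rather than an unnormalized normal), and that the geometric value shift read off from $L_s$ matches the adjoint map on the nose. Once that identification is secured, the adjointness relation $\Phi^\top (\Phi^\top)^{-1} = I$ does all the work, and the theorem follows for every action and every policy simultaneously.
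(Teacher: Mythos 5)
Your proof is correct and is, at its core, the same argument as the paper's: the proof in Appendix~\ref{ssec:adv_preservation_proof} carries out exactly the two-case expansion you sketch at the end, in which the $-\delta\,a^+(s)$ contribution from the reward update (with $a^+(s)$ equal to $\gamma p^a_s-1$ or $\gamma p^b_s$ depending on whether $\mathrm{st}(a)=s$) cancels against the $+\delta\,a^+(s)$ contribution from shifting $V^\pi(s)$ by $\delta$. Your packaging of this cancellation as adjointness of $\Phi = I-\delta E_{0s}$ and $(\Phi^\top)^{-1} = I+\delta E_{s0}$, with the value shift justified from the lifted line $L_s$ and Theorem~\ref{thm:selfloop_values} exactly as the paper does in the discussion preceding the theorem, is a notational repackaging of that same computation rather than a different route.
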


\begin{proof}
A simple proof is given in Appendix \ref{ssec:adv_preservation_proof}.    
\end{proof}

For MDPs $\mathcal{M}$ and $\mathcal{M}'$, we write $\mathcal{M}' = \mathcal{L}_s^\delta \mathcal{M}$ if $\mathcal{L}_s^\delta$ transforms $\mathcal{M}$ to $\mathcal{M}'$. Since these transformations are mutually commutative, it is useful to introduce the following notation:
\begin{equation*} \mathcal{M}' = \mathcal{L}^\Delta \mathcal{M} = \mathcal{L}_{s_1}^{\delta_1}(\mathcal{L}_{s_2}^{\delta_2}( \dots( \mathcal{L}_{s_k}^{\delta_k}\mathcal{M})\dots )), \end{equation*}
where $\Delta = (\delta_1, \delta_2, \dots, \delta_n)^T$ is a column vector, and the $i$th entry of it, $\delta_i$, is the quantity added to every value in state $s_i$.

Then, choose some MDP $\mathcal{M}$ and compute its optimal values $(V^*(1), \dots, V^*(n))$. Apply the following sequence of transformations to $\mathcal{M}$:
\begin{equation} \label{eq:mdp_normalization}
\mathcal{M}^* = \mathcal{L}^{\Delta^*} \mathcal{M},
\end{equation}
where $\Delta^* = (-V^*(1), \dots, -V^*(n))^T$. The values of the optimal policy $\pi^*$ in $\mathcal{M}^*$ are 0 in all states. At the same time, since the transformations $\mathcal{L}$ preserve advantages, the advantages of the modified actions $a' \in \pi^*$ remain $0$, which implies that
\begin{equation*}
\adv{a'}{\pi^*}= 
 r^{a'} + \sum_{i=1}^{n} c^a_i V^{\pi'} (i) = r^{a ' }= 0,
\end{equation*}
meaning that every action participating in the optimal policy in $\mathcal{M}^*$ has reward $0$, while every action not participating in the optimal policy has a reward equal to its advantage with respect to the optimal policy, which is always negative. Geometrically, the optimal policy hyperplane $\mathcal{H}^{\pi^*}$ is given by the equation $c_0 = 0$, all the action vectors from the optimal policy lie on the hyperplane, while every action not participating in the optimal policy lie below it.

This picture is convenient to analyze, so we give this special kind of MDPs a definition:
\begin{definition}
We call an MDP $\mathcal{M}^*$ \textbf{normal} if all values of the optimal policy of $\mathcal{M}^*$ are equal to zero.

The \textbf{normalization} $\mathcal{M}^*$ of an MDP $\mathcal{M}$ is the result of applying the transformation $L^{\Delta^*}$ to the MDP $\mathcal{M}$, where $\Delta^*=-V^*$, negation to optimal policy values of MDP $\mathcal{M}$. The normalization of an MDP is always a normal MDP.
\end{definition}
 
In the next section, we use normal MDPs to analyze the dynamics of the main RL algorithms.

\section{REWARD BALANCING ALGORITHMS} \label{sec:rb_solvers}

There is one more nice property of a normalization of an MDP that we did not mention in Section \ref{sec:mdp_transform}: it is trivially solvable! To solve it, one just needs to pick the action with the reward of 0 at every state, and the resulting policy is optimal. This policy has a value of 0 at every state, while any other action has a negative reward, thus their inclusion to a policy will result in negative values.

This is an intriguing property of an MDP. Imagine one needs to solve an MDP $\mathcal{M}$. Suppose that $\mathcal{M}$ can be solved by Policy Iteration starting from a policy $\pi_0$ in $T$ steps. If the same algorithm is run on the normalization $\mathcal{M}^*$, it will take exactly the same number of iterations $T$ to solve it as it is shown in Section \ref{ssec:policy_iteration}. However, there is no need to run the algorithm on the normalization: we can just pick actions with rewards $0$ to form the optimal policy. This property of the normalization of an MDP inspires a new class of solution algorithms that do not compute state values but instead solve the MDP by manipulating its rewards. Due to this characteristic, we refer to these value-free algorithms as \textbf{reward balancing} solvers.

Of course, given a random MDP $\mathcal{M}$, it is not possible to normalize it without solving it, because the optimal values are not known. However, we still have the transformations $\mathcal{L}_s^\delta$ available to us, and we have a goal: if at some point we achieve an MDP in which the maximum action reward in every state is 0, we know that the corresponding actions form the optimal policy. Furthermore, the following lemma shows that if the MDP is ``close'' to normal, a policy consisting of actions with the maximum reward at each state produces a good approximate solution.

\begin{lemma} \label{lem:approxite_solution}
Suppose that all rewards of an MDP $\mathcal{M}$ are non-positive and larger than $r_{min}$.
If a policy $\pi$ chooses the actions with the maximum reward in each state, then $\pi$ is $\epsilon$-optimal with $\epsilon = -r_{min}/(1-\gamma)$.
\end{lemma}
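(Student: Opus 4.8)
The plan is to avoid any fixed-point or contraction argument and instead sandwich both value functions between explicit constants determined solely by the reward range, and then subtract. The mechanism is to unroll the Bellman equation into the discounted return $V^{\tilde\pi}(s) = \sum_{t=0}^\infty \gamma^t \E[r_t]$ (with the immediate reward $r_0 = R(\tilde\pi(s))$ undiscounted, as dictated by the operator $T^{\tilde\pi}$ defined above), which reduces everything to a termwise comparison against a geometric series.

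For the upper bound, since every reward of $\mathcal{M}$ is non-positive, each term $\gamma^t \E[r_t]$ is $\le 0$, so $V^{\tilde\pi}(s) \le 0$ for every policy $\tilde\pi$ and every state $s$; maximizing over policies gives $V^{\pi^*}(s) \le 0$. For the lower bound, I would unroll the return of $\pi$ and bound each collected reward from below using the hypothesis $r_t > r_{min}$, then sum $\sum_{t=0}^\infty \gamma^t = 1/(1-\gamma)$ to obtain $V^\pi(s) > r_{min}/(1-\gamma)$ for every $s$. Subtracting the two bounds yields $V^{\pi^*}(s) - V^\pi(s) < 0 - r_{min}/(1-\gamma) = -r_{min}/(1-\gamma) = \epsilon$ for all $s$, which is exactly the claim (and $r_{min} < 0$, so $\epsilon > 0$).

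I do not expect a genuinely hard step here; the argument is essentially bookkeeping. The two points that require care are: keeping the discount convention consistent so that the geometric series contributes the factor $1/(1-\gamma)$ matching the stated $\epsilon$ (rather than $\gamma/(1-\gamma)$, which the literal sum-from-$t=1$ definition would suggest), and noticing that the lower bound on $V^\pi$ actually holds for \emph{any} policy. The maximum-reward hypothesis is therefore not what drives the $\epsilon$ estimate; it is simply the natural policy to read off once an MDP has been pushed toward its normal form, and the lemma merely certifies that this greedy-in-reward choice is $\epsilon$-optimal.
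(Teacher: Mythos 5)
Your proof is correct and takes essentially the same route as the paper's own (very terse) argument: every policy's value is bounded above by $0$ because all rewards are non-positive, the value of the chosen policy is bounded below by $r_{\min}/(1-\gamma)$ via the geometric series, and subtracting the two bounds gives $\epsilon = -r_{\min}/(1-\gamma)$. Your side remarks are also accurate --- the lower bound indeed holds for \emph{any} policy, and the discount convention dictated by the Bellman operator $T^\pi$ (undiscounted immediate reward) is the one consistent with the stated $1/(1-\gamma)$ factor.
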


\begin{proof}
    We know that the values of any policy in this MDP are upper bounded by 0. At the same time, minimum possible values of the policy satisfy $||V^* - V^\pi||_\infty < (0-r_{\min})/(1-\gamma)$.
\end{proof}

Therefore, the quantity $r_{min}$ can be used to establish a stopping criterion. Now we are ready to provide an intuition for a solution algorithm.

\begin{remark}
   We start every algorithm by decreasing the reward of every action in the MDP by the overall maximum reward so that the maximum action reward becomes $0$ and every action has a non-positive reward. 
\end{remark}
Our goal is then to carefully "flatten" action rewards by applying transformations $\mathcal{L}_{s}^{\delta}$ to the states where the highest rewards are below 0, so that the maximum reward in every state becomes closer to $0$. The issue is that while increasing the reward of an action $a$ in a state $s$, the transformation $\mathcal{L}_{s}^{\delta}$ also has a negative side effect: it decreases the rewards of all actions in every other state that have a non-zero transition probability to $s$. A straightforward approach would be to account for these side effects, but this algorithm is equivalent to the Policy Iteration algorithm, as it is discussed in Appendix \ref{ssec:exact_rb_solver}.

However, if we do not aim to produce an exact solution, we can develop a novel algorithm, which is not similar to any Value-based approximate algorithms. The easiest way to simplify the algorithm is to disregard the aforementioned side effects and make ``safe" updates, which leads to Algorithm \ref{alg:simple_vf_alg} --- Safe Reward Balancing or RB-S for short \footnote{The authors thank Julien Hendricks for suggesting this version of the algorithm.}. More information about the intuition behind the design of the algorithm is given in Appendix \ref{ssec:rb_intuition}.

\setlength{\textfloatsep}{10pt}
\begin{algorithm}[t]
   \caption{Safe Reward Balancing (RB-S)}
   \label{alg:simple_vf_alg}
\begin{algorithmic}
   \STATE {\bfseries Initialize} $\mathcal{M}_0 := \mathcal{M}$ and $t=1$.
   \STATE \textbf{Update:} \begin{itemize}
       \item $\forall s$ compute $\delta_s = - \max_a r^a / (1 - \gamma p^a_s)$.
       \item Update the MDP $\mathcal{M}_t = \mathcal{L}^{\Delta_t} \mathcal{M}_{t-1}$, where $\Delta_t = (\delta_1, \dots, \delta_n)^T$.
       \item Store the maximum action reward in each state while performing the update. Set $R^m_t$ to the minimum of these maxima. 
   \end{itemize} 
      
   \IF{ $|R^m_t| /(1-\gamma) \ge \epsilon$}
    \STATE Increment $t$ by 1 and return to the update step.
   \ELSE
    \STATE Output policy $\pi$, such that $\pi(s) = \argmax_{a}  r^a$.
    \ENDIF
\end{algorithmic}
\end{algorithm}

Algorithm \ref{alg:simple_vf_alg}, which in the known MDP setting might be seen as the Value Iteration on a modified MDP, has the same iteration complexity as VI, but has different convergence properties. In particular, VI does not perform well on tree-structured MDPs, potentially achieving the worst possible convergence rate of $\gamma$. In contrast, tree-structured MDPs are the best case for the RB-S algorithm. We can show this for a broader case of hierarchical MDPs.

\begin{definition}
MDP $\mathcal{M}$ is \textbf{hierarchical} if every state can be attributed to a class $c \in \mathbb{N}$ such that every action leads either back to the state $s$ or to states that belong to lower classes.
\end{definition}

\begin{theorem} \label{thm:hier_conv}
Applied to a hierarchical MDP $\mathcal{M}$, the RB-S algorithm is guaranteed to converge to the exact solution in at most $C$ iterations where $C$ is the number of hierarchy classes.
\end{theorem}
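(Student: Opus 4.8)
The plan is to recast the goal in terms of the reward balancing picture: RB-S has found the exact solution precisely when $\mathcal{M}_t$ is in normal form, i.e.\ when the maximum action reward equals $0$ at \emph{every} state, since then the greedy policy $\pi(s)=\argmax_a r^a$ has reward $0$ in all states and is optimal (this is the trivial-solvability property discussed at the start of Section~\ref{sec:rb_solvers}), and the stopping quantity satisfies $R^m_t=0$. So it suffices to show that after $C$ iterations the maximum reward is $0$ at every state. I would first relabel the hierarchy classes as $1,\dots,C$ without gaps (an order-preserving relabeling preserves the condition that an action at a class-$j$ state is supported on its own state $s$ and on states of class strictly less than $j$), and then prove by induction on $t$ the statement: after iteration $t$, every state of class $\le t$ has maximum action reward $0$. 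The computational engine is the effect of one update on a single action $a$ with $s=\mathrm{st}(a)$, which the rules for $\mathcal{L}_s^\delta$ give as $r^a \mapsto r^a + \delta_s(1-\gamma p^a_s) - \gamma\sum_{s'\neq s}\delta_{s'}p^a_{s'}$. The key fact I extract is: if every $s'\neq s$ in the support of $a$ carries $\delta_{s'}=0$, then the new reward is exactly $r^a+\delta_s(1-\gamma p^a_s)$, and writing $M_s=\max_{a'}\!\big[r^{a'}/(1-\gamma p^{a'}_s)\big]$ so that $\delta_s=-M_s$, the maximizing action attains reward $0$ while every other action gets $r^a - M_s(1-\gamma p^a_s)\le 0$ (using $M_s\ge r^a/(1-\gamma p^a_s)$ and $1-\gamma p^a_s>0$ since $\gamma<1$); hence the maximum reward at $s$ becomes exactly $0$.

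For the base case $t=1$, a class-$1$ state has only self-loop actions ($p^a_s=1$), so no other state is in the support of its actions and the key fact applies directly, setting its maximum reward to $0$ at the first iteration. For the inductive step, assume the claim after iteration $t$. Then every class-$\le t$ state already has maximum reward $0$, so the update produces $\delta_{s'}=0$ for all such $s'$, and the only nonzero increments at iteration $t+1$ occur at states of class $\ge t+1$. Two consequences follow from the hierarchical structure. First (persistence), an action at a class-$\le t$ state is supported only on class-$\le t$ states, none of which receive a nonzero increment, so these rewards are unchanged and remain maximized at $0$. Second (progress), for a class-$(t+1)$ state $s$, the actions are supported on $s$ together with class-$\le t$ states; the latter carry $\delta_{s'}=0$, and no \emph{other} class-$(t+1)$ state can lie in the support because actions go only to $s$ or to strictly lower classes. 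Thus only $\mathcal{L}_s^{\delta_s}$ effectively acts on $s$'s rewards, and the key fact makes the maximum reward at $s$ equal to $0$. This proves the claim after iteration $t+1$; taking $t=C$ shows $\mathcal{M}_C$ is normal, so RB-S returns the exact optimal policy in at most $C$ iterations.

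The main obstacle is disentangling the simultaneously applied single-state transformations inside $\mathcal{L}^{\Delta_t}$: a priori each increment $\delta_{s'}$ perturbs the reward of every action leading into $s'$, so the "safe" per-state computation cannot be applied in isolation without justification. The crux is therefore the layered bookkeeping in the persistence/progress steps, which uses the hierarchy in two distinct ways --- that a normalized low-class state contributes $\delta_{s'}=0$ and hence no perturbation, and that no higher, not-yet-normalized state can feed back into an already-normalized layer. Once this isolation is established, the remaining work is the routine sign check $r^a-M_s(1-\gamma p^a_s)\le 0$ and the positivity of $1-\gamma p^a_s$, which I would dispatch as above.
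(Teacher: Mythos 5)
Your proof is correct and takes essentially the same route as the paper's: induction on the iteration count showing that after $t$ iterations every state of class $\le t$ has maximum reward $0$, with the base case of self-loop-only class-$1$ states and the inductive step exploiting that already-normalized states contribute $\delta_{s'}=0$ while the choice $\delta_s=-\max_a r^a/(1-\gamma p^a_s)$ drives the maximum reward at each class-$(t+1)$ state exactly to $0$. Your explicit persistence/progress bookkeeping merely spells out what the paper's proof states more tersely (and your sign on the cross term $-\gamma\sum_{s'\neq s}\delta_{s'}p^a_{s'}$ is the correct one, fixing a harmless sign typo in the paper's displayed update).
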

\begin{proof}
An induction-based proof of the theorem is given in Appendix \ref{ssec:hier_conv_proof}.
\end{proof}
The RB-S algorithm has interesting convergence properties not only in hierarchical MDPs, but on general MDPs too, which is shown in the following theorem.
\begin{theorem} \label{thm:rb_conv}
The policy $\pi_t$ is $\epsilon_t$-optimal where $$\epsilon_t = \alpha^t \frac{l}{(1-\beta)(1-\gamma)} r_{\max}.$$ Here
\begin{gather*} \alpha = \max_{s, \pi} \frac{\gamma - \gamma [P_{\pi}]_{ss}}{1 - \gamma [P_{\pi}]_{ss}} \leq \gamma, \quad \beta = \max_{s,\pi} \gamma [P_{\pi}]_{ss} \leq \gamma, \\
 l = 1 - \gamma \min_{s,\pi} [P_{\pi}]_{ss} \leq 1,
\end{gather*}
\end{theorem}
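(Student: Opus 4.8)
The plan is to reduce $\epsilon$-optimality to a single scalar quantity and then show that quantity contracts geometrically. First I would record two structural facts about the iterates $\mathcal{M}_t$. Since we start with all rewards non-positive and the update lift is $\delta_s=\min_{a:\mathrm{st}(a)=s}\frac{-r^a}{1-\gamma p^a_s}\ge 0$, a short check shows every reward of $\mathcal{M}_t$ stays $\le 0$, and at the determining action $a_s^\dagger$ (the argmin above) the self-lift $\mathcal{L}_s^{\delta_s}$ brings the reward exactly to $0$ before the cross-terms from the other states push it back down. Crucially, the transformations $\mathcal{L}$ change only rewards, so every self-loop probability $p^a_s$ is constant across all $\mathcal{M}_t$; this is what makes $\alpha,\beta,l$ well defined from the fixed transition structure. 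Next I would convert $R^m_t$ into an optimality gap: writing $m_t^{\max}:=-R^m_t=\max_s\bigl(-\max_a r^a\bigr)\ge 0$, the greedy policy $\pi_t$ uses per-state rewards lying in $[R^m_t,0]$, so $V^{\pi_t}\ge R^m_t/(1-\gamma)\mathbf 1$ while $V^*\le 0$ (all rewards $\le 0$); hence $\pi_t$ is $\bigl(m_t^{\max}/(1-\gamma)\bigr)$-optimal in $\mathcal{M}_t$, and by Theorem \ref{thm:adv_preservation} the gap is identical in the original MDP. Thus it suffices to bound $\epsilon_t=m_t^{\max}/(1-\gamma)$.

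The key step --- and the place where $\alpha$ appears --- is to contract the \emph{lift} variables rather than the gaps directly. Let $\Lambda_t:=\max_s\delta_s^{(t)}$, where $\delta_s^{(t)}$ is the lift computed from $\mathcal{M}_{t-1}$. Expanding $\mathcal{L}^{\Delta_t}$, the new reward of action $a$ at state $s$ is $r^a+\delta_s^{(t)}(1-\gamma p^a_s)-\gamma\sum_{s'\ne s}\delta_{s'}^{(t)}p^a_{s'}$. Evaluating at the determining action $a_s^\dagger$, whose first two terms cancel by the choice of $\delta_s^{(t)}$, its new gap equals $\gamma\sum_{s'\ne s}\delta_{s'}^{(t)}p^{a_s^\dagger}_{s'}$. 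Because $\delta_s^{(t+1)}$ is a minimum over actions at $s$, I can bound it by plugging in this same action $a_s^\dagger$:
\[
\delta_s^{(t+1)}\le \frac{\gamma\sum_{s'\ne s}\delta_{s'}^{(t)}p^{a_s^\dagger}_{s'}}{1-\gamma p^{a_s^\dagger}_s}
=\frac{\gamma\bigl(1-p^{a_s^\dagger}_s\bigr)}{1-\gamma p^{a_s^\dagger}_s}\cdot\frac{\sum_{s'\ne s}\delta_{s'}^{(t)}p^{a_s^\dagger}_{s'}}{1-p^{a_s^\dagger}_s}\le \alpha\,\Lambda_t,
\]
since the last fraction is a weighted average of the $\delta_{s'}^{(t)}$ and the leading ratio is at most $\alpha$ by definition. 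Taking the maximum over $s$ gives the clean contraction $\Lambda_{t+1}\le\alpha\Lambda_t$. The elegant point is that the \emph{same} self-loop $p^{a_s^\dagger}_s$ appears in both the $(1-p)$ factor (the outgoing mass of $a_s^\dagger$) and the $(1-\gamma p)$ denominator (from the next lift), so they combine into exactly the ratio defining $\alpha$.

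Finally I would convert back to gaps and supply the base case. The same reward expansion gives $m_t^{\max}\le \max_s \gamma\bigl(1-p^{a_s^\dagger}_s\bigr)\Lambda_t\le \alpha\,l\,\Lambda_{t-1}$, using $\gamma(1-p)=\tfrac{\gamma(1-p)}{1-\gamma p}(1-\gamma p)\le \alpha\,l$ with $l=\max_{s,a}(1-\gamma p^a_s)$. For the base case, bounding the first lift through the minimum-gap action at each state gives $\Lambda_1\le r_{\max}/(1-\beta)$, where $1-\beta=\min_{s,a}(1-\gamma p^a_s)$ and $m_0^{\max}\le r_{\max}$ after the initial shift. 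Chaining $\Lambda_{t-1}\le\alpha^{t-1}\Lambda_1$ yields $m_t^{\max}\le\alpha^t\frac{l}{1-\beta}r_{\max}$, and dividing by $1-\gamma$ produces the claimed $\epsilon_t$; here $\frac{l}{1-\beta}=\frac{\max_s(1-\gamma p_s)}{\min_s(1-\gamma p_s)}$ is exactly the condition number of the diagonal rescaling hidden in the argument.

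I expect the main obstacle to be discovering that the contraction must be run on the lifts $\delta_s^{(t)}$ and funneled through the single determining action $a_s^\dagger$: a naive contraction on the gaps $m_t$ loses the $(1-\gamma p_s)$ in the denominator and only yields the weaker factor $\gamma(1-p_s)/(1-\gamma)$. The remaining care is bookkeeping --- the determining action differs between states and iterations, which is precisely why $\alpha,\beta,l$ are defined as extrema over all state-action pairs, and why the unavoidable mismatch between the two roles of the self-loop probabilities surfaces as the factor $l/(1-\beta)$.
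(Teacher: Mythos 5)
Your proof is correct and is essentially the paper's argument in componentwise form: your contraction $\Lambda_{t+1}\le\alpha\Lambda_t$, obtained by funneling the new lift through the previous determining action $a_s^\dagger$, is exactly the paper's key inequality $\Delta_{t+1}\le W_{\pi_t}\Delta_t$ with $\|W_{\pi_t}\|_\infty\le\alpha$ (Lemma \ref{lemma:stochnorm}), derived there by comparing $-D_{\pi_{t+1}}^{-1}r_{t+1}^{\pi_{t+1}}$ against the actions of $\pi_t$, together with the same base bound $\|\Delta_0\|_\infty\le r_{\max}/(1-\beta)$ and the same reliance on nonpositivity of rewards (Proposition \ref{prop:nonpositive}). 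The only cosmetic difference is the final conversion to an optimality gap: you pass through the reward gap $m_t^{\max}\le\alpha\, l\,\Lambda_t$ and Lemma \ref{lem:approxite_solution}, whereas the paper uses the identity $V_t^{\pi_t}=-(I-\gamma P_{\pi_t})^{-1}D_{\pi_t}\Delta_t$ with $\|(I-\gamma P_{\pi_t})^{-1}D_{\pi_t}\|_\infty\le l/(1-\gamma)$ --- both routes produce the identical constant $\epsilon_t$.
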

and 
$$
r_{\max} = -\min_s \max_{a, \mathrm{st}(a) = s} r^a_0
$$ 
is the negation of the minimum among maximum state rewards.
\begin{proof}
The proof is given in Appendix \ref{ssec:rb_conv_proof}
\end{proof}
The theorem implies that convergence rate $\alpha$ of RB-S algorithm is always smaller or equal than $\gamma$. Notably, RB-S algorithm enjoys faster convergence rate in the case of non-trivial self-loop action probabilities in the MDP. Also, the theorem has a following interesting corollary:
\begin{corollary} \label{cor:adv_rew_conv}
For all actions $a$,
\vspace*{-1em}
\[ \left| r_t^a - \adv{a}{\pi^*} \right|  \leq 2 r_{\max} \frac{\gamma^t}{1-\gamma}.   \]
\end{corollary}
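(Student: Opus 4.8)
The plan is to express the gap $r_t^a-\adv{a}{\pi^*}$ through the optimal value function of the \emph{current} MDP $\mathcal{M}_t$ and then show that this value function collapses geometrically onto the zero vector of the normal form. Write $V^*_t$ for the optimal values of $\mathcal{M}_t$ and let $s=\mathrm{st}(a)$. Since every $\mathcal{L}_s^\delta$ preserves advantages (Theorem~\ref{thm:adv_preservation}), the optimal policy $\pi^*$ and all the numbers $\adv{a}{\pi^*}$ are identical in $\mathcal{M}_t$ and in $\mathcal{M}$; evaluating the advantage of $a$ inside $\mathcal{M}_t$ therefore gives
\[
\adv{a}{\pi^*}=r_t^a+\gamma\sum_{s'}p^a_{s'}V^*_t(s')-V^*_t(s),
\]
so that
\[
r_t^a-\adv{a}{\pi^*}=V^*_t(s)-\gamma\sum_{s'}p^a_{s'}V^*_t(s').
\]
Taking absolute values and using $\sum_{s'}p^a_{s'}=1$ reduces the corollary to the single estimate $\bigl|r_t^a-\adv{a}{\pi^*}\bigr|\le(1+\gamma)\|V^*_t\|_\infty\le 2\|V^*_t\|_\infty$, valid for all actions simultaneously.

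It then remains to prove $\|V^*_t\|_\infty\le r_{\max}\,\gamma^t/(1-\gamma)$. First I would record two consequences of the ``safe'' update. Because $\delta_s=-\max_a r^a/(1-\gamma p^a_s)\ge 0$ and the lifted best-action reward never exceeds $0$ while the cross-state corrections only lower rewards, every reward of $\mathcal{M}_t$ stays non-positive, hence $V^*_t\le 0$ componentwise. On the other hand the greedy policy $\pi_t$ collects rewards of value at least $R^m_t$ in each state, so $V^*_t(s)\ge V^{\pi_t}_t(s)\ge R^m_t/(1-\gamma)$; combined with $V^*_t\le 0$ and $|R^m_0|=r_{\max}$ this yields the initial bound $\|V^*_0\|_\infty\le r_{\max}/(1-\gamma)$. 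The decisive second ingredient is the one-step contraction established in the proof of Theorem~\ref{thm:rb_conv}: a single RB-S step multiplies the deficit of the optimal values from their normal-form value $0$ by the factor $\alpha\le\gamma$, i.e. $\|V^*_t\|_\infty\le\gamma\|V^*_{t-1}\|_\infty$. Iterating from the initial bound gives $\|V^*_t\|_\infty\le r_{\max}\,\gamma^t/(1-\gamma)$, and substituting into the first-paragraph estimate closes the argument.

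The main obstacle is the contraction $\|V^*_t\|_\infty\le\gamma\|V^*_{t-1}\|_\infty$ itself, which is the technical heart of Theorem~\ref{thm:rb_conv} and is precisely what forces the refined analysis with the self-loop-dependent rate $\alpha$. A subtle but important point is that the contraction must be applied directly to the optimal values (equivalently, to the per-state maxima $R^m_t$), so that only a single factor $1/(1-\gamma)$ appears; routing the bound instead through the policy-suboptimality quantity $\epsilon_t$ of Theorem~\ref{thm:rb_conv} would spend an extra $1/(1-\gamma)$ and only produce the weaker estimate $2r_{\max}\gamma^t/(1-\gamma)^2$. Everything else—the advantage-preservation identity and the two-sided sandwich $R^m_t/(1-\gamma)\le V^*_t\le 0$—is elementary.
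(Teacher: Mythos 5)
Your proposal is correct and, at its core, is the paper's own argument in different clothing: the paper's auxiliary vector $d_t$ is exactly your $V^*_t$ (the value of $\pi^*$ under the rewards $r_t$, which stays optimal by Theorem~\ref{thm:adv_preservation}), its identity $r_t^a = \adv{a}{\pi^*} + \left((\gamma P - I)\,d_t\right)(a)$ with $\|\gamma P - I\|_\infty \le 2$ is precisely your first display, and both proofs finish by showing $\|V^*_t\|_\infty \le r_{\max}\gamma^t/(1-\gamma)$. The one place you genuinely diverge is how that decay is obtained: the paper applies the sandwich $0 \ge V_t^{\pi^*} \ge V_t^{\pi_t} \ge R^{\min}_t/(1-\gamma)$ \emph{at time $t$}, feeding in the per-step contraction of the state maxima, $R^{\min}_t \ge \gamma R^{\min}_{t-1}$ (Proposition~\ref{prop:stochl}); you apply the sandwich only at $t=0$ and contract $\|V^*_t\|_\infty$ itself. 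That contraction claim is true, but your attribution is off, and this is the one gap to repair: the proof of Theorem~\ref{thm:rb_conv} contracts the \emph{update vectors} $\Delta_t$ (with rate $\alpha$) and bounds the greedy-policy values with the constant $l/((1-\beta)(1-\gamma))$ --- as you observe yourself, routing through that theorem costs an extra $1/(1-\gamma)$ --- so nothing there literally establishes $\|V^*_t\|_\infty \le \gamma\|V^*_{t-1}\|_\infty$. Fortunately a two-line argument in the spirit of Proposition~\ref{prop:stochl} fills it: letting $a$ be the Bellman-optimal action at state $s$ in $\mathcal{M}_{t-1}$, the definition of the safe update gives $\delta_s \ge -r^a_{t-1}/(1-\gamma p^a_s)$, hence
\[ V^*_t(s) = V^*_{t-1}(s) + \delta_s \;\ge\; \frac{\gamma \sum_{s'\neq s} p^a_{s'} V^*_{t-1}(s')}{1-\gamma p^a_s} \;\ge\; \gamma \min_{s'} V^*_{t-1}(s'), \]
while $V^*_t \le 0$ follows from nonpositivity of rewards (Proposition~\ref{prop:nonpositive}); together these yield the one-step contraction. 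With that supplied, your proof is complete and recovers the corollary with the correct single factor of $1/(1-\gamma)$.
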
 
\begin{proof}
The Corollary proof follows from the proof of Theorem \ref{thm:stoch_conv} given in Appendix \ref{ssec:stoch_conv_proof}.
\end{proof}
The significance of the corollary is that the upper bound is observable during the run of the algorithm, which allows for the RB-S algorithm to output an exact solution by applying action filtering, see Appendix \ref{ssec:rb_w_af} for details.

The main difference between RB-S and VI in the known MDP case is how they treat self-loop actions. In the absence of the self-loops two algorithms are equivalent as it is shown in Appendix \ref{ssec:diag_free_sim}.
Appendix \ref{ssec:experiments_known} demonstrates the experimental comparison of the two algorithms.

\subsection{MDPs with unknown dynamics} \label{sec:stoch_rb}

In Section \ref{sec:rb_solvers} we considered the case of fully known MDPs, where all transition probabilities $\mathcal{P}$ are known and available to the solver. In practice, the assumption of a fully known MDP is usually impractical, and the solver only has access to transition data in the form $(s, s', a, r^a)$, where $a$ is an action chosen in state $s$, $r^a$ is the received reward, and $s'$ is the state where the agent appears after a transition sampled from $P(s'\,|\,a)$. 

In this section we switch from state-based to action-based notation. From now on, the reward vector $r_t$ is of length equal to the number of actions $m$ and consists of all action rewards from every state. Consequently, the transition matrix $P^\pi$ has a size of $m \times m$, and its element $P[i,j]$ is the probability that after choosing action $i$, the next action chosen is action $j$. This value is non-zero only if the agent has a non-zero probability of appearing in state $s' = \mathrm{st}(j)$, and policy $\pi$ chooses action $j$ in $s'$. For our analysis, we need an additional vector $R_t$, a vector of length $m$, where for every action $a$, it stores the maximum reward available in the state of action $a$ at time $t$, $R_t(a) = \max_b r^b:\mathrm{st}(b) = \mathrm{st}(a)$. For convenience we abuse notation and define $R_t(s) = \max_b r^b :\mathrm{st}(b) = s$.

The unknown MDP setting also requires a few adjustments to the algorithm:
the first adjustment we need to make is to give up on self-loop coefficients, since self-loop probabilities are no longer known. Therefore, without the self-loop coefficients, but with known transition probabilities, the reward balancing update becomes
\begin{equation}\label{eq:rew_balancing_actions}
r_{t+1} = r_t - R_t + \gamma P R_t.
\end{equation}
However, we no longer have access to the transition probabilities $P^{\pi_t}$, but only to samples from it. We assume an access to generative model, which is common in the literature analyzing the convergence of the synchronous $Q$-learning algorithm. It means that the solver can freely sample any number of transitions for any action at any time step. For the stability of the algorithm, at step  $t$ it samples not one, but $k$ transitions matrices $P^i_t$ from $P$.

Then the algorithm, which we call {\em Stochastic Reward Balancing}, has an update:
\begin{equation}\label{eq:rb_sampling}
r_{t+1} = r_t - R_t + \gamma \left( \frac{P^1_t + \dots + P^k_t}{k} \right) R_t.
\end{equation}
\begin{remark}
Just like Q-learning, this algorithm requires to store only a single vector of dimension $m$.
\end{remark} 
We denote the estimation matrix above as $P_t$:
\begin{equation*}
    P_t = \frac{P^1_t + \dots + P^k_t}{k}, \quad \E [P_t] = P.
\end{equation*}
A formal definition of the algorithm is given in Appendix \ref{ssec:stoch_conv_proof}. The following theorem analyzes its convergence.

\begin{theorem} \label{thm:stoch_conv} Fix $\tau \in [0,1]$ and choose 
\[ k \ge \frac{4r_{\max}^2 \log \frac{2m}{1-\tau}}{\epsilon^2(1-\gamma)^3 (1+\gamma)} , \]
and run stochastic reward balancing for 
\[ t \ge \frac{1}{1-\gamma} \log \left( \frac{1}{\epsilon (1-\gamma)} \right), \] steps. 
Let $\pi_t$ be the policy where every node takes the action with the maximal reward at time $t$. Then $\pi_t$ is $\epsilon$-optimal with probability $1-\tau$. 
\end{theorem}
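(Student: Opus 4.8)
The plan is to show that the output policy $\pi_t$, which in each state picks the action of maximal current reward, is greedy with respect to an \emph{honest} estimate of the true optimal $Q$-function, and that this estimate becomes accurate once $k$ and $t$ are as prescribed. First I would unroll the stochastic update~(\ref{eq:rb_sampling}). Writing $R_\sigma$ for the state-wise maximal-reward vector at step $\sigma$ and $W_t=\sum_{\sigma<t}R_\sigma$, the accumulated reward of action $a$ splits as
\[ r_t^a \;=\; \bigl[\mathcal{L}^{-W_t}\mathcal{M}\bigr]^a \;+\; N_t(a), \qquad N_t(a)=\gamma\sum_{\sigma<t}\bigl((P_\sigma-P)R_\sigma\bigr)(a), \]
an \emph{exact} transformation of the true MDP by $\Delta=-W_t$ plus an accumulated sampling error $N_t$. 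Because $\mathcal{L}^{-W_t}$ uses the true kernel $P$, Theorem~\ref{thm:adv_preservation} guarantees it preserves every advantage, and hence every policy gap, \emph{for any} vector $W_t$ whatsoever; the only genuinely stochastic object is the martingale $N_t$.

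Since shifting all rewards in a state by a common constant does not change the per-state $\argmax$, $\pi_t$ is simultaneously greedy for $r_t$ and for the implicit estimate $\hat Q_t(a)=r_0^a+\gamma(PW_t)(a)+N_t(a)$, and $W_{t+1}(s)=\max_{a\in s}\hat Q_t(a)$ is exactly one step of the Bellman optimality operator $T$ corrupted by $N_t$. I would then bound the true suboptimality of $\pi_t$ by two contributions. The \emph{bias} contribution comes from the noiseless part: since $\mathcal{L}^{-W_t}$ preserves gaps, the greedy choice in $\mathcal{L}^{-W_t}\mathcal{M}$ is governed by $\min_s\max_a\bigl[\mathcal{L}^{-W_t}\mathcal{M}\bigr]^a=\min_s\bigl((TW_t)(s)-W_t(s)\bigr)$, which contracts at the geometric rate exactly as in Theorem~\ref{thm:rb_conv} and Corollary~\ref{cor:adv_rew_conv}; choosing $t\ge\frac{1}{1-\gamma}\log\frac{1}{\epsilon(1-\gamma)}$ makes it at most $\epsilon/2$. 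The \emph{variance} contribution comes from $N_t$: a reward perturbation of sup-norm $\|N_t\|_\infty$ shifts every policy value by at most $\|N_t\|_\infty/(1-\gamma)$ and moves each greedy comparison by at most $2\|N_t\|_\infty$, so by Lemma~\ref{lem:approxite_solution} it enters the final gap through only a \emph{single} factor $1/(1-\gamma)$.

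For the concentration step, fix $a$: then $N_t(a)$ is a martingale whose increment $\gamma((P_\sigma-P)R_\sigma)(a)$ is, conditionally on the past, a centered average of $k$ i.i.d.\ evaluations of $R_\sigma$ at the sampled next state, hence sub-Gaussian with proxy $\gamma^2\|R_\sigma\|_\infty^2/(4k)$ by Hoeffding. The value-iteration successive-difference bound $\|R_\sigma\|_\infty=\|W_{\sigma+1}-W_\sigma\|_\infty\le\gamma^\sigma r_{\max}$ turns the total conditional variance into a convergent geometric series,
\[ \sum_{\sigma<t}\frac{\gamma^2\|R_\sigma\|_\infty^2}{4k}\;\le\;\frac{\gamma^2 r_{\max}^2}{4k(1-\gamma)(1+\gamma)}. \]
An Azuma/maximal inequality with a union bound over the $m$ actions then yields $\|N_t\|_\infty\le\epsilon(1-\gamma)$ (up to constants) with probability $1-\tau$ precisely when $k\ge\frac{4r_{\max}^2\log\frac{2m}{1-\tau}}{\epsilon^2(1-\gamma)^3(1+\gamma)}$. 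Combining the $\le\epsilon/2$ bias and $\le\epsilon/2$ variance bounds gives $\epsilon$-optimality.

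The hard part is the suboptimality accounting of the second paragraph. A black-box ``greedy-is-near-optimal'' argument routed through $\|\hat Q_t-Q^*\|_\infty$ would pay \emph{two} factors of $1/(1-\gamma)$ — one from the cumulative noise propagating through the value iterate $W_t$, one from the greedy step — and would only give $(1-\gamma)^{-5}$. Achieving the stated $(1-\gamma)^{-3}$ hinges on using that $\mathcal{L}^{-W_t}$ is gap-preserving even for the genuinely noisy iterate $W_t$, so that the accumulated sampling error enters the gap linearly in $\|N_t\|_\infty/(1-\gamma)$ rather than through the amplified value-iteration error. I would also need to discharge the mild circularity in $\|R_\sigma\|_\infty\le\gamma^\sigma r_{\max}$ (which itself holds only up to noise) by estimating the $R_\sigma$'s and controlling $N_t$ on the same high-probability event, via induction on $\sigma$.
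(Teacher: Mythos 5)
Your proposal is correct and follows essentially the same route as the paper's proof: the same decomposition of the iterates into an exact advantage-preserving transformation of the true MDP plus an accumulated martingale error (the paper's $z_l = \gamma(P_l - P_{\pi_l})R_l$ and $Z_t=\sum_{l<t} z_l$), the same pivotal observation that the drift term is policy-independent because $R_t$ is constant across actions within a state (so that policy gaps are perturbed only by $\tfrac{2}{1-\gamma}\|Z_t\|_\infty$, which is exactly the paper's Lemma \ref{lemm:gbound} on $G_t^{\pi}-G_t^{\pi'}$, avoiding the second $1/(1-\gamma)$ factor just as you argue), and the same Azuma-plus-union-bound concentration with geometrically decaying increments summed into $\sigma^2 \approx r_{\max}^2/(k(1-\gamma^2))$. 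The one point where you diverge is your worry about ``mild circularity'' in $\|R_\sigma\|_\infty \le \gamma^\sigma r_{\max}$: no high-probability induction is needed, since the paper's Propositions \ref{prop:r_gamma_contract} and \ref{prop:nonp} establish $R^{\min}_{\sigma+1}\ge \gamma R^{\min}_\sigma$ and $r_\sigma \le 0$ \emph{pathwise} --- the sampled matrices $P_\sigma$ are themselves row-stochastic --- so the support bound on the martingale increments holds deterministically on every sample path.
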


\begin{proof}
Proofs of a few propositions, lemmas and the theorem itself are given in Appendix \ref{ssec:stoch_conv_proof}.
\end{proof}

\begin{corollary}
Overall the number of samples $M$ required to achieve an $\epsilon$-optimal policy is:
\begin{equation*}
M \approx m \frac{2r_{\max}^2}{\epsilon^2 (1-\gamma)^4} \log(m) \log \left( \frac{1}{\epsilon (1-\gamma)} \right),    
\end{equation*}
which is a scale of $\tilde{\mathcal{O}} ( \frac{m}{(1-\gamma)^4\epsilon^2})$.
\end{corollary}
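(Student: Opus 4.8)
The plan is to obtain $M$ as the product of the three quantities that together make up the sampling budget: the number of iterations $t$, the number $k$ of transition matrices drawn per iteration, and the number of individual transitions needed to assemble one matrix. A single draw of $P^i_t$ requires, for each of the $m$ actions, one query to the generative model to sample the next state (after which the entry of $R_t$ is simply looked up), so assembling one matrix costs $m$ samples. Hence each iteration costs $k m$ samples and the whole run costs
\[ M = t \cdot k \cdot m. \]

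Next I would substitute the two thresholds supplied by Theorem~\ref{thm:stoch_conv}, taking each at its smallest admissible value,
\[ t = \frac{1}{1-\gamma}\log\!\left(\frac{1}{\epsilon(1-\gamma)}\right), \qquad k = \frac{4 r_{\max}^2 \log\frac{2m}{1-\tau}}{\epsilon^2 (1-\gamma)^3(1+\gamma)}, \]
and multiply them together with the factor $m$ to get
\[ M = m\,\frac{4 r_{\max}^2}{\epsilon^2(1-\gamma)^4(1+\gamma)}\,\log\!\frac{2m}{1-\tau}\,\log\!\left(\frac{1}{\epsilon(1-\gamma)}\right), \]
which already exhibits the claimed $(1-\gamma)^{-4}$ and $\epsilon^{-2}$ dependence.

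Finally I would reconcile this exact expression with the stated approximation, which is exactly what the $\approx$ sign absorbs. Treating the confidence level $\tau$ as a fixed constant, $\log\frac{2m}{1-\tau}$ equals $\log m$ up to an additive constant and so may be written as $\log m$; and in the regime of interest $\gamma \to 1$ the prefactor $4/(1+\gamma) \to 2$, so replacing it by $2$ produces the displayed constant $2 r_{\max}^2$. Collecting these replacements reproduces the stated formula, and discarding the constant and logarithmic terms yields the order bound $\tilde{\mathcal{O}}(m/((1-\gamma)^4\epsilon^2))$. I expect no genuine obstacle here: the only step requiring care is the bookkeeping observation that each of the $k$ sampled matrices consumes $m$ transitions, which is what supplies the leading factor of $m$ and distinguishes the per-action sample cost from the per-iteration matrix count.
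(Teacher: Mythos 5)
Your proposal is correct and is exactly the computation the paper intends (it states this corollary without an explicit proof): the total budget decomposes as $M = t\cdot k\cdot m$ since each of the $k$ matrices $P^i_t$ costs one generative-model query per action, and substituting the thresholds from Theorem~\ref{thm:stoch_conv} with $\log\frac{2m}{1-\tau}\approx\log m$ and $4/(1+\gamma)\approx 2$ reproduces the displayed expression. Your bookkeeping of the factor $m$ per sampled matrix is the right and only nontrivial observation here.
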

We now can compare this complexity with previous works. Note, that RB-S actually produces better results since it outputs $\epsilon$-optimal policy, while known Q-learning results only guarantee the convergence to a vector of Q values such that $||Q - Q^*||_{\infty} <\epsilon$. Obtaining $\epsilon$-optimal policy requires increasing the complexity by factor $(1-\gamma)^{-2}$. More comments on the difference between two algorithms are given in Appendix \ref{ssec:rbs_vi_stoch_diff}.

With the notation used in this paper, the current state-of-the-art result for $Q$-learning from \citet{Q-learning_sota} is 
\begin{equation*}
M \gtrapprox m \frac{c_3 (\log^4 M)(\log \frac{mM}{1-\tau}) }{\epsilon^2(1-\gamma)^4 }.     
\end{equation*}
Therefore, the result we provide for RB-S has the same scale as Q-learning, but improves upon it in terms of logarithmic and constant factors, while also eliminating the need for a learning rate parameter and any associated tuning.

\begin{corollary}
The sampling of transitions required to compute $P_t$ can be done independently which makes the algorithm parallelizable. Therefore, in a {\em federated learning} setting, splitting the task over $K$ agents provides a linear speed up, and the overall number of samples assigned to each agent is a scale of
\begin{equation*}
\tilde{\mathcal{O}} \left( \frac{m}{K(1-\gamma)^4\epsilon^2} \right).
\end{equation*}
Additionally, the algorithm requires $\tilde{\mathcal{O}}(1/(1-\gamma))$ communication rounds, which makes it efficient in terms of amount of communication needed.
\end{corollary}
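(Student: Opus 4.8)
The plan is to reduce the federated claim to the single-machine guarantee of Theorem~\ref{thm:stoch_conv}, exploiting the fact that the only place randomness enters the update~\eqref{eq:rb_sampling} is through the empirical mean $P_t = \frac{1}{k}\sum_{i=1}^k P^i_t$ of $k$ i.i.d. sampled transition matrices. First I would observe that the update depends on these samples only through the vector $P_t R_t = \frac{1}{k}\sum_{i=1}^k P^i_t R_t$, which is a plain average of $k$ i.i.d. terms. Consequently, if the $k$ samples drawn at step $t$ are partitioned into $K$ blocks of size $k/K$ and one block is assigned to each agent, then agent $j$ can form its local partial sum $S^j_t = \sum_{i \in \text{block } j} P^i_t R_t$ from $k/K$ generative-model queries, and the coordinating server reconstructs $P_t R_t = \frac{1}{k}\sum_{j=1}^K S^j_t$ exactly, so that the distributed computation reproduces the centralized update bit-for-bit.

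The key point is that no statistical accuracy is lost by this split. Because all agents query the same transition kernel $P$ and the blocks are drawn independently, the union of the $K$ blocks is distributionally identical to a single i.i.d. sample of size $k$. Hence the aggregated estimator $P_t$ obeys exactly the same concentration bounds already used in the proof of Theorem~\ref{thm:stoch_conv}, and I would reuse that proof verbatim: the same choices $k \ge \frac{4r_{\max}^2 \log\frac{2m}{1-\tau}}{\epsilon^2(1-\gamma)^3(1+\gamma)}$ and $t \ge \frac{1}{1-\gamma}\log\frac{1}{\epsilon(1-\gamma)}$ yield an $\epsilon$-optimal policy $\pi_t$ with probability $1-\tau$, with no new argument required.

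It then remains to account for samples and communication. The total number of generative queries is $M = k\,m\,t$, exactly as in the preceding corollary, and this work is now shared equally among the $K$ agents; each agent issues $k/K$ queries per action per step, for a per-agent total of $M/K = \tilde{\mathcal{O}}\!\left(\frac{m}{K(1-\gamma)^4\epsilon^2}\right)$, which is the claimed linear speed-up. For communication, one synchronization per iteration suffices: the server broadcasts the current reward vector $r_t$ (equivalently $R_t$), each agent returns its partial sum $S^j_t$, and the server performs the deterministic update~\eqref{eq:rb_sampling}. Summing over the $t$ iterations gives $\tilde{\mathcal{O}}(1/(1-\gamma))$ communication rounds.

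The only substantive point to verify carefully is the distributional equivalence used in the second step: one must ensure the agents stay synchronized so that every local contribution $S^j_t$ is evaluated against the same $R_t$ (hence the same global $r_t$), and that the blocks are genuinely independent draws, so that their concatenation really is an i.i.d. sample of size $k$. I expect this synchronization bookkeeping to be the main obstacle rather than any hard estimate; once it is in place, linearity of the averaging operator together with exchangeability of i.i.d. samples makes the reduction to Theorem~\ref{thm:stoch_conv} immediate.
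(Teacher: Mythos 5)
Your proposal is correct and matches the paper's reasoning: the paper treats this corollary as an immediate consequence of Theorem \ref{thm:stoch_conv}, relying on exactly the observation you formalize --- that the $k$ per-iteration samples are i.i.d. and enter the update only through the average $P_t R_t$, so splitting them across $K$ agents reproduces the centralized update, with per-agent samples $M/K$ and one communication round per each of the $\tilde{\mathcal{O}}(1/(1-\gamma))$ iterations. Your write-up merely makes explicit the synchronization bookkeeping the paper leaves implicit; no gap.
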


This result improves upon the current state-of-the-art result on federated $Q$-learning from \citet{Parallel_q_learning_sota} by the factor of $(1-\gamma)$.

Overall, the Safe Reward Balancing algorithm is not only conceptually novel, but also exhibits favorable convergence properties, with sample efficiency that surpasses current results from $Q$-learning, while requiring exactly the same amount of space.

\section{CONCLUSION}
In this paper, we have presented a new geometric interpretation of Markov Decision Processes, establishing the equivalence between MDP problems and specific geometric problems. Based on this interpretation, we developed new algorithms for both known and unknown MDP cases, showing state-of-the-art convergence properties in some cases and improving on the state-of-the-art in others. The authors conjecture that these are only the first results provided by this new interpretation. The authors hope that this new perspective would be useful to the researchers in the field whether in the analysis of existing algorithms or the development of new ones.

\bibliographystyle{plainnat}
\bibliography{main}
\newpage

\section*{Checklist}

 \begin{enumerate}

 \item For all models and algorithms presented, check if you include:
 \begin{enumerate}
   \item A clear description of the mathematical setting, assumptions, algorithm, and/or model. \textbf{Yes}
   \item An analysis of the properties and complexity (time, space, sample size) of any algorithm. \textbf{Yes}
   \item (Optional) Anonymized source code, with specification of all dependencies, including external libraries. \textbf{Yes}
 \end{enumerate}

 \item For any theoretical claim, check if you include:
 \begin{enumerate}
   \item Statements of the full set of assumptions of all theoretical results. \textbf{Yes}
   \item Complete proofs of all theoretical results. \textbf{Yes}
   \item Clear explanations of any assumptions. \textbf{Yes}     
 \end{enumerate}

 \item For all figures and tables that present empirical results, check if you include:
 \begin{enumerate}
   \item The code, data, and instructions needed to reproduce the main experimental results (either in the supplemental material or as a URL). \textbf{Yes}
   \item All the training details (e.g., data splits, hyperparameters, how they were chosen). \textbf{Not Applicable}
         \item A clear definition of the specific measure or statistics and error bars (e.g., with respect to the random seed after running experiments multiple times). \textbf{Yes}
         \item A description of the computing infrastructure used. (e.g., type of GPUs, internal cluster, or cloud provider). \textbf{Not Applicable}
 \end{enumerate}

 \item If you are using existing assets (e.g., code, data, models) or curating/releasing new assets, check if you include:
 \begin{enumerate}
   \item Citations of the creator If your work uses existing assets. \textbf{Not Applicable}
   \item The license information of the assets, if applicable. \textbf{Not Applicable}
   \item New assets either in the supplemental material or as a URL, if applicable. \textbf{Not Applicable}
   \item Information about consent from data providers/curators. \textbf{Not Applicable}
   \item Discussion of sensible content if applicable, e.g., personally identifiable information or offensive content. \textbf{Not Applicable}
 \end{enumerate}

 \item If you used crowdsourcing or conducted research with human subjects, check if you include:
 \begin{enumerate}
   \item The full text of instructions given to participants and screenshots. \textbf{Not Applicable}
   \item Descriptions of potential participant risks, with links to Institutional Review Board (IRB) approvals if applicable. \textbf{Not Applicable}
   \item The estimated hourly wage paid to participants and the total amount spent on participant compensation. \textbf{Not Applicable}
 \end{enumerate}

 \end{enumerate}
\onecolumn
\appendix

\newpage

\section{ADDITIONAL REMARKS ON MDP GEOMETRY} \label{sec:equ_and_dynamics}

\subsection{Example of the Geometric Interpretation of a Regular MDP} \label{ssec:geom_example}

\begin{figure*}[t] 
\begin{center}
\includegraphics[width=\textwidth]{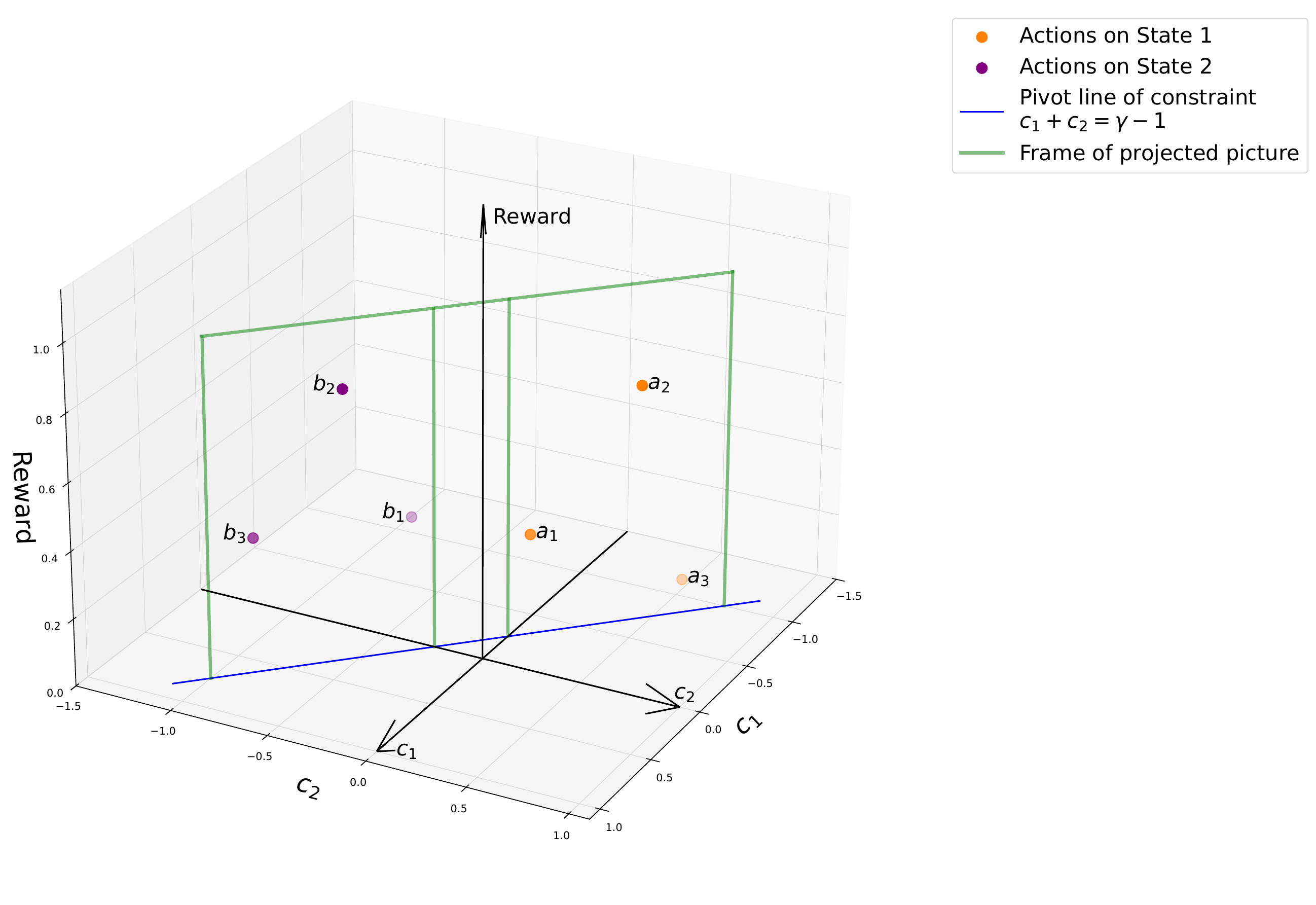}%
\end{center}
\caption{An example of the action space for 2-states MDP with 3 actions on each state described in Appendix \ref{ssec:geom_example}. Vertical axis show action rewards, and two horizontal axes correspond to the first (axis $c_1$) and second (axis $c_2$) coefficients of action vector.}
\label{fig:action_space_example}
\end{figure*}

In this example, we illustrate how to represent a regular MDP geometrically. Consider a 2-state MDP with a discount factor of $\gamma = 0.75$ and 3 actions available in each state:

\begin{itemize}
    \item State 1:
    \begin{itemize}
        \item Action $a_1$: $r^{a_1} = 0.3$, $p_1^{a_1} = 0.9, p_2^{a_1} = 0.1$ 
        \item Action $a_2$: $r^{a_2} = 0.7$, $p_1^{a_2} = 0.4, p_2^{a_2} = 0.6$
        \item Action $a_3$: $r^{a_3} = 0.1$, $p_1^{a_3} = 0.2, p_2^{a_3} = 0.8$
    \end{itemize}
    \item State 2:
    \begin{itemize}
        \item Action $b_1$: $r^{b_1} = 0.4$, $p_1^{b_1} = 0.1, p_2^{b_1} = 0.9$
        \item Action $b_2$: $r^{b_2} = 0.8$, $p_1^{b_2} = 0.4, p_2^{b_2} = 0.6$
        \item Action $b_3$: $r^{b_3} = 0.4$, $p_1^{b_2} = 0.8, p_2^{b_2} = 0.2$
        
    \end{itemize}
\end{itemize}

Each action vector is constructed with the reward of the corresponding action as its 0th coordinate. To compute the other coordinates, we multiply the transition probabilities by the discount factor $\gamma$ and then adjust them by subtracting $1$ from the coordinate correspondent to the state of the action. Specifically, for actions $a_1$, $a_2$, and $a_3$, we subtract 1 from the first coordinate, while for actions $b_1$, $b_2$, and $b_3$, we subtract 1 from the second coordinate. The resulting action vectors are as follows:

State 1: $a^+_1 = (0.3, -0.325, 0.075), a^+_2 = (0.7, -0.7, 0.45), a^+_3 = (0.1, -0.85, 0.6)$.

State 2: $b^+_1 = (0.4, 0.075, -0.325), b^+_2 = (0.8, 0.3,-0.55), b^+_3=(0.4, 0.6,-0.85)$.

These points are visualized in Figure \ref{fig:action_space_example}. Note that all the vectors satisfy the constraint $c_1 + c_2 = \gamma - 1 = -0.25$, which corresponds to the blue line in the figure.

\subsection{Affine equivalence}

In this section we discuss the relation of the transformation $\mathcal{L}_s^\delta$ and standard RL algorithms: value iteration and policy iteration. Choose MDPs $\mathcal{M}$ and MDP $\mathcal{M}'$, such that $\mathcal{M}' = \mathcal{L}_s^\delta \mathcal{M}$ (which means that $\mathcal{L}_s^\delta$ transforms $\mathcal{M}$ to $\mathcal{M}'$). Note that $\mathcal{M}$ and $\mathcal{M}'$ have the same sets of actions and, therefore, the policies are drawn from the same sets of actions. Then, we can make the following observations regarding the dynamics of the policy iteration and value iteration algorithms in MDPs $\mathcal{M}$ and $\mathcal{M}'$:
\begin{itemize}
    \item Each advantage $\adv{a}{\pi}$ remains the same, which leads to the same action choices during the policy improvement steps of the policy iteration algorithm in $\mathcal{M}$ and $\mathcal{M}'$. Therefore, the transformation $\mathcal{L}_s^\delta$ preserves the dynamics of the Policy Iteration algorithm.
    \item Assume the starting values of the Value Iteration algorithm are $V_0$ in $\mathcal{M}$ and $V_0'$ in $\mathcal{M}'$, such that $V_0(s) + \delta = V_0'(s)$. Then, due to preservation of the advantages, the choice of action remains the same at every step of the algorithm. At the same time, since we increase the value of state $s$ by $\delta$ for every pseudo-policy, the difference between the two value vectors remains unchanged. This, in turn, implies that $\mathcal{L}_s^\delta$ preserves the error vectors $e_t = V_t - V^*$, which defines the dynamics of the value iteration algorithm and set of $\epsilon$-optimal policies. It also preserves the values of $V_{t+1} - V_{t}$, the vector used to compute the stopping criteria in the Value Iteration algorithm. All these properties together imply that the transformation $\mathcal{L}_s^\delta$ preserves the dynamics of the Value Iteration algorithm if the initial values are adjusted.
\end{itemize}

The dynamics of the main algorithms imply a similarity between MDPs $\mathcal{M}$ and $\mathcal{M}'$, based on which we give the following definition:

\begin{definition}
MDPs $\mathcal{M}$ and $\mathcal{M}'$ are \textbf{affine equivalent} if there exists a vector $\Delta = (\delta_1, \dots, \delta_n)$ such that
\begin{equation*}
\mathcal{M}' = \mathcal{L}^\Delta\mathcal{M}.
\end{equation*}
Affine equivalence is a transitive relation on the set of MDPs. An \textbf{affine equivalence class} of MDPs is a maximal subset of MDPs in which every two MDPs $\mathcal{M}_1$ and $\mathcal{M}_2$ are affine equivalent.
\end{definition}

We then demonstrate the geometrical dynamics of the main MDP algorithms~---Value Iteration and Policy Iteration~---in the action space. We analyze the dynamics of the normalized MDP since every MDP can be transformed into its normal form while keeping the same affine equivalence class.

\subsection{Policy Iteration} \label{ssec:policy_iteration}

\begin{figure*}[t!] 
\begin{center}
\includegraphics[width=\textwidth]{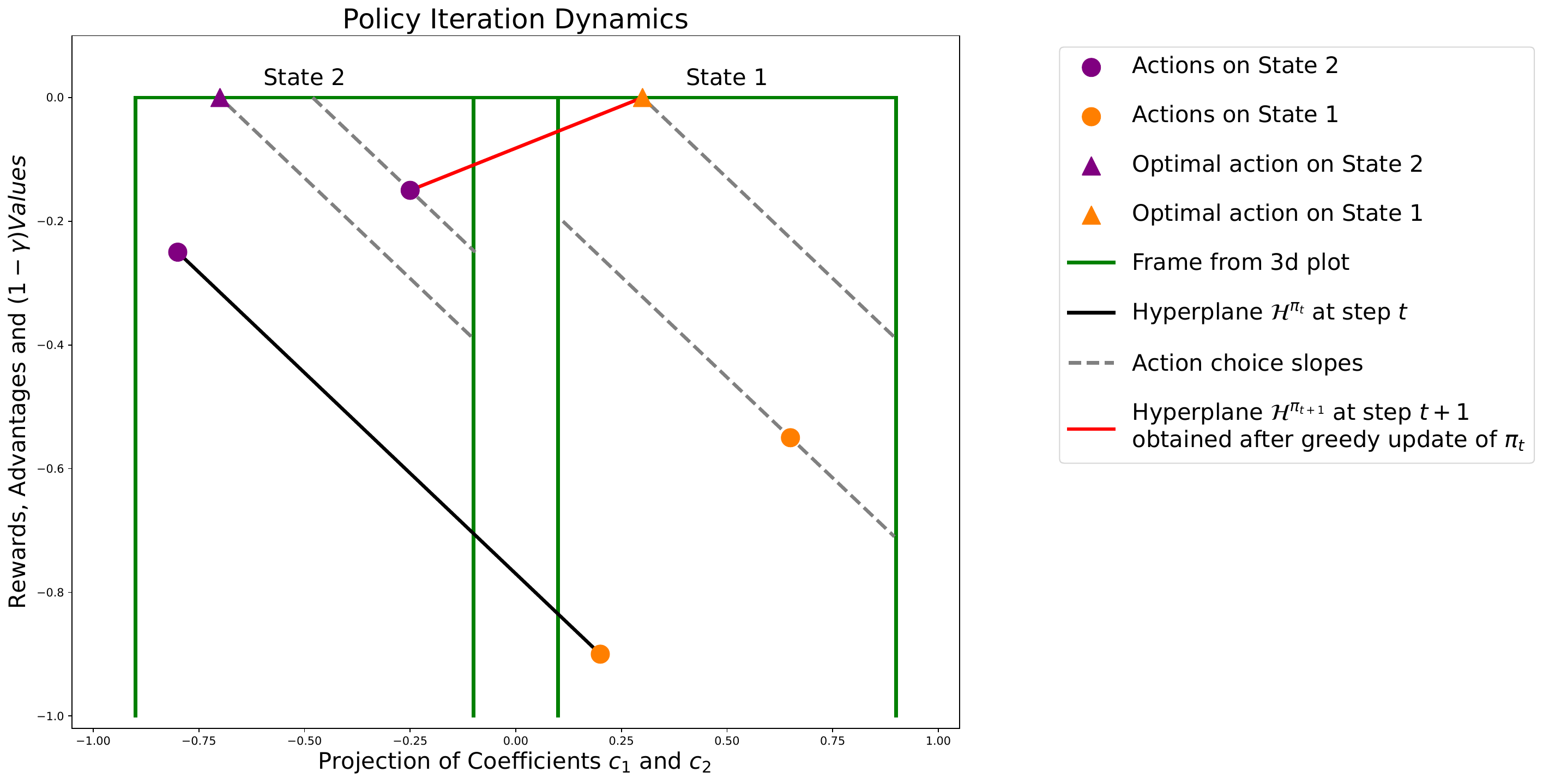}%
\end{center}
\caption{Illustration of the Policy Iteration algorithm dynamics in 2-state MDP.}
\label{fig:policy_iteration_dynamics}
\end{figure*}

The example dynamics of the policy iteration algorithm is presented in Figure \ref{fig:policy_iteration_dynamics}. Policy iteration consists of two steps applied sequentially: policy evaluation and policy improvement. As follows from Proposition \ref{pro:policy_eval}, the policy evaluation step is identical to hyperplane construction. For the policy improvement step, at each state it chooses the highest actions with respect to the current policy $\pi_t$. Therefore, with every iteration, policy iteration algorithm gets a higher hyperplane until the optimal one is reached. 

\subsection{Value Iteration} \label{ssec:vi_dynamics}

\begin{figure*}[t!] 
\begin{center}
\includegraphics[width=\textwidth]{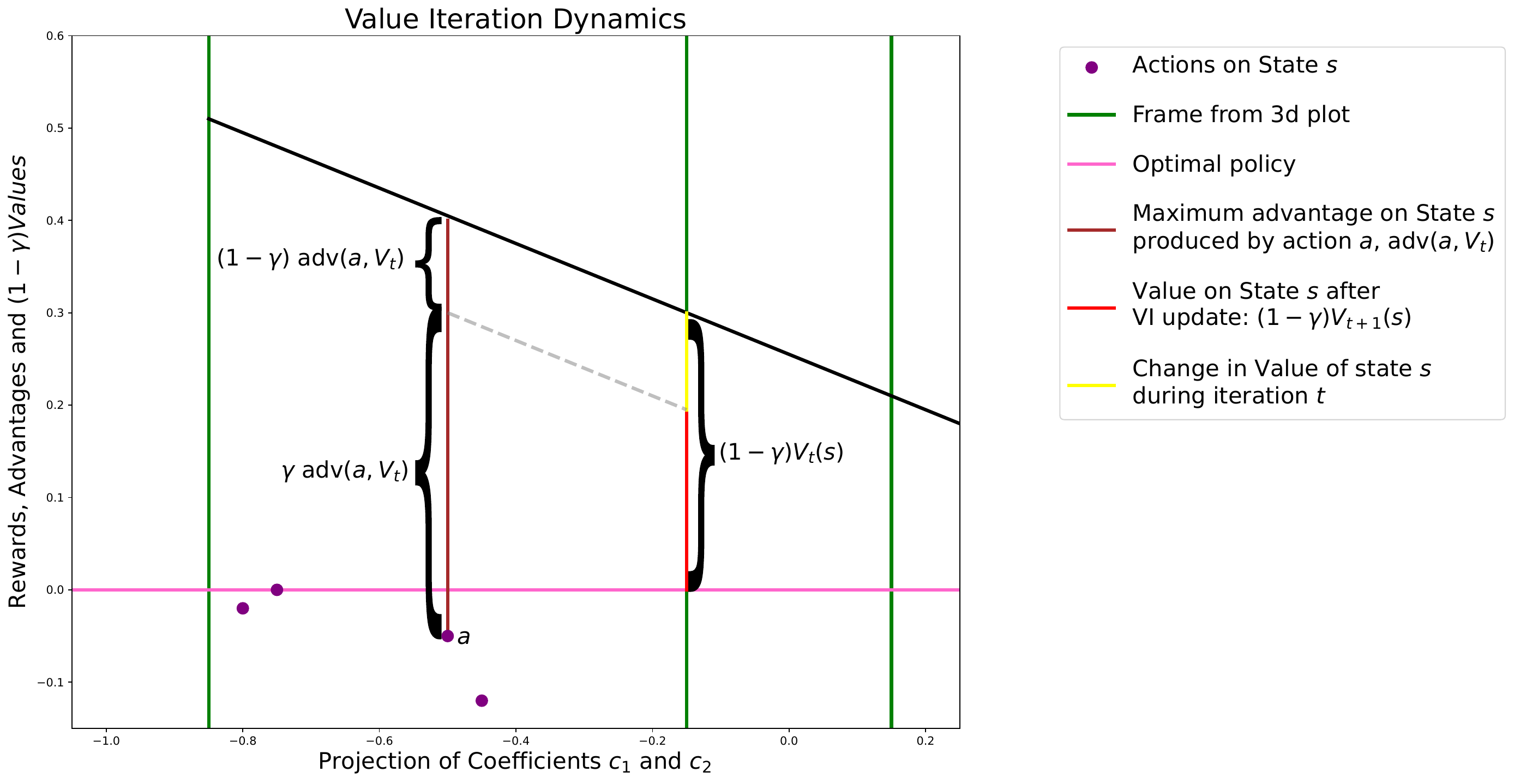}
\end{center}
\caption{Illustration of the Value Iteration algorithm dynamics.}
\label{fig:value_iteration_dynamics}
\end{figure*}

The dynamics of the Value Iteration algorithm are a little bit more complex and are presented in Figure \ref{fig:value_iteration_dynamics}. At each step, value iteration performs the following update:
\begin{align*}
    V_{t+1}(s) &= \max_a \left[ r(s,a) +\gamma \sum_{s'} P(s'|s,a) V_{t}(s') \right] \\
                &= V_{t}(s) + r(s,a^*_t) + (\gamma P(s|s,a^*_t)-1)V_t(s) +\\
                &\gamma \sum_{s' \ne s} P(s'|s,a^*_t) V_{t}(s')= V_t(s) + \adv{a^*_t}{V_{t}},
\end{align*}
where $a^*_t$ is the action maximizing the advantage on state $s$ at timestep $t$.

Therefore, Value Iteration simply finds the highest advantage and adds it to the current state value, which, in normal form, is equal to the state error. Note that, for positive values, advantages are always negative, which guarantees that the Value Iteration update decreases the norm of the values. Graphically, advantages and values are multiplied by $(1-\gamma)$, ensuring that the Value Iteration update is safe: if values were positive before the update, they will remain positive afterward.

\section{THEOREM PROOFS}

\subsection{Proof of Theorem \ref{thm:adv_preservation}} \label{ssec:adv_preservation_proof}
    
We need to prove that the advantage of any action with respect to any policy does not change after the transformation $\mathcal{L}^\delta_s$. The first case is $\mathrm{st}(a) = s$. Denote the original value vector of a policy as $V_+$, the modified value vector as $V'_+$, the original action vector as $a^+$, and the modified action vector as $(a')^+$. Then
\begin{align*}
    a^+V_+ =\, &r^{a} + (\gamma p^a_s - 1)V^s_+ + \gamma \sum_{s' \ne s} p^{s'}_a V^{s'}_+ =\\
    &r^{a} - \delta(\gamma p^a_s - 1) + (\gamma p^a_s - 1)(V^s_+ + \delta)+ \gamma \sum_{s' \ne s} p^{s'}_a V^{s'}_+ = (a')^+ V'_+.
\end{align*}
For action $\mathrm{st}(b) \ne s$ the proof looks essentially the same. Denoting the corresponding action vectors as $b^+$ and $(b')^+$ we have:
\begin{align*}
    b^+V_+ =\, &r^{b} + \gamma p^b_s V^s_+ + \gamma \sum_{s' \ne s} p^{s'}_b V^{s'}_+ = \\
    &r^b - \delta\gamma p^b_s + \gamma p^b_s (V^s_+ + \delta) + \gamma \sum_{s' \ne s} p^{s'}_b V^{s'}_+ = (b')^+ V_+'.
\end{align*}

\subsection{Proof of Theorem \ref{thm:hier_conv}} \label{ssec:hier_conv_proof}

The theorem can be proven by induction on the class number. We are going to show that after $t$ iterations of the algorithm $t$ lowest hierarchy classes have maximum reward of $0$.

\textbf{Basis:} The lowest class of 1~--- leaf nodes. Each of states correspondent to these nodes has only self-loop actions, and after first iteration of the algorithm maximum reward on each node becomes $0$. That does not change during the run of the algorithm, since probability to transfer to other nodes is $0$ and, consequently, $\delta$'s associated with these states remain $0$ during the run of the algorithm, so they won't affect other states.

\textbf{Step:} By induction hypothesis after $t$ steps of the algorithm for each state $s$ of class $c \leq t$, maximum reward on state $s$ is equal to $0$, which implies that $\delta_s$ computed during the $t+1$ steps is also equal to $0$. Now recall that for any action $a,\, \mathrm{st}(a) = s$ its new reward $r^a_{t+1}$ can be expressed as:

\begin{align*}
r^a_{t+1} = &r^a_t - \delta_s (\gamma p^a_s - 1) + \gamma \sum_{s' \ne s} p_a^{s'} \delta_{s'} = \\
&r^a_t - \delta_s (\gamma p^a_s - 1) + \gamma \sum_{s' \in c, c \le t} p_a^{s'} \delta_{s'} + \gamma \sum_{s' \in c, c > t} p_a^{s'} \delta_{s'}\\
\end{align*}

Note that for any action $a$ the last sum term is always $0$. Then, if class $c$ of $s$ is less or equal to $t$, $\delta_s$ and all $\delta_{s'}$ in the second summation term are $0$, which means that reward of action $a$ does not change and, consequently, the maximum reward on state $s$ remains $0$. In the case when $c=t+1$, both summation terms are equal to $0$, and $r^a_{t+1} = r^a_t - \delta_s (\gamma p^a_s - 1)$. However the quantity $\delta_s$ is chosen such that it maximizes the quantity $r^a_t/(\gamma p^a_s - 1)$, which implies that $\max_a r^a_{t+1}=0$ for every state $s$ in class $t+1$. This concludes the proof of the induction step and the theorem.

\subsection{Proof of Theorem \ref{thm:rb_conv} and Corollary \ref{cor:adv_rew_conv}} \label{ssec:rb_conv_proof}

We begin the proof by reiterating and clarifying the notation used. We operate on state-based vectors, meaning all vectors have dimension $n = |\mathcal{S}|$, and matrices are of size $n \times n$.

\begin{itemize}
\item $r_{\max}$~--- One of the key quantities used throughout the paper. It is defined as the minimum over the maxima of action rewards in each state: $r_{\max} = -\min_s \max_{a, \mathrm{st}(a) = s} r^a_0$. Roughly, it can be understood as the (negative of the) distance the algorithm needs to cover during its run.
\item $\square_t$~--- The subscript $t$ refers to quantities obtained after $t$ iterations of the algorithm. This is important in our setting because rewards change at each step.
\item $\square_{\pi_t}$~--- The subscript $\pi_t$ refers to quantities related to the policy $\pi_t$, which is the policy formed by the set of actions chosen during iteration $t$ of the algorithm. We use this as a subscript only for objects related to transition probabilities.
\item $P_\pi$~--- The transition matrix produced by the policy $\pi$.
\item $D_\pi = {\rm diag}(I - \gamma P)$~--- A diagonal matrix consisting of the elements from the diagonal of the matrix $I - \gamma P$.
 \item $r^{\pi}_t$~--- The rewards of the actions that form the policy $\pi$ after $t$ steps of the algorithm. In particular, $r^{\pi_t}_{t+1}$ represents the rewards of actions from the policy $\pi_t$ after $t+1$ steps of the algorithm (one iteration after policy $\pi_t$ was chosen). 
 \item $V_{t+1}^{\pi}$~--- Similar to rewards, these are the values of the policy $\pi$ produced by the rewards of the policy $\pi$ during iteration $t+1$ of the algorithm. 
 \item $\Delta_t$~--- The vector of update values during the $t$ step of the algorithm. The entries of the vector are chosen according to the rule $\delta(i) = - \max_{a:\, \rm{st}(a)=i} \left[\frac{r^a}{1-\gamma p^a_i}\right]$.
\end{itemize}

Now we start proving the theorem with the following lemma.

\begin{lemma} \label{lemma:stochnorm}
Suppose $P$ is a stochastic matrix and $D = {\rm diag}(I - \gamma P)$. Then the matrix $W$ defined as \[ W = I - D^{-1} (I - \gamma P)
\] satisfies 
\[ ||W||_{\infty}  =  \max_{i} \frac{\gamma  - \gamma [P]_{ii}}{1 - \gamma [P]_{ii}} \le \alpha \] 
\end{lemma}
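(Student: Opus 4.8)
The plan is to compute the entries of $W$ explicitly and then read off the infinity norm as a maximum absolute row sum. First I would write out $I - \gamma P$ entrywise: its diagonal entries are $1 - \gamma[P]_{ii}$ and its off-diagonal entries are $-\gamma[P]_{ij}$. Since $\gamma \in (0,1)$ and $[P]_{ii}\in[0,1]$, each diagonal entry satisfies $1 - \gamma[P]_{ii} \ge 1-\gamma > 0$, so $D = {\rm diag}(I-\gamma P)$ is invertible with strictly positive diagonal, and $D^{-1}$ has diagonal entries $1/(1-\gamma[P]_{ii})$.

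Next I would form the product $D^{-1}(I-\gamma P)$, which simply scales row $i$ by $1/(1-\gamma[P]_{ii})$. Its diagonal entries become exactly $1$, while its off-diagonal entries become $-\gamma[P]_{ij}/(1-\gamma[P]_{ii})$. Subtracting this matrix from $I$ therefore annihilates the diagonal: $W$ has zero diagonal, and its off-diagonal entries are $[W]_{ij} = \gamma[P]_{ij}/(1-\gamma[P]_{ii}) \ge 0$ for $i \neq j$.

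Because every entry of $W$ is non-negative, the maximum absolute row sum coincides with the maximum plain row sum, and for row $i$ this equals $\sum_{j \ne i} \gamma[P]_{ij}/(1-\gamma[P]_{ii})$. Using stochasticity of $P$, namely $\sum_{j\ne i}[P]_{ij} = 1 - [P]_{ii}$, the row sum collapses to $(\gamma - \gamma[P]_{ii})/(1-\gamma[P]_{ii})$, which yields the claimed closed form $\|W\|_{\infty} = \max_i (\gamma - \gamma[P]_{ii})/(1-\gamma[P]_{ii})$. The bound by $\alpha$ is then immediate, since $P$ is the transition matrix of some policy and $\alpha$ is defined as the maximum of exactly this expression over all states and policies; the secondary bound $\le \gamma$ follows because the scalar map $x \mapsto \gamma(1-x)/(1-\gamma x)$ is decreasing on $[0,1]$, attaining its maximum value $\gamma$ at $x=0$.

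This argument is essentially a direct calculation, so I do not anticipate a serious obstacle. The only points requiring genuine care are the diagonal cancellation in forming $W$ and the observation that every off-diagonal entry is non-negative, which is what lets the induced operator norm reduce to a plain row sum rather than a sum of absolute values; the stochasticity of $P$ then does the rest of the work.
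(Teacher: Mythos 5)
Your proposal is correct and matches the paper's proof essentially step for step: both compute the entries of $W$ explicitly, observe that the diagonal vanishes and the off-diagonal entries $\gamma[P]_{ij}/(1-\gamma[P]_{ii})$ are non-negative, and then evaluate the infinity norm as the maximum row sum using $\sum_{j\neq i}[P]_{ij} = 1-[P]_{ii}$. Your additional remarks (invertibility of $D$ since $1-\gamma[P]_{ii}\ge 1-\gamma>0$, and the monotonicity of $x\mapsto \gamma(1-x)/(1-\gamma x)$ giving the bound by $\gamma$) are accurate refinements that the paper leaves implicit.
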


\begin{proof}
Observe that the matrix $W$ has zero diagonal and
$$ 
W_{ij}  = \frac{\gamma [P]_{ij}}{1 - \gamma [P]_{ii}}\  \mbox{when } i \neq j. 
$$

It is thus a matrix with non-negative elements, and we have \begin{eqnarray*}  ||W||_{\infty} & = & \max_{i} \frac{ \gamma \sum_{j} [P]_{ij}}{1 - \gamma [P]_{ii}} = \max_{i}  \frac{\gamma (1 - [P]_{ii})}{1 - \gamma [P]_{ii}} 
\end{eqnarray*}
\end{proof}
We now give the proof of Theorem \ref{thm:rb_conv}
\begin{proof}
From the definition of $\Delta_t$, we have that 
\begin{equation} \label{eq:deltadef} \Delta_{t} = - D_{\pi_t}^{-1} r_t^{\pi_t},\end{equation}
where $D_{\pi_t}$ again is a diagonal matrix of $I - \gamma P^{\pi_t}$ .

By the definition of the update $\mathcal{L}^{\Delta_t}$, for the policy $\pi_t$ (as well as for any other policy), 
\[ V_{t+1}^{\pi_t} = V_t^{\pi_t} + \Delta_t.\] 

Now plugging in Eq. (\ref{eq:deltadef}),
\[ V_{t+1}^{\pi_t} =  V_t^{\pi_t} - D_{\pi_t}^{-1} r_t^{\pi_t}. \] 

Next, using the fact $V^{\pi} = (I-\gamma P^{\pi})^{-1} r^{\pi}$ we obtain
\[ r_{t+1}^{\pi_t} = r_t^{\pi_t} - (I - \gamma P_{\pi_t}) D_{\pi_t}^{-1} r_t^{\pi_t}. \] Now multiplying both sides by $-D_{\pi_t}^{-1}$:
\begin{equation} \label{eq:r1} 
-D_{\pi_t}^{-1} r_{t+1}^{\pi_t} = -D_{\pi_t}^{-1} r_t^{\pi_t} - D_{\pi_t}^{-1} (I - \gamma P_{\pi_t}) (-D_{\pi_t}^{-1} r_t^{\pi_t}) 
\end{equation}
At time $t+1$, we again choose  the next policy $\pi_{t+1}$ implicitly (via choosing the minimizing actions). We thus have the key inequality:
\begin{eqnarray} 
\Delta_{t+1} &  = & -  D_{\pi_{t+1}}^{-1} r_{t+1}^{\pi_{t+1}} \label{eq:deltae} \\ &  
\leq &  - D_{\pi_t}^{-1} r_{t+1}^{\pi_t}, \label{eq:r2}
\end{eqnarray} 
where the last equality is true because it is the actions in $\pi_{t+1}$ that achieve the maxima at time $t+1$. 

We can now put together Eq. (\ref{eq:r1}) and Eq. (\ref{eq:deltae}) and  Eq. (\ref{eq:r2})  to obtain 
\[ - D_{\pi_{t+1}}^{-1} r_{t+1}^{\pi_{t+1}}  \leq  - D_{\pi_t}^{-1} r_t^{\pi_t} - D_{\pi_t}^{-1} (I - \gamma P_{\pi_t}) (- D_{\pi_t}^{-1} r_t^{\pi_t}).\]  
We rewrite this as 
\begin{eqnarray*} 
\Delta_{t+1}  \leq  \Delta_t - D_{\pi_t}^{-1} ( I  - \gamma P_{\pi_t}) \Delta_t =  W_{\pi_t} \Delta_t,
\end{eqnarray*} 
where similarly to before
\[ W_{\pi} := I - D_{\pi}^{-1} (I - \gamma P_{\pi}).\] 
Using induction and Lemma \ref{lemma:stochnorm}, we obtain
\[ ||\Delta_{t}||_{\infty} \leq \alpha^t ||\Delta_0||_{\infty},\]
where $\alpha$ comes from the definition of the theorem. 

We have thus shown that the size of the update to the value function decreases exponentially in the infinity norm. We next translate this into a direct suboptimality bound on $V^{\pi_t}$. 

Starting from 
\[ r_{t}^{\pi_t} = -D_{\pi_t} \Delta_t,\]
and  
\[ V_t^{\pi_t} = (I - \gamma P_{\pi_t})^{-1} r_t,\]
 we  put the last two equations together to obtain
\[ V_t^{\pi_t} = - (I - \gamma P_{\pi_t})^{-1} D_{\pi_t} \Delta_t\]
Let $c = ||(I - \gamma P_{\pi_t})^{-1} D_{\pi_t}||_{\infty}$. Clearly, 
\[ c \leq || ( I - \gamma P_{\pi_t})^{-1}||_{\infty} ||D_{\pi_t}||_{\infty} \leq \frac{1}{1-\gamma} l.\] 

We then have that 
\[ ||V_t^{\pi_t}||_{\infty} \leq c \alpha^t ||\Delta_0||_{\infty}. \] Let us use bound 
\[ ||\Delta_0|| \leq \frac{r_{\max}}{1-\beta},\] so that 
\[ ||V_t^{\pi_t}||_{\infty} \leq c \frac{\alpha^t}{1-\beta} r_{\max}. \] 
The RHS of the above equation is $\epsilon_t$ from the definition of the theorem. Since for any policy $\pi$ its values $V_t^{\pi}$ are always nonpositive (since rewards are nonpositive throughout execution of the RB-S by Proposition \ref{prop:nonpositive}), the values of optimal policy are upper bounded  by $0$, implying that $\pi_t$ is $\epsilon_t$-optimal.

\end{proof}

\begin{proof}[Proof of Corollary \ref{cor:adv_rew_conv}] Let $\pi_t$ be the policy defined earlier achieving the minimum at time $t$ and let $\pi^*$ be an optimal policy. We have that 
\[ 0 \geq V_t^{\pi^*} \geq V_t^{\pi_t} \geq -r_{\max} \frac{\gamma^t}{1-\gamma}.\] Since 
\[ V_t^{\pi^*} = V_0^{\pi^*} + \sum_{k=0}^{t-1} \Delta_k,\]
we obtain that if we define 
\[ d_t = V_0^{\pi} - \left( -\sum_{k=0}^{t-1} \Delta_k\right), \] then 
\begin{equation} \label{eq:dinfinity} || d_t||_{\infty} \leq r_{\max} \frac{\gamma^t}{1-\gamma}. 
\end{equation} 

Next, we have  
\[ r_t = r_0 + \sum_{k=0}^{t-1} (I - \gamma P) \Delta_k. \] We can write this as 
\[ r_t = r_0  + \sum_{k=0}^{t-1} (\gamma P - I) (V_0^{\pi^*} - d_t), \] or 
\[ r_t = r_0 + \gamma P V_0^{\pi^*} - V_0^{\pi^*} - (\gamma P - I) d_t.\]
Finally, for every action $a$, 
\[ r_t^a =\adv{a}{\pi^*} + (\gamma P - I) d_t.\] 
Finally, using the bound $||\gamma P - I||_{\infty} \leq 2$ and Eq.  (\ref{eq:dinfinity}) concludes the proof. 
\end{proof} 

\subsection{Proof of Theorem \ref{thm:stoch_conv}} \label{ssec:stoch_conv_proof}

\setlength{\textfloatsep}{10pt}
\begin{algorithm}[t]
   \caption{Stochastic RB-S}
   \label{alg:stoch_rbs}
\begin{algorithmic}
   \STATE {\bfseries Initialize} MDP $\mathcal{M}$ with rewards $r_1$ and $t=1$.
   \STATE \textbf{Update:} \begin{itemize}
       \item Compute estimation matrix $P_t = \frac{P^1_t + \cdots+P^k_t}{k}$.
       \item Update the MDP rewards $r_{t+1} = r_{t} -R_{t} +\gamma P_t R_{t}$.
       \item Store the maximum action reward in each state as $R_{t+1}$. Set $R^m_{t+1}$ to the minimum of these maxima. 
   \end{itemize} 
      
   \IF{ $|R^m_{t+1}| /(1-\gamma) \ge \epsilon$}
    \STATE Increment $t$ by 1 and return to the update step.
   \ELSE
    \STATE Output policy $\pi$, such that $\pi(s) = \argmax_{a}  r^a_{t+1}$.
    \ENDIF
\end{algorithmic}
\end{algorithm}

 We begin the proof by reiterating and clarifying the notation used. We work with action-based vectors, meaning all vectors have dimension $m = |\mathcal{A}|$, and matrices are of size $m \times m$.

\begin{itemize}
\item $r_t$~--- the vector of all action rewards at step $t$ of the algorithm (assuming the order of actions is fixed).
\item $r^a_t$~--- the reward of action $a$ at step $t$ of the algorithm.
\item $R_t$~--- a vector with entries corresponding to the same actions as $r_t$, but each entry represents the maximum reward available in the state to which the action belongs (at step $t$ of the algorithm). Specifically, $R_t(a) = \max_b r^b, \mathrm{st}(b) = \mathrm{st}(a)$. Additionally, we abuse the notation to define $R_t(s) = \max_b r^b, \mathrm{st}(b) = s$ for state $s$.
\item $R^{\min}_t = \min R_t$~--- the minimum of state maximum rewards. This is a key quantity to track during the run of the algorithm, as it provides a stopping criterion, as shown in Lemma \ref{lem:approxite_solution}.
\item $r_{\max}$~--- the maxima of action rewards in each state: $r_{\max} = -\min_s \max_{a, \mathrm{st}(a) = s} r^a_0$. Roughly, it can be understood as the distance the algorithm needs to cover during its run. In this case, $r_{\max} = -R^{\min}_0 = -\min R_0$.
\item $P_\pi$~--- action-to-action transition matrix with size of $m \times m$. Its element $P_\pi[i,j]$ is the probability that after choosing action $i$, the next action chosen is action $j$. This value is non-zero only if the agent has a non-zero probability of appearing in state $s' = \mathrm{st}(j)$, and policy $\pi$ chooses action $j$ in $s'$. Alternatively, $ P_\pi$ can be seen as a product $P_A C_\pi$, where $P_A$ is a $m\times n$ matrix of action transition probabilities (independent of policy) and $C_\pi$ is an $n \times m$ is policy matrix - $C_\pi[i,j]=1$ if policy $\pi$ chooses action $j$ in state $i$ and $0$ otherwise. 
\end{itemize}

The formal definition of Stochastic RB-S algorithm is presented as Algorithm \ref{alg:stoch_rbs}. The following proposition demonstrates the geometric convergence of the $R^{\min}_t$ quantity during the run of the algorithm.

\begin{proposition} \label{prop:r_gamma_contract} We have that for all $t$, $R^{\min}_{t+1} \geq \gamma R^{\min}_t.$ \label{prop:stochl}
\end{proposition}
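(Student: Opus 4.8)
The plan is to prove the inequality \emph{entrywise and deterministically}: it holds for every realization of the sampled matrix $P_t$, so no probabilistic argument is needed at this stage. The only structural fact I will exploit is that each row of $P_t$ is a probability distribution. Indeed, $P_t = (P^1_t + \cdots + P^k_t)/k$ is an average of $k$ realized transition matrices, each of which is row-stochastic, so $[P_t]_{aj} \geq 0$ and $\sum_j [P_t]_{aj} = 1$ for every action $a$, regardless of the samples drawn.

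First I would write the update \eqref{eq:rb_sampling} componentwise. For an action $a$ with $s = \mathrm{st}(a)$, using $R_t(a) = R_t(s)$,
\[ r^a_{t+1} = r^a_t - R_t(s) + \gamma \sum_j [P_t]_{aj} R_t(j). \]
Since $R_t(j) \geq R^{\min}_t$ for every $j$ by definition of $R^{\min}_t = \min R_t$, and the $a$-th row of $P_t$ is a convex combination, the future-reward term is bounded below:
\[ \gamma \sum_j [P_t]_{aj} R_t(j) \geq \gamma R^{\min}_t \sum_j [P_t]_{aj} = \gamma R^{\min}_t. \]
Next, for each fixed state $s$ I would single out the action $a^*$ attaining the maximum at time $t$, so that $r^{a^*}_t = R_t(s)$. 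Substituting into the displayed lower bound, the $-R_t(s)$ term cancels $r^{a^*}_t$ and leaves $r^{a^*}_{t+1} \geq \gamma R^{\min}_t$. Because $R_{t+1}(s) = \max_{a:\,\mathrm{st}(a)=s} r^a_{t+1} \geq r^{a^*}_{t+1}$, we obtain $R_{t+1}(s) \geq \gamma R^{\min}_t$ for every state $s$, and taking the minimum over $s$ gives $R^{\min}_{t+1} \geq \gamma R^{\min}_t$.

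There is no genuinely hard step here; the proof is driven entirely by the cancellation of $R_t(s)$ against the reward of the maximizing action, combined with the convexity bound. The only points requiring care are bookkeeping: justifying that $R_t(j)$ depends only on $\mathrm{st}(j)$, and that $P_t$ is honestly row-stochastic in the action-based indexing, so that the convex-combination bound applies whichever next actions the implicit sampling selects (this is immaterial precisely because $R_t$ is constant across actions of the same state). I note finally that the argument never uses $\gamma < 1$ nor any accuracy of $P_t$ relative to $P$, so the identical reasoning establishes the same contraction for the exact update \eqref{eq:rew_balancing_actions}.
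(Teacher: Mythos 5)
Your proof is correct and follows essentially the same route as the paper's: cancel $r^{a^*}_t - R_t(s)$ for the action attaining the state maximum at time $t$, then lower-bound the term $\gamma (P_t R_t)(a^*)$ by $\gamma R^{\min}_t$ using row-stochasticity of the sampled matrix. If anything, your write-up is slightly more careful --- you establish the bound for every state before taking the minimum, and you get the factor $\gamma$ explicitly in the final inequality, where the paper's displayed chain ends with ``$\ge R^{\min}_t$'' (evidently a typo for $\gamma R^{\min}_t$).
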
 

\begin{proof}
Let $s$ be the state where minimum is achieved, $R_{t+1}(s) = R_{t+1}^{\rm min}$. Suppose action $a$ has the maximum reward during iteration $t$, $R_t(s) = r^a_t$. Then,
\begin{align*}
R_{t+1}^{\rm min} \ge r^a_{t+1} = r^a_t - R_t(a) + (\gamma P^t R_t)(a) =
(\gamma P^t R_t)(a) \ge R^{\min}_t,
\end{align*}
where the last inequality follows from stochasticity of $P_t$.
\end{proof}

During the run of the algorithm rewards of all actions are always negative, which is demonstrated in the following proposition, similar to Proposition \ref{prop:nonpositive}

\begin{proposition} \label{prop:nonp}
$r_t \leq 0$ for all $t$. 
\end{proposition}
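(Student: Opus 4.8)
The plan is a one-line induction on $t$, exploiting that the update $r_{t+1} = r_t - R_t + \gamma P_t R_t$ only ever subtracts a quantity that dominates $r_t$ coordinatewise or applies a nonnegative stochastic matrix to a nonpositive vector; this mirrors the argument for Proposition~\ref{prop:nonpositive} in the known-MDP case. For the base case, the standing initialization convention (the remark preceding Algorithm~\ref{alg:simple_vf_alg}, applied to every algorithm) subtracts the global maximum reward from every action, so that $r_1 \le 0$ componentwise.

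For the inductive step I would assume $r_t \le 0$ and examine the update coordinate by coordinate. Fixing an action $a$, I split the update as
\[
r^a_{t+1} = \bigl(r^a_t - R_t(a)\bigr) + \gamma (P_t R_t)(a),
\]
and show each piece is nonpositive. The first piece is nonpositive because $R_t(a) = \max_{b:\,\mathrm{st}(b)=\mathrm{st}(a)} r^b_t$ is a maximum over a set containing $b=a$, hence $R_t(a) \ge r^a_t$. For the second piece, the induction hypothesis $r_t \le 0$ forces $R_t(a) \le 0$ for every $a$ (a maximum of nonpositive numbers is nonpositive), so $R_t \le 0$ as a vector; since $P_t = (P^1_t + \cdots + P^k_t)/k$ is an average of transition matrices it is row-stochastic with nonnegative entries, and therefore $(P_t R_t)(a) = \sum_j (P_t)_{aj} R_t(j) \le 0$, a sign preserved upon multiplication by $\gamma > 0$. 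Summing two nonpositive terms gives $r^a_{t+1} \le 0$ for every $a$, which closes the induction.

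There is no genuine analytic obstacle here; the only point worth flagging is that the argument is entirely \emph{pathwise}. It uses only that each sampled matrix $P_t$ is nonnegative and stochastic, never an expectation identity such as $\E[P_t] = P$, so the conclusion $r_t \le 0$ holds for \emph{every} realization of the sampling (almost surely), which is precisely the form needed when feeding this bound into the stopping-criterion and convergence analysis of Theorem~\ref{thm:stoch_conv}.
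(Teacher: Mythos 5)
Your proof is correct and follows essentially the same route as the paper's: induction on $t$ with the base case given by the initial subtraction of the global maximum reward, then using $r^a_t \le R_t(a)$ together with the nonpositivity of $R_t$ and the row-stochasticity of the sampled matrix $P_t$ to close the inductive step. Your coordinatewise splitting is just a rephrasing of the paper's bound $\max r_{t+1} \le \gamma \max P_t R_t$, and your remark that the argument is pathwise (never using $\E[P_t]=P$) is accurate and consistent with how the proposition is used in Theorem~\ref{thm:stoch_conv}.
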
 

 \begin{proof} We will prove this by induction. This holds at time zero by the inductive assumption. Now suppose it holds at time $t$. In particular, $\max r_t \leq 0$. It suffices to prove that $\max r_{t+1} \leq 0$. But:
 \begin{eqnarray*} \max r_{t+1} & = & \max (r_t - R_t + \gamma P_t R_t) \\ 
 & \leq & R_r - R_r + \max \gamma  P_t R_t \\ & = &  \gamma \max P_t R_t.
 \end{eqnarray*} By the inductive hypothesis we have $\max r_t \leq 0$ and combining that with the stochasticity of $P_t$, we obtain the RHS is nonpositive. This completes the proof. 
 \end{proof} 


Next, let us rewrite our update rule to have an explicit expression for the stochastic error at each time step. Instead of Eq. \ref{eq:rb_sampling} let us write 
 \[ r_{t+1} = r_t - R_t + \gamma P_{\pi_t} R_t + z_t,\]
 where
 \[ \E[z_t|r_t, \ldots, r_0]=0.\]
 Specifically, we have 
 \[ z_t = \gamma (P_t - P_{\pi_t}) R_t.\]
 The size of the noise term $z_t$ plays an important role in our analysis. What turns out to matter is the infinity norm of the sum of these random variables, which is bounded in the next proposition.

 \begin{proposition} The random variable 
 \[ Z_t(s) = \sum_{l=0}^{t-1} z_l(s)\]
 is a sub-Gaussian random variable with parameter $\sigma^2 =  \frac{r_{\max}}{k(1-\gamma^2)}$.
 \end{proposition}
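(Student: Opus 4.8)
The plan is to view $Z_t(s)$ as a martingale with increments that are scaled averages of conditionally independent bounded variables, and to control its moment generating function by iterated conditioning. Fix the index $s$ and let $\mathcal{F}_l$ be the $\sigma$-algebra generated by $r_0,\dots,r_l$ (equivalently, by all samples drawn before step $l$), so that $R_l$ and the policy $\pi_l$ are $\mathcal{F}_l$-measurable. Writing out the $s$-th coordinate of $z_l = \gamma(P_l - P_{\pi_l})R_l$ and using $P_l = \frac{1}{k}\sum_{i=1}^k P_l^i$, I would express
\[ z_l(s) = \frac{\gamma}{k}\sum_{i=1}^k X_l^i, \qquad X_l^i = (P_l^i R_l)(s) - (P_{\pi_l}R_l)(s). \]
Conditionally on $\mathcal{F}_l$, the $k$ sampled matrices $P_l^i$ are i.i.d. with mean $P_{\pi_l}$, so the $X_l^i$ are conditionally independent and centered; in particular $\E[z_l(s)\mid\mathcal{F}_l]=0$, confirming the martingale-difference structure.

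Next I would establish a \textbf{deterministic} range bound for the summands. Proposition~\ref{prop:nonp} gives $R_l\le 0$ entrywise, while iterating Proposition~\ref{prop:r_gamma_contract} gives $R^{\min}_l \ge \gamma^l R^{\min}_0 = -\gamma^l r_{\max}$; hence every entry of $R_l$ lies in $[-\gamma^l r_{\max},0]$, an interval of width $\gamma^l r_{\max}$. Since $(P_l^i R_l)(s)$ is a convex combination of entries of $R_l$, each $X_l^i$ is a centered random variable supported on an interval of width at most $\gamma^l r_{\max}$, so by Hoeffding's lemma it is sub-Gaussian with parameter $(\gamma^l r_{\max})^2/4$. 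Using that scaling by $\gamma/k$ multiplies the parameter by $(\gamma/k)^2$ and that summing $k$ conditionally independent sub-Gaussians adds their parameters, I obtain that, conditionally on $\mathcal{F}_l$, $z_l(s)$ is sub-Gaussian with the deterministic parameter
\[ \sigma_l^2 = k\cdot\Big(\frac{\gamma}{k}\Big)^2\cdot\frac{(\gamma^l r_{\max})^2}{4} = \frac{\gamma^{2(l+1)} r_{\max}^2}{4k}. \]

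Finally I would aggregate across $l$ by a tower-property argument on moment generating functions: for any $\lambda$,
\[ \E\big[e^{\lambda Z_t(s)}\big] = \E\Big[e^{\lambda Z_{t-1}(s)}\,\E\big[e^{\lambda z_{t-1}(s)}\mid\mathcal{F}_{t-1}\big]\Big] \le \E\big[e^{\lambda Z_{t-1}(s)}\big]\,e^{\lambda^2\sigma_{t-1}^2/2}, \]
and induct to get $\E[e^{\lambda Z_t(s)}]\le \exp{\frac{\lambda^2}{2}\sum_{l=0}^{t-1}\sigma_l^2}$. Summing the geometric series,
\[ \sum_{l=0}^{t-1}\sigma_l^2 = \frac{r_{\max}^2}{4k}\sum_{l=0}^{t-1}\gamma^{2(l+1)} \le \frac{r_{\max}^2}{4k(1-\gamma^2)}, \]
which shows $Z_t(s)$ is sub-Gaussian with a parameter of the claimed order $\frac{r_{\max}^2}{k(1-\gamma^2)}$ (the explicit constant $1/4$ following from the computation above).

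I expect the main obstacle to be the dependence structure: the increments $z_l(s)$ are not independent, since $R_l$ and $\pi_l$ are random and shaped by the earlier iterates, so the naive ``sum of independent sub-Gaussians'' rule does not apply directly. The resolution is that the range bound $\gamma^l r_{\max}$, and hence the conditional sub-Gaussian parameter $\sigma_l^2$, are deterministic (they depend only on the algorithm's geometric contraction, not on the realized history), which is exactly what makes the iterated-conditioning bound propagate cleanly through the sum. Securing this deterministic bound via Propositions~\ref{prop:nonp} and~\ref{prop:r_gamma_contract}, together with the geometric decay in $\gamma^l$ that keeps $\sum_l\sigma_l^2$ finite uniformly in $t$, are the crucial points.
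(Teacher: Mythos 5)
Your proof is correct and takes essentially the same approach as the paper: both decompose $Z_t(s)$ as a martingale over the doubly-indexed increments $e_l^i/k$, use the deterministic range bound $r_{\max}\gamma^l$ furnished by Propositions \ref{prop:nonp} and \ref{prop:r_gamma_contract}, and sum the geometric series to reach the parameter $r_{\max}^2/\bigl(k(1-\gamma^2)\bigr)$. The only cosmetic difference is that you re-derive the Azuma--Hoeffding bound at the level of moment generating functions (conditional Hoeffding's lemma plus the tower property), which directly certifies the sub-Gaussian parameter claimed in the statement --- with a slightly sharper constant, in fact --- whereas the paper invokes Azuma's tail inequality on the flattened martingale at time $kt$.
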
 
 \begin{proof} 

 Observe that each $z_l(s)$ is the average of $k$ random variables whose support lies in $[-r_{\max}\gamma^l,r_{\max}\gamma^l]$ by  Proposition \ref{prop:r_gamma_contract}. Let us denote these random variables as $e_{t}^i$, so that  
 \[ z_l(s) = \frac{\sum_{i=1}^{k} e_{l}^i}{k}.\] We thus have that 
 \[ P\left(\left|\sum_{l=0}^{t-1} z_l (s)\right| \ge \epsilon \right) = P\left(\left|\sum_{l=0}^{t-1} \sum_{i=1}^{k} \frac{e_l^i (s)}{k} \right| \ge \epsilon\right). \] 
 We can apply Azuma's inequality to the martingale
 $$
 \left\{0, \frac{e^1_0}{k}, \frac{e^1_0 + e^2_0}{k}, \ldots, \frac{\sum_{i=1}^k e^i_0}{k}, \frac{\left(\sum_{i=1}^k e^i_0\right) + e^1_1}{k}, \ldots \right\},
 $$ 
at time $kt$ to yield
 \[ 
 P\left(\left|\sum_{l=0}^{t-1} \sum_{i=1}^{k} \frac{e_l^i}{k} \right| \ge \epsilon\right) < 2\exp{\frac{-2\epsilon^2}{\sum_{l=0}^{t-1} k c_l^2} }.
 \] 
 
 where $c_l$ is the support of  random variables $\frac{e_l^i}{k}$, which is equal to $\frac{r_{\max}\gamma^{l}}{k}$ and 

 \begin{equation*}
 \sum_{l=0}^{t-1} k c_l^2 \le  \sum_{l=0}^{t-1} k c_l^2 = \sum_{l=0}^{t-1} \frac{r_{\max}^2 \gamma^{2l}}{k} \le \frac{r_{\max}^2 }{k(1-\gamma^2)}.
 \end{equation*}
 
 We conclude that $\sum_{l=0}^{t-1} z_l(s)$ is a sub-Gaussian random variable with parameter $\sigma^2 =  \frac{r_{\max}^2}{k(1-\gamma^2)}$. 

 \end{proof} 

Given a policy  $\pi$, let us denote by $G_t^{\pi}$ the vector 
\[ G_{t}^{\pi} = Q_{0}^{\pi} - Q_t^{\pi},\]
where $Q_{t}^{\pi}$ is the Q-value vector of the policy $\pi$ under the rewards $r_t$. This vector has as many entries as the number of actions. Note that because $r_t$ is now a random variable, so is $Q_{t}^{\pi}$ and thus also $G_t^{\pi}$. 

\begin{lemma}  \label{lemm:gbound} 
\[ \E \left( \max_{\pi,\pi'} ||G_t^{\pi} - G_t^{\pi'}||_{\infty} \right)\le \frac{2}{1-\gamma} \E || Z_t||_{\infty}.
\]
\end{lemma}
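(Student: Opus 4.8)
The plan is to obtain an exact decomposition of $G_t^{\pi}$ into a deterministic part that does not depend on $\pi$ and a stochastic part that is linear in $Z_t$, so that the entire $\pi$-dependence of $G_t^{\pi} - G_t^{\pi'}$ is carried by the noise. First I would record the telescoping recursion. Since $Q_t^{\pi} = (I - \gamma P_\pi)^{-1} r_t$ with the same matrix $P_\pi$ for every step, and $G_0^{\pi} = 0$, summing consecutive differences gives
\[ G_{t}^{\pi} = \sum_{l=0}^{t-1}\left(Q_l^{\pi} - Q_{l+1}^{\pi}\right) = (I - \gamma P_\pi)^{-1}\sum_{l=0}^{t-1}(r_l - r_{l+1}). \]
Writing the update rule as $r_{l+1} = r_l - (I - \gamma P_{\pi_l})R_l + z_l$, each increment is $r_l - r_{l+1} = (I - \gamma P_{\pi_l})R_l - z_l$, so
\[ G_t^{\pi} = (I - \gamma P_\pi)^{-1}\left(\sum_{l=0}^{t-1}(I-\gamma P_{\pi_l})R_l - Z_t\right). \]

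The crucial observation, and the step I expect to do the real work, is that $P_\pi R_l$ does not depend on $\pi$. Indeed $R_l$ is constant across the actions of any fixed state (it equals the state's maximal reward), so writing $P_\pi = P_A C_\pi$ we get $(C_\pi R_l)(s) = R_l(\pi(s)) = R_l(s)$ regardless of which action $\pi$ selects in $s$; hence $P_\pi R_l = P_A \bar R_l$ for the state-max vector $\bar R_l$, the same for every policy. Consequently $(I - \gamma P_\pi)R_l = (I - \gamma P_{\pi_l})R_l$ for all $\pi, \pi_l$, which yields the clean identity $(I - \gamma P_\pi)^{-1}(I - \gamma P_{\pi_l})R_l = R_l$. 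Summing over $l$, the deterministic term collapses to $\sum_{l=0}^{t-1}R_l$, a vector with no $\pi$-dependence, and therefore
\[ G_t^{\pi} = \sum_{l=0}^{t-1} R_l - (I - \gamma P_\pi)^{-1} Z_t. \]

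With this in hand the lemma is immediate: the policy-independent drift cancels in the difference, leaving
\[ G_t^{\pi} - G_t^{\pi'} = (I - \gamma P_{\pi'})^{-1}Z_t - (I - \gamma P_\pi)^{-1}Z_t. \]
Taking the infinity norm, applying the triangle inequality, and bounding $\|(I - \gamma P)^{-1}\|_\infty \le 1/(1-\gamma)$ for any stochastic $P$ (via the Neumann series $\sum_k \gamma^k P^k$, exactly as in the proof of Theorem~\ref{thm:rb_conv}) gives $\|G_t^{\pi} - G_t^{\pi'}\|_\infty \le \frac{2}{1-\gamma}\|Z_t\|_\infty$ uniformly in $\pi, \pi'$. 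Maximizing over the finite set of policy pairs and then taking expectations finishes the argument.

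The main obstacle is conceptual rather than computational: recognizing that the deterministic accumulation $\sum_l (I-\gamma P_{\pi_l})R_l$, despite mixing the distinct realized policies $\pi_l$, becomes policy-independent after applying any resolvent $(I-\gamma P_\pi)^{-1}$. This hinges entirely on the fact that multiplication by $R_l$ is insensitive to the policy because $R_l$ is flat on each state. Once this cancellation is seen, no control of the drift is needed, and the whole fluctuation of $G_t^{\pi} - G_t^{\pi'}$ reduces to two resolvents applied to $Z_t$.
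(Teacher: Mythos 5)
Your proof is correct and takes essentially the same route as the paper's: both rest on the key observation that $P_\pi R_l$ is policy-independent (because $R_l$ is constant across the actions of each state, so $C_\pi R_l = C_{\pi_l} R_l$), which collapses the drift term to $\sum_{l} R_l$ and reduces $G_t^{\pi} - G_t^{\pi'}$ to a difference of two resolvents applied to $Z_t$, each bounded by $1/(1-\gamma)$ in the infinity norm. The only cosmetic difference is that you telescope the sum in one shot, whereas the paper derives the one-step recursion $Q_{t+1}^{\pi} = Q_t^{\pi} - R_t + (I-\gamma P_{\pi})^{-1} z_t$ and iterates it.
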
 

\begin{proof} Indeed, for any fixed policy $\pi$, 
\[ Q_{t}^{\pi} = (I - \gamma P_{\pi})^{-1} r_t,\]
where 
$P_{\pi}$ is the action-to-action transition matrix. Thus 
\[ Q_{t+1}^{\pi} = Q_t^{\pi} + (I - \gamma P_{\pi})^{-1} (\gamma P_{\pi_t} - I) R_t + (I-\gamma P_{\pi})^{-1} z_t.
\] 
A key observation is that  the second term on the RHS does not depend on the policy, despite appearing to do so. 
Indeed, matrix $P_{\pi_t} = P_A C_{\pi_t}$ and $P_{\pi_t} = P_A C_{\pi}$, as it described in the beginning of the Section, but since $R_t$ has the same entries on the coordinates correspondent to the action on the same state, therefore $C_{\pi_t}R_t = C_{\pi}R_t$ and $P_{\pi_t}R_t = P_\pi R_t$. Therefore 
\[ P_{\pi_t} R_t = P_\pi R_t,\] for {\em any} policy $\pi$. We thus have 
\[ Q_{t+1}^{\pi} = Q_t^{\pi} - R_t + (I-\gamma P_{\pi})^{-1} z_t.\] 
The independence of $\pi$ and the second term of the RHS is clear here in the above equation. 

Now, iterating this recurrence, we obtain that for any two policies $\pi,\pi'$, 
\begin{equation} \label{eq:gdiffeq} G_t^{\pi} - G_t^{\pi'} = (I - \gamma P_{\pi})^{-1} \sum_{l=0}^{t-1} z_l - (I - \gamma P_{\pi'})^{-1} \sum_{l=0}^{t-1} z_l. \end{equation} Thus 
\[  ||G_t^{\pi} - G_t^{\pi'}||_{\infty} \leq || (I - \gamma P_{\pi})^{-1} - (I - \gamma P_{\pi'})^{-1} ||_{\infty} \times \left\| \sum_{l=0}^{t-1} z_l \right\|_{\infty}. \] Therefore 
\[  || G_t^{\pi} - G_t^{\pi'}||_{\infty} \leq \frac{2}{1-\gamma} \left\|\sum_{l=0}^{t-1} z_l\right\|_{\infty}. \]
Since this bound holds for any policy $\pi$ and any policy $\pi'$, we have in fact proven 
\[ \max_{\pi,\pi'} || G_t^{\pi} - G_t^{\pi'}||_{\infty} \leq \frac{2}{1-\gamma} \left\|\sum_{l=0}^{t-1} z_l\right\|_{\infty}. \]
\end{proof} 

We can now prove our main result of this section. 

\begin{proof}[Proof of Theorem \ref{thm:stoch_conv}] By Proposition \ref{prop:stochl}, at $t=\frac{1}{(1-\gamma)}\log (\frac{r_{\rm max}}{\epsilon (1-\gamma)})$ we have that 
\[ R^{\min}_t \geq \gamma^t R^{\min}_0 \geq - \epsilon (1-\gamma).\]
It follows that $Q_t^{\pi_t} \geq -\epsilon {\bf 1}$. On the other hand, using nonpositivity of rewards from Proposition \ref{prop:nonp}, we obtain that $Q_t^{\pi} \leq 0$ for any policy $\pi$. We conclude that the policy $\pi_t$ is $\epsilon$-optimal under the rewards $r_t$. 

However, what we need to bound is the optimality of $\pi_t$ under the initial rewards $r_0$. 
Observe that for any policy $\pi$ we have by definition of $G_t^{\pi}$, 
\[ Q_t^{\pi} = Q_0^{\pi} + G_t^{\pi}.\]  
By $\epsilon$-optimality of $\pi_t$ under the rewards $r_t$, we have that for any policy $\pi$,  
\[ Q_0^{\pi} +  G_t^{\pi} \leq Q_0^{\pi_t} + G_t^{\pi_t} + \epsilon {\bf 1},\] or 
\begin{equation} \label{eq:gdiffbound} Q_0^{\pi} - Q_0^{\pi_t} \leq G_t^{\pi_t} - G_t^{\pi} + \epsilon {\bf 1}. 
\end{equation}
We conclude that  the event that the policy $\pi_t$ is not $2\epsilon$ optimal under $r_0$ can occur only if for some policy $\pi$, we have 
\[ ||G_t^{\pi} - G_t^{\pi_t}||_{\infty} \geq \epsilon. \] 




Let us call that ``bad'' event $\mathcal{X}$. To prove the theorem we need to upper bound the probability of this event: 
\begin{eqnarray} \label{eq:bound_on_a}
P(\mathcal{X}) & \leq & P (\max_{\pi,\pi'} ||G_t^{\pi} - G_t^{\pi'}||_{\infty}   \geq  \epsilon) \leq 
P \left(\frac{2}{1-\gamma} || Z_t ||_\infty \ge \epsilon \right).
\end{eqnarray}

To proceed we need a high-probability bound on $||Z_t ||_\infty$ where each entry of $Z_t$ is a Sub-Gaussian random variable. While standard, we did not find an explicit statement of what we need to use in the existing literature, so we provide a proof below. 

\begin{proposition} \label{prop:subg_max}
For $m$ random variables $Z_1, \ldots, Z_m$ where $Z_i  \sim \rm{subG}(\sigma^2)$ the following  bound holds \footnote{The authors thank Alexander Zimin for help with the proof.}:

\begin{equation}
P(\max Z_i \geq \varepsilon) \le m \exp{-\frac{\varepsilon^2}{2\sigma^2}}    
\end{equation}
\end{proposition}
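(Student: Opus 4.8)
The statement is a standard union-bound maximal inequality for sub-Gaussian variables, and the plan is to reduce it to the one-variable Chernoff tail bound followed by a union bound. A point worth emphasizing at the outset is that \emph{no independence} among the $Z_i$ is assumed or needed, which is precisely what makes the result applicable to the correlated coordinates of the noise sum appearing in Lemma \ref{lemm:gbound} and Eq. (\ref{eq:bound_on_a}).

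First I would invoke the moment-generating-function characterization of sub-Gaussianity: each $Z_i \sim \mathrm{subG}(\sigma^2)$ satisfies $\E[e^{\lambda Z_i}] \le \exp{\lambda^2\sigma^2/2}$ for every $\lambda \in \mathbb{R}$. Applying Markov's inequality to the nonnegative variable $e^{\lambda Z_i}$ for any $\lambda > 0$ gives
\[ P(Z_i \ge \varepsilon) = P\left(e^{\lambda Z_i} \ge e^{\lambda \varepsilon}\right) \le e^{-\lambda \varepsilon}\,\E[e^{\lambda Z_i}] \le \exp{\lambda^2\sigma^2/2 - \lambda \varepsilon}. \]
The exponent $\lambda^2\sigma^2/2 - \lambda\varepsilon$ is a convex quadratic in $\lambda$ minimized at $\lambda = \varepsilon/\sigma^2$, and substituting this value yields the single-variable tail bound $P(Z_i \ge \varepsilon) \le \exp{-\varepsilon^2/(2\sigma^2)}$.

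Next I would take a union bound over the $m$ events $\{Z_i \ge \varepsilon\}$:
\[ P\left(\max_{1\le i \le m} Z_i \ge \varepsilon\right) = P\left(\bigcup_{i=1}^{m}\{Z_i \ge \varepsilon\}\right) \le \sum_{i=1}^{m} P(Z_i \ge \varepsilon) \le m\,\exp{-\frac{\varepsilon^2}{2\sigma^2}}, \]
which is exactly the claimed inequality.

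There is no genuine obstacle in this argument; it is routine. The only care needed is to use the MGF definition of $\mathrm{subG}(\sigma^2)$ so that the Chernoff optimization over $\lambda$ closes cleanly, and to note that the union bound requires no independence. When this proposition is later applied to bound $\|Z_t\|_\infty$, one invokes it for both $Z_t(s)$ and $-Z_t(s)$ across the $m$ coordinates, which accounts for the factor $2m$ inside the logarithm in the choice of $k$ in Theorem \ref{thm:stoch_conv}.
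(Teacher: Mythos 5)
Your proof is correct and takes essentially the same route as the paper: both rest on the sub-Gaussian moment-generating-function bound, Markov's inequality, and the same optimizing choice $\lambda = \varepsilon/\sigma^2$, and neither requires independence. The only cosmetic difference is that you apply a per-variable Chernoff bound and then a union bound, whereas the paper applies Markov once to the sum $\sum_{i=1}^m e^{\lambda Z_i}$, which dominates $e^{\lambda \max_i Z_i}$ --- a trivial rearrangement of the same computation yielding the identical bound $m\, e^{-\varepsilon^2/(2\sigma^2)}$.
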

\begin{proof}
For any $\lambda > 0$
\begin{eqnarray*}
P(\max Z_i \geq \varepsilon) &\le & P \left(\sum_{i=1}^m \exp{\lambda Z_i} \ge \exp{\varepsilon \lambda} \right)\\
&\leq& \frac{\E \sum_{i=1}^m \exp{\lambda Z_i}}{\exp{\varepsilon \lambda}} \leq
m \exp{\frac{\lambda^2 \sigma^2}{2} - \varepsilon \lambda }, \\
\end{eqnarray*}
where second inequality is Markov inequality and third inequality follows from the properties of subgaussian moment-generating functions.

We next choose the value of $\lambda > 0$ which minimizes the upper bound we have derived, which is $\lambda = \varepsilon/\sigma^2$.  Plugging this value in, we obtain the required result:
\begin{equation*}
P(\max Z_i \geq \varepsilon) \le m \exp{\frac{\lambda^2 \sigma^2}{2} - \varepsilon \lambda } \le m \exp{-\frac{\varepsilon^2}{2\sigma^2}}.
\end{equation*}
\end{proof}
Now applying Proposition \ref{prop:subg_max} (multiplying it by $2$ since we need it for both maximum and minimum) to Equation \ref{eq:bound_on_a} we have:
\begin{equation*}
P(\mathcal{X}) \le P \left(\frac{2}{1-\gamma} 
 || Z_t ||_\infty \ge \epsilon \right)
\le 2 m \exp{-\frac{\epsilon^2 (1-\gamma)^2}{4 \sigma^2 }}.
\end{equation*}
Now, plugging in $\sigma^2 = \frac{r_{\max}^2}{k (1-\gamma^2)} $ we have:
\begin{equation*}
P(\mathcal{X}) \le 2m \exp{-\frac{k (1-\gamma^2)\epsilon^2 (1-\gamma)^2}{4r_{\max}^2}}.
\end{equation*}
Choosing
\begin{equation*}
k = \frac{4r_{\max}^2 \log \frac{2m}{1-\tau}}{\epsilon^2(1-\gamma)^3 (1+\gamma)},
\end{equation*}
guarantees that $P(\overline{\mathcal{X}})\ge \tau$, which concludes the proof.


\end{proof} 

\section{ADDITIONAL ALGORITHMS}

\subsection{Exact Reward Balancing Algorithm} \label{ssec:exact_rb_solver}

In this section, we provide a formal definition of the Exact Reward Balancing algorithm and demonstrate its equivalence to Policy Iteration. The formal definition is presented below (Algorithm ~\ref{alg:pi_analog}).

The issue with the algorithm arises in its \textbf{Update} step: choosing $\Delta_t$ such that the value function of the policy $\pi_t$ in the updated MDP $\mathcal{M}_t$ is zero. This means we must compute $\Delta_t$ values that ensure the new value function of $\pi_t$ is zero, implying they must be exactly the negation of the current value function of $\pi_t$: $\Delta_t = -V^{\pi_t}$.

If we define the transition probabilities under policy $\pi_t$ as $P_{\pi_t}$ and the rewards at iteration $t$ as $r_t^{\pi_t}$, then: 
\begin{equation} 0 = r_t^{\pi_t} + (\gamma P_{\pi_t} - I) \Delta_t \implies \Delta_t = -(\gamma P_{\pi_t} - I)^{-1} r_t^{\pi_t}. \end{equation}
Thus, computing $\Delta_t$ requires inverting an $n \times n$ matrix, which is analogous to the policy evaluation step in the Policy Iteration algorithm.

Next, to compute the update $\mathcal{M}_t = \mathcal{L}^{\Delta_t} \mathcal{M}_{t-1}$, we calculate $r^{a}{t+1} = r^{a}{t} + \sum_{i=1}^{n} c_a(i) \Delta_t(i)$ and select the action with the maximum reward. This step is identical to the policy improvement step.

By combining these two observations, we conclude that the Exact Reward Balancing algorithm is equivalent to Policy Iteration.

\begin{algorithm}[h]
   \caption{Exact Reward Balancing}
   \label{alg:pi_analog}
\begin{algorithmic}
   \STATE {\bfseries Initialize} $\mathcal{M}_0 := \mathcal{M}$ and $t=1$.
   \WHILE{True} 
   \STATE Select policy $\pi_t$ by choosing $a_s = \argmax_a r^{a}, \mathrm{st}(a)=s$ for each state $s$. Denote the reward of the action as $r^{a_s}$.
   \IF{  $r^{a_s}=0 \quad \forall s$}
    \STATE Output policy $\pi_t$ 
   \ELSE
    \STATE \textbf{Update}  the MDP $\mathcal{M}_t = \mathcal{L}^{\Delta_t} \mathcal{M}_{t-1}$ by choosing $\Delta_t$ such that the values of the policy $\pi_t$ in the updated MDP $\mathcal{M}_t$ are all $0$.
    \STATE Increment $t$ by 1 and return to the policy choice step. 
    \ENDIF
    \ENDWHILE 
\end{algorithmic}
\end{algorithm}

\subsection{Intuition behind Safe Reward Balancing algorithm} \label{ssec:rb_intuition}

Having introduced the concept of reward balancing, there are multiple ways to design an algorithm based on this idea. In this section, we provide intuition behind the Safe Reward Balancing algorithm (RB-S), the version presented and analyzed in this paper. The algorithm is considered "safe" in the following sense: action rewards, which start as negative after the initial subtraction of the maximum reward, remain negative throughout the execution of the algorithm. This is guaranteed by the choice of deltas $\delta_s = -\max_a \frac{r^a}{1 - \gamma p^a_s}$, which is justified from both classic MDP and geometric perspectives.

From the classic MDP point of view, for every state $s$, the value of the optimal policy at state $s$ is guaranteed to be at least the maximum over all actions of the sum of discounted rewards collected before leaving state $s$. This value is exactly the value of $\delta_i$ proposed in the algorithm: 
\begin{equation*} \delta_i = -\max_{a \mid {\rm st}(a) = i} \frac{r^a}{1 - \gamma p_a^i}. \end{equation*}
Including action $a^*$, which maximizes this quantity, in a policy $\pi$ guarantees that $V^\pi(i) \le \delta_i$, which is also true for the optimal policy. Therefore, the value of the optimal policy will remain negative, and the choice of $\delta_i$ is safe. A stronger fact is shown in the following proposition.

\begin{proposition} $r_t^a \leq 0$ for all times $t$ and actions $a$. \label{prop:nonpositive}
\end{proposition}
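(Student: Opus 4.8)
The plan is to prove the proposition by induction on the iteration counter $t$, exploiting the explicit reward-update rules of the combined transformation $\mathcal{L}^{\Delta_t}$ together with the defining formula $\delta_s = -\max_{a:\,\mathrm{st}(a)=s} r^a/(1-\gamma p^a_s)$ used in RB-S. For the base case $t=0$, the initialization remark subtracts the global maximum reward from every action, so $\max_a r_0^a = 0$ and hence $r_0^a \le 0$ for every action $a$.

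For the inductive step, I would assume $r_t^a \le 0$ for all $a$ and first establish that every increment is non-negative, $\delta_s \ge 0$. Indeed, each denominator satisfies $1 - \gamma p_s^{a'} > 0$ because $\gamma \in (0,1)$ and $p_s^{a'} \le 1$, while each numerator $r_t^{a'}$ is non-positive by the inductive hypothesis; thus every ratio $r_t^{a'}/(1-\gamma p_s^{a'})$ is $\le 0$, forcing the maximum to be $\le 0$ and hence $\delta_s \ge 0$.

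Next I would write out the reward of an action $a$ with $\mathrm{st}(a)=s$ after the combined update, splitting the contributions into the self-state rule and the cross-state rules of $\mathcal{L}_{s'}^{\delta_{s'}}$:
$$r_{t+1}^a = r_t^a + \delta_s(1 - \gamma p_s^a) - \gamma \sum_{s'\neq s} \delta_{s'}\, p_{s'}^a.$$
The cross-state sum is non-positive, since $\delta_{s'}\ge 0$, $\gamma>0$, and $p_{s'}^a \ge 0$, so $r_{t+1}^a \le r_t^a + \delta_s(1-\gamma p_s^a)$. Finally, the definition of $\delta_s$ as the negated maximum of the ratios gives $r_t^a/(1-\gamma p_s^a) \le -\delta_s$, i.e. $r_t^a + \delta_s(1-\gamma p_s^a) \le 0$ after clearing the positive denominator. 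Combining the two inequalities yields $r_{t+1}^a \le 0$, which closes the induction.

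The only real care needed is in the sign bookkeeping: one must establish non-negativity of all the $\delta_{s'}$ \emph{before} analyzing the cross-state terms, so that these contributions are known to lower rather than raise each reward, and only then invoke the defining maximization for the self-state term. No step is genuinely hard; the argument is essentially that $\delta_s$ is exactly calibrated to keep the best action in state $s$ at or below zero, while the remaining transformations only decrease rewards further.
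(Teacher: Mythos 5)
Your proof is correct and takes essentially the same route as the paper's: an induction on $t$ in which the inductive hypothesis gives $\delta_s \ge 0$ (non-positive numerator over the positive denominator $1-\gamma p^a_s$), the cross-state terms $-\gamma\,\delta_{s'}\,p^a_{s'}$ can only decrease rewards, and the defining maximization of $\delta_s$ pins the self-state term $r_t^a + \delta_s(1-\gamma p_s^a)$ at or below zero. The only cosmetic difference is that you analyze the composed update $\mathcal{L}^{\Delta_t}$ in a single formula, while the paper checks that each individual map $\mathcal{L}_i^{\delta_i}$ preserves non-positivity; your bookkeeping, which fixes all $\delta_{s'}$ from the time-$t$ rewards before splitting the contributions, is if anything slightly more explicit about the composition.
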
 

\begin{proof} Recall that we assumed this to hold at step zero. Now we will argue that this property is preserved by applying each map $L_i^{\delta_i}$ computed in the execution of the RB-S. Indeed, in applying this map have 
\[ \delta_i =  -\max_{a | {\rm st}(a) = i} \frac{r^a}{ 1- \gamma p_a^i }. \] Since  the numerator is negative and denominator is positive, we have 
\[ \delta_i \geq 0.\] This immediately implies that for actions  $b$ with ${\rm st}(b) \neq i$, the update performed by $\mathcal{L}_i^{\delta_i}$,
\[ r^b \leftarrow r^b - \delta_i \gamma p_b^i\] leaves $r_b$ non-positive. 

As for actions $a$ with ${\rm st}(a)=i$, we have that the update has the property that 
\begin{eqnarray*} 
r^a & \leftarrow &  r^a - \delta_i ( \gamma p_s^a - 1) \\ 
& = & r^a + \delta_i ( 1 - \gamma p_s^a) \\ 
& \leq  &  r^a + \frac{r^a}{\gamma p_{s}^a - 1} (1 - \gamma p_s^a)  \\ 
& = &  0.
\end{eqnarray*} 
\end{proof} 

The choice of $\delta_i$ is also justified from a geometric perspective (Figure \ref{fig:rbs_action_choice}). The selection of $\delta_s$ can be interpreted as the choice of a hyperplane $\mathcal{H}^{\delta_s}$, which is incident to one of the actions in state $s$ and zeros on the other self-loop vertical lines $L_{s'}$, where $s' \ne s$. After applying the transformation $\mathcal{L}^{\delta_s}_s$, each action retains its advantage with respect to $\mathcal{H}^{\delta_s}$, or its relative position to it. Therefore, since none of the actions were above the hyperplane before the transformation, all of them remain below or on the hyperplane afterward.

\begin{figure*}[t!] 
\begin{center}
\includegraphics[width=\textwidth]{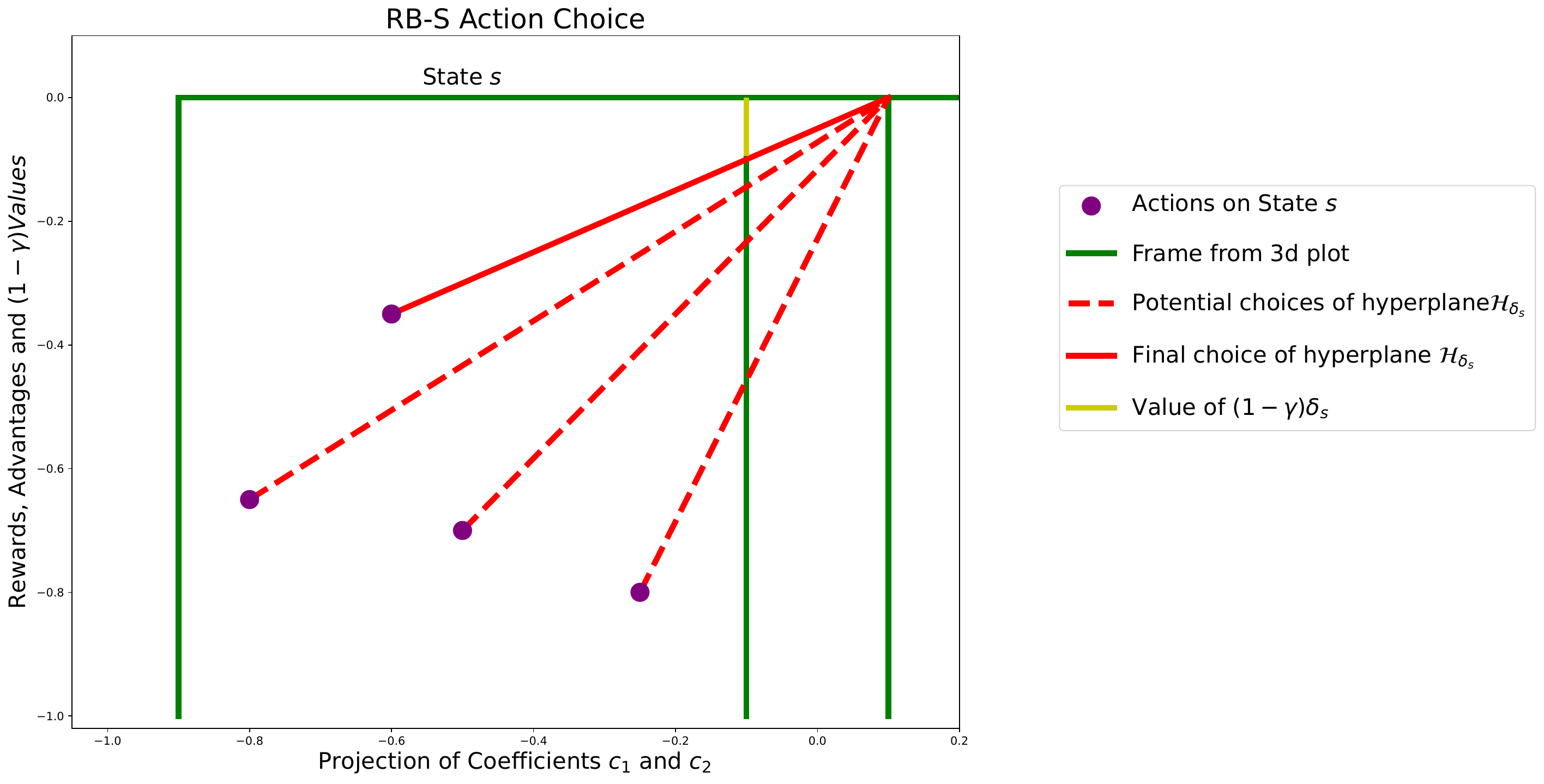}
\end{center}
\caption{RB-S algorithm choice of $\delta_s$. }
\label{fig:rbs_action_choice}
\end{figure*}

\subsection{Reward Balancing with Action Filtering} \label{ssec:rb_w_af}

\begin{algorithm}[t]
   \caption{RB-S with Action Filtering}
   \label{alg:rbs_with_action_filtering}
\begin{algorithmic}
   \STATE {\bfseries Initialize} $\mathcal{M}_0 := \mathcal{M}$, $\mathcal{A}_0 = \mathcal{A}$ and $t=1$.
   \STATE \textbf{Update:} \begin{itemize}
       \item $\forall s$ compute $\delta_s = - \max_a r^a / (1 - \gamma p^a_s)$.
       \item Update the MDP $\mathcal{M}_t = \mathcal{L}^{\Delta_t} \mathcal{M}_{t-1}$, where $\Delta_t = (\delta_1, \dots, \delta_n)^T$.
       \item Update $\mathcal{A}_t$  by filtering actions with criteria $r^a_t < 2r_{\rm max} \frac{\gamma^t}{1-\gamma}$. 
   \end{itemize} 
      
   \IF{$|\mathcal{A}|> n$}
    \STATE Increment $t$ by 1 and return to the update step.
   \ELSE
    \STATE Output policy $\pi$ that sonsits of remaining actions.
    \ENDIF
\end{algorithmic}
\end{algorithm}

Action filtering is a technique that allows an algorithm originally designed as an approximate solver to produce an exact solution under the assumption that the optimal policy is unique. In this section, we describe the algorithm and prove its convergence.

The intuition behind the algorithm is based on Corollary~ ~\ref{cor:adv_rew_conv}, which establishes that:
\begin{equation*}
\left| r_t^a - \adv{a}{\pi^*} \right|  \leq 2 r_{\max}\frac{\gamma^t}{1-\gamma}. 
\end{equation*}
This implies that as the algorithm progresses, the reward of each action converges to its advantage with respect to the optimal policy. Consequently, the rewards of the actions participating in the optimal policy converge to zero, while the rewards of the actions not participating converge to some value less than zero. More precisely, if after $t$ iterations of the RB-S algorithm the reward $r_t^a$ of action $a$ satisfies
\begin{equation} \label{eq:adv_conv_repeat}
r^a_t < 2r_{\rm max} \frac{\gamma^t}{1-\gamma},
\end{equation}
then action $a$ is not optimal and does not need to be considered in the following iterations of the algorithm.

At the same time, Corollary \ref{cor:adv_rew_conv} and Inequality \ref{eq:adv_conv_repeat} guarantee that this moment will eventually occur. Let us denote the true advantage of action $a$ with respect to the optimal policy as $h$, where $h = -\adv{\pi_0}{a} > 0$. Then, after a number of algorithm iterations exceeding $t_h > \log_{\gamma} \left( \frac{(1 - \gamma) h}{4 r_{\rm max}} \right)$, the action $a$ can be filtered out because after $t_h$ iterations it is true that:
\begin{equation*} \left| r_{t_h}^a - \adv{\pi_0}{a} \right| \leq \frac{h}{2}, \end{equation*}
which implies that the observed action's reward satisfies $r_{t_h}^a < \frac{h}{2}$ and can indeed be filtered out.

The two propositions above imply that action filtering will converge after 
\begin{equation*} T_{h}^{\rm max} = \log_{\gamma} \left( \frac{(1 - \gamma) h^{\rm max}}{4 r_{\rm max}} \right), \end{equation*} 
iterations, where $h^{\rm max} = -\max_a \adv^{\pi^*}{a}$ is the negation of the maximum advantage with respect to the optimal policy among non-optimal actions.

Action filtering provides an idea of how to run the algorithm in a more practical setting. Choose the number of iterations $T$ such that the run time is satisfactory. Then, the algorithm will either converge for $t < T$ to the optimal policy by filtering out all non-optimal actions or output an approximate solution after $T$ iterations. Additionally, action filtering during the execution of the algorithm reduces the number of computations required to perform updates.

\subsection{Similarity Between RB-S and VI} \label{ssec:diag_free_sim}

For known MDPs in the diagonal-free case the two algorithms are indeed equivalent. This case corresponds to $0$ self-loop probabilities on every action. In this case, the RB-S simply chooses maximum reward as $\delta_s$ at every iteration and the update of action $a$ becomes:

\begin{equation*}
r^a_{t+1} = r_t^a - R_t(s) + (\gamma P_a R_t)(s),
\end{equation*}
where $R_t$ is the maxima vector defined in Appendix \ref{ssec:stoch_conv_proof}. Note that we can make these choices and updates even in the case when self-loop probabilities are not $0$. 

Let's denote the cumulative adjustment of the RB-S algorithm up to step $t$ as $\tilde{\Delta}_t$:

$$ \tilde{\Delta}_t = \sum_{i=1}^t \Delta_i $$.

Then, we are going to prove that $\tilde{\Delta}_t = V_t$ in the diagonal-free case by induction, given that initial values $V_0 = 0$. Induction base $0 = 0$ is trivial. Induction step can be derived as follows: at step $t$ of the RB-S algorithm, reward $r^a_t$ of action $a$ on state $s$ is equal to:

$$ r^a_t = r^a_0 + C^a \tilde{\Delta}_t = r^a_0 - \tilde{\Delta}_t(s) + \gamma p^a \tilde{\Delta}_t $$

Maximizing this quantity over actions $a$ on the state $s$ is the same as maximizing:

$$ \argmax_a r^a_0 - \tilde{\Delta}_t(s) + \gamma p^a \tilde{\Delta}_t = \argmax_a r^a_0 + \gamma p^a \tilde{\Delta}_t = r^a + \gamma p^a V_t ,$$

by inductive assumption. Therefore, the choice of action will be the same. Now, abusing notation let's assume that action $a$ indeed maximizes the the expression. Then,

$$ V_{t+1}(s) = r^a + \gamma p^a V_t$$

for $t$th step of the VI algorithm. For RB-S, 
$$\Delta_{t+1}(s) = \argmax_a r^a_0 - \tilde{\Delta}_t(s) + \gamma p^a \tilde{\Delta}_t$$
and
$$ \tilde{\Delta}_{t+1}(s) = \Delta_{t+1}(s) + \tilde{\Delta}_t(s) = r^a_0 - \tilde{\Delta}_t(s) + \gamma p^a \tilde{\Delta}_t + \tilde{\Delta}_t(s) =  V_{t+1}(s)$$

Therefore, in the known diagonal-free MDP case, the two algorithms are equivalent. At the same time, in the presence of self-loops, RB-S can also be viewed as a VI algorithm, but run on a modified MDP in which self-loops are removed. To perform this modification, each action $a: \mathrm{st}(a) = s$, needs to be transformed into a self-loop-free action $\bar{a}$ such that it produces the same values for every policy it participates in:
\begin{align*}
0 &= a^+ V^\pi_+ = r^a + (\gamma p_s^i - 1)V^\pi(s) + \gamma \sum_{i\ne 1} p_i^a V^\pi(i) \\
& = \frac{r^a}{1 - \gamma p_s^i} + (-1)V^\pi(s) + \frac{\gamma (1-p_s^i)}{1 - \gamma p_s^i} \sum_{i\ne 1} \frac{p_i^a}{1-p_s^i} V^\pi(i) = \bar{a}^+ V^\pi_+
\end{align*}
Note that in the resulting MDP, each action has a different discount factor $\gamma$ depending on the self-loop probability of the action $a$ before the transformation. Therefore, the standard VI convergence proof is not directly applicable here. However, it is not difficult to modify the proof to show that VI run on this modified MDP will have a convergence rate of at least the maximum of $\frac{p_i^a}{1 - p_s^i}$, which matches the convergence rate stated in Theorem \ref{thm:rb_conv}.

\subsection{Comparison of RB-S and Value Iteration/Q-learning in stochastic case} \label{ssec:rbs_vi_stoch_diff}

Does the equivalence between RB-S and VI, as presented in Section \ref{ssec:diag_free_sim}
, apply to the stochastic case? 
In this Section, we show the difference between RB-S and value-based algorithms. First, we show the difference between RB-S and Stochastic Value Iteration. In the known MDP case the VI update can be written as:

$$ V_{t+1} = \max_a r^a + \gamma P V_t.$$

Subtracting $V_t$ from both sides, we obtain:

$$ V_{t+1} - V_t = \max_a r^a + (\gamma P - I) V_t. $$

Note, that the expression in the right-hand side is indeed the expression of the new reward of the modified MDP $\mathcal{L}^\Delta$, where $\Delta = - V_t$. Then, if we assume $V_0=0$, the expression can be rewritten as:

\begin{equation*} 
 V_{t+1} - V_t = \max_a r^a + (\gamma P - I) \sum_{i=1}^t (V_i - V_{i-1}) = 
\max_a r^a + (\gamma P - I) \sum_{i=1}^t R_i = R_{t+1}   
\end{equation*} 

This equality again demonstrates the equivalence between RB-S and Value Iteration in the deterministic case. But are two algorithms equivalent in the stochastic case? No, they are not. Assuming the matrix $P_t$ is sampled at step $t$ of the algorithm. Then, the difference between subsequent values can be written similarly as in the previous expression:

\begin{equation} \label{eq:VI_dynamics}
V_{t+1} - V_t = \max_a r^a + (\gamma P_t - I) \sum_{i=1}^t (V_i - V_{i-1}).
\end{equation} 

As for RB-S update values, they are equal to:

\begin{align}
R_{t+1} &= \max_a r_{t+1} = \max_a r_{t} + (\gamma P_t - I)R_{t} \nonumber \\
&= \max_a r_{t-1} + (\gamma P_{t-1} - I) R_{t-1} + (\gamma P_t - I)R_{t} \nonumber \\
&=\max_a r^a + \sum_{i=1}^t (\gamma P_i - I)R_{i} \label{eq:rbs_dynamics}
\end{align}

Equations \ref{eq:VI_dynamics} and \ref{eq:rbs_dynamics} shows the difference between two algorithms: the Value Iteration update is more influenced by the stochasticity of the matrix $P_t$, while in RB-S the term $(\gamma P_t - I) R_t$ dependent on estimation obtained at time step $t$ converges to $0$ as $t \rightarrow \infty$.

Then, we show that stochastic RB-S is not equivalent to Q-learning with any learning rate schedule $\alpha_t$. We define the operator $\widetilde{\max\limits_a}$ to replace a value with the maximum on the correspondent state, and $Q^m = \widetilde{\max\limits_a} Q$. Then, for Q-learning with initial values of $0$, the updates are:
\begin{align*}
&Q_0 = 0,\ Q_0^m = 0 \\
&Q_1 = \alpha_1 r,\ Q_1^m = \alpha_1 R \\
&Q_2 = (1-\alpha_2)Q_1 + \alpha_2(r + \gamma P_1 Q_1^m), \quad Q_2^m = \widetilde{\max_a} \left[ (1-\alpha_2)Q_1 + \alpha_2(r + \gamma P_1 Q_1^m) \right].
\end{align*}

Now the equivalence between RB-S and Q-learning can hold if and only if $\alpha_1 = \alpha_2 = 0$. Then, for the third update:
$$ Q_3 = (1-\alpha_3)Q_2 +  \alpha_3 (r + \gamma P_2 Q_2^m), \quad Q_3^m = \widetilde{\max_a} \left[ (1-\alpha_3)Q_2 +  \alpha_3 (r + \gamma P_2 Q_2^m)\right].$$
Therefore, for Q-learning to be equivalent to RB-S, we need to analyze the difference:
\begin{align*}
Q_3^m - Q_2^m &= \widetilde{\max_a} \left[ (1-\alpha_3)Q_2 +  \alpha_3 (r + \gamma P_2 Q_2^m) - Q_2^m \right] \\
&= \widetilde{\max_a}  \left[ (1-\alpha_3)(r + \gamma P_1 Q_1^m) + \alpha_3 (r + \gamma P_2 Q_2^m) - Q_2^m \right] = \\
&= \widetilde{\max_a}  \left[ r - Q_2^m + \gamma P_1 (1-\alpha_3) Q_1^m + \gamma P_2 \alpha_3 Q_2^m \right]. \\
\end{align*}
For the equivalence with the RB-S to hold, this expression should be equal to:
\begin{align*}
\Delta_3 = R_3 = &\widetilde{\max_a}  \left[ r+ (\gamma P_1 - I) Q_1^m + (\gamma P_2 - I)(Q_2^m - Q_1^m)\right] = \\
& \widetilde{\max_a}  \left[ r - Q_2^m + \gamma P_1 Q_1^m + \gamma P_2 (Q_2^m - Q_1^m) \right],
\end{align*}
which is not possible for any value of $\alpha_3$.

This results in an important difference in stochastic RB-S and Q-learning dynamics:
the Bellman update in Q-learning has only one stationary point, while the RB-S update has a subspace of stationary points. This leads to different algorithm dynamics: Q-learning can recover from errors, but progress after a single update might sometimes be negative. In contrast, RB-S cannot recover from errors, but steady progress toward stationary subspace is guaranteed in every step. Therefore, if the results of a Q-learning run are not satisfactory, the output Q-values can be reused to continue running the algorithm. In RB-S, however, the output is not reusable, so the algorithm must be restarted from scratch.

\section{EXPERIMENTAL RESULTS} 
\subsection{Known MDP experiments} \label{ssec:experiments_known} 
In this section, we present the performance results of the RB-S Algorithm (Algorithm ~\ref{alg:simple_vf_alg}) on three experiments. We compare the convergence of the RB-S algorithm with the Value Iteration algorithm.

In all the MDPs below, we construct the action transition probabilities as follows: the number of actions at each state is equal to the number of potential destinations, and each action has a distinct destination assigned to it. Then, the MDP has global parameters:

\begin{itemize} \item \textbf{Execution probability}: the probability that, after taking an action, the agent appears in the destination state assigned to that action. \item \textbf{Random probability}: the probability that, after taking an action, the agent appears in one of the destinations assigned to the state where the action is taken, with the destination chosen according to a unique exponential distribution. \item \textbf{Self-loop probability}: the probability that, despite taking an action, the agent remains in the same state. \end{itemize}
\
The comparison is performed on three kinds of MDPs:

\begin{itemize} \item \textbf{Random MDP:} We create this MDP by choosing, for each state, between 1 and 4 potential transition destinations selected randomly. We then create the same number of actions as there are potential destinations. The reward of each action is the sum of a randomly sampled state reward from the interval $(0, 3)$ and a random action reward from the interval $(-0.5, 0.5)$. The average shortest path in this MDP is $\mathcal{O}(\log n)$.

\item \textbf{Grid World MDP:} One hundred states correspond to the cells in a $10 \times 10$ grid. Possible destinations are "Up," "Left," "Down," and "Right," which are available at any state unless they would move the agent beyond the grid's border. The reward for each action is equal to $0.1$ times the sum of the cell's coordinates, plus a small random component. The average shortest path in this MDP is $\mathcal{O}(\sqrt{n})$.

\item \textbf{Cycle MDP:} Numbered states organized into a cycle. Each state has three available destinations: the first, second, and third states ahead in the cycle, with an equal number of actions. The reward is equal to the state's number multiplied by $0.1$, plus a small random component. The average shortest path in this MDP is $\mathcal{O}(n)$. \end{itemize}

Our results are presented in Figures \ref{fig:exec_prob_ex} \ref{fig:random_prob_exp},  \ref{fig:gamma_exp}, \ref{fig:size_exp}. Experimental details may be found in the figure captions. Overall, the experimental results support the theoretical findings: the RB-S algorithm largely benefits from self-loops, while the VI algorithm benefits from information exchange between nodes.

\begin{figure*}[t!] 
\begin{center}
\includegraphics[width=\textwidth]{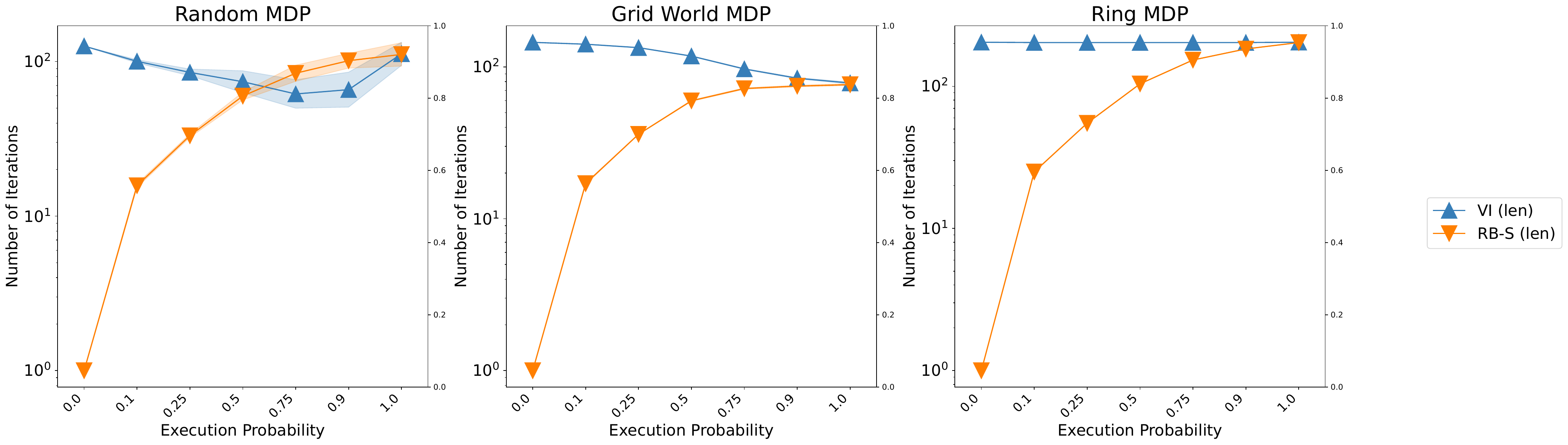}
\end{center}
\caption{Comparison of the performance of VI and RB-S algorithms. X-axis is value of execution probability (while remaining probability is assigned to self-loop). Y-axis is number of iteration it took the algorithm to converge to an $\epsilon$-optimal policy, where $\epsilon=0.1$. Average numbers and log standard deviations are presented. Number of states $n=100$, discount factor $\gamma=0.95$. 3 presented plots demonstrate how RB-S benefits from high probability of self-loops.}
\label{fig:exec_prob_ex}
\end{figure*}

\begin{figure*}[t!] 
\begin{center}
\includegraphics[width=\textwidth]{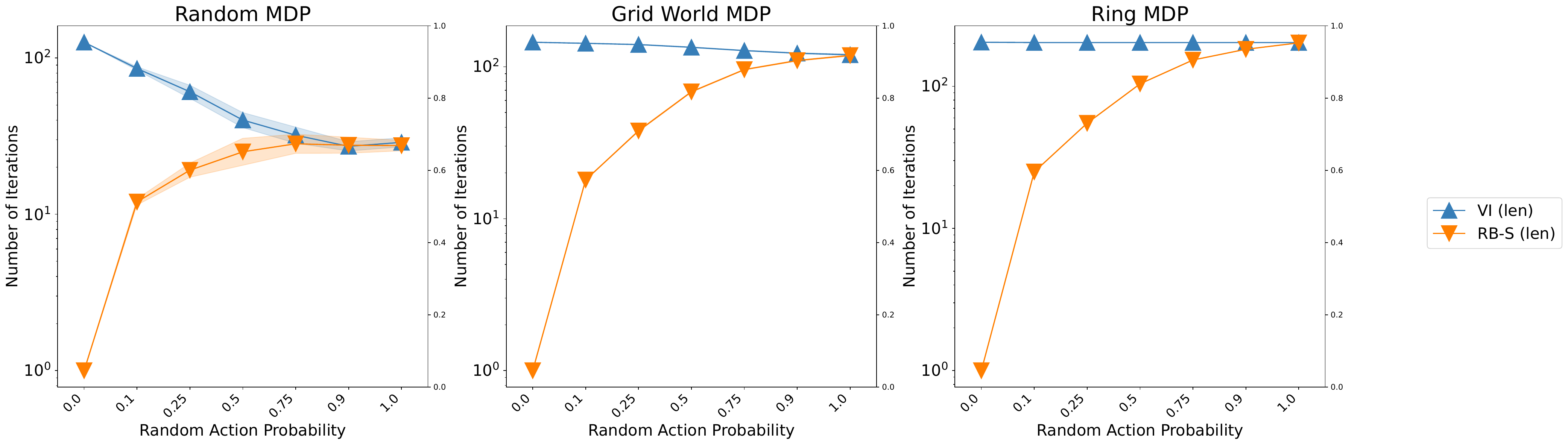}
\end{center}
\caption{Comparison of the performance of VI and RB-S algorithms. X-axis is value of random action probability (while remaining probability is assigned to self-loop). Y-axis is number of iteration it took the algorithm to converge to an $\epsilon$-optimal policy, where $\epsilon=0.1$. Average numbers and log standard deviations are presented. Number of states $n=100$, discount factor $\gamma=0.95$. 3 presented plots demonstrate how RB-S benefits from high probability of self-loops.}
\label{fig:random_prob_exp}
\end{figure*}


\begin{figure*}[t!] 
\begin{center}
\includegraphics[width=\textwidth]{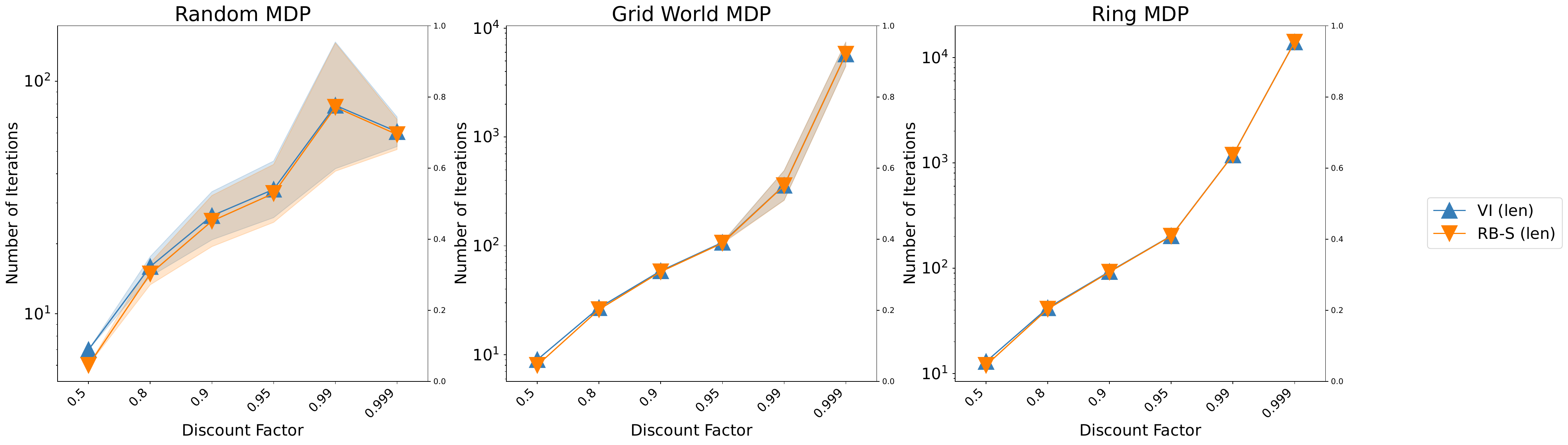}
\end{center}
\caption{Comparison of the performance of VI and RB-S algorithms. X-axis is value of gamma. Y-axis is number of iteration it took the algorithm to converge to an $\epsilon$-optimal policy, where $\epsilon=0.1$. Average numbers and log standard deviations are presented. Number of states $n=100$, execution and random action probabilities are both $0.5$. Performances of RB-S and VI are indistinguishable.}
\label{fig:gamma_exp}
\end{figure*}

\begin{figure*}[t!] 
\begin{center}
\includegraphics[width=\textwidth]{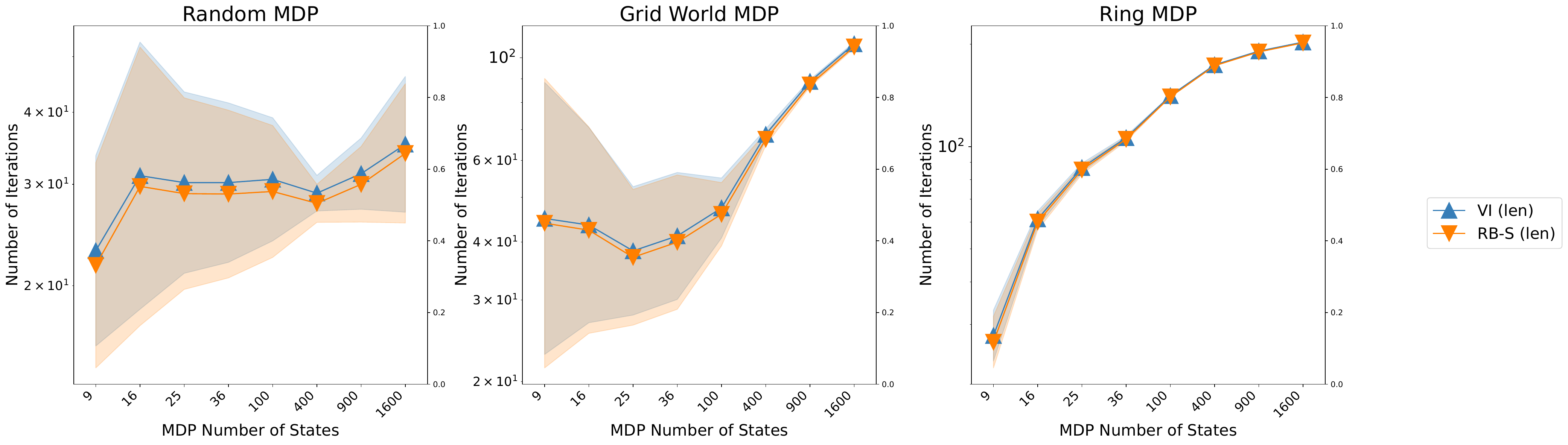}
\end{center}
\caption{Comparison of the performance of VI and RB-S algorithms. X-axis is a number of states. Y-axis is number of iteration it took algorithm to converge to $\epsilon$-optimal policy, where $\epsilon=0.1$. Average numbers and log standard deviations are presented. Execution and random action probabilities are both $0.5$, discount factor $\gamma=0.95$. Performances of RB-S and VI are very similar.}
\label{fig:size_exp}
\end{figure*}

\subsection{Unknown MDP experiments} \label{ssec:unknown_mdp_exp}

As discussed in Appendix \ref{ssec:rbs_vi_stoch_diff}, it is challenging to compare the performance of the two algorithms due to differences between them. In particular, tracking the progress of Q-learning and determining the number of iterations required for convergence is difficult. In contrast, for RB-S, the progress in terms of the infinity norm of the vector $R$ is known, with the error being the unknown factor. In this work, we design the experiments as follows: the number of iterations $t$ is set to
$$\frac{1}{1-\gamma} \log \left( \frac{r_{\max}}{\epsilon (1-\gamma)} \right),$$
which is the number required by RB-S to achieve an $\epsilon$-optimal policy (Theorem \ref{thm:stoch_conv}). We set $\epsilon$ to $0.1$ in all experiments. The number of samples $k$ used to estimate $P$ during each iteration varies and is shown on the $x$ axis. To measure performance, we use the infinity norm between the original rewards of actions of the optimal policy and the action currently implied by the current Q-value or RB-S reward vectors. We use the same MDP types—Random, Grid World, and Ring—as described in the previous section. Specific MDP setups are detailed in the corresponding figures. We run 20 experiments for each setup and plot the mean and standard deviation.

Results are presented on Figures \ref{fig:stoch_exp1}, \ref{fig:stoch_exp2}, \ref{fig:stoch_exp3}. Two algorithms perform similarly in Random MDP, where mixing is quick and Q-Values of all actions are similar, while in Grid World and Ring MDPs performance of RB-S is better.

\begin{figure*}[t!] 
\begin{center}
\includegraphics[width=\textwidth]{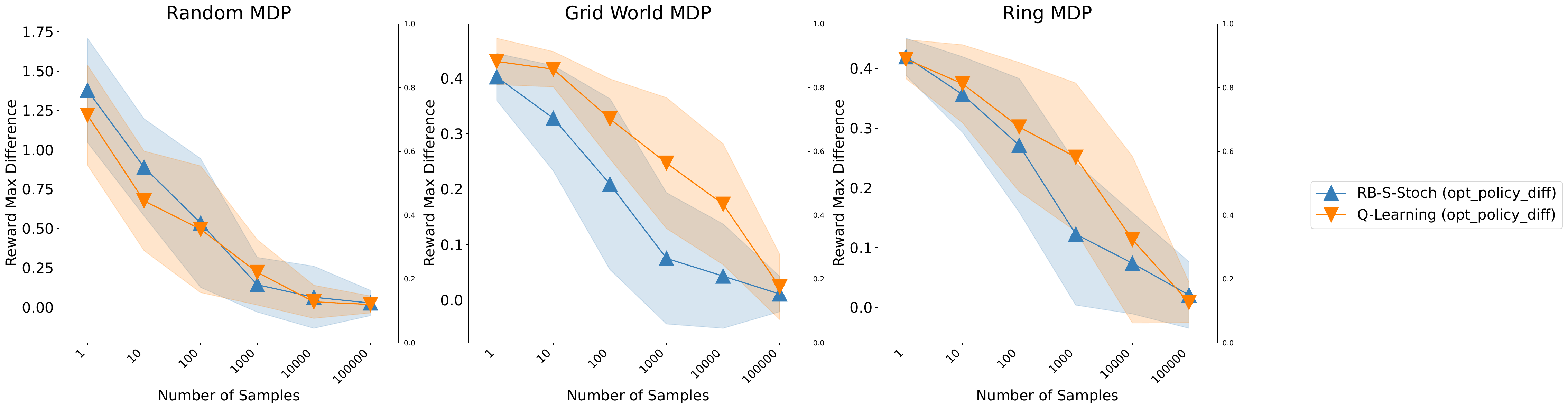}
\end{center}
\caption{Comparison of the performance of Q-learning and stochastic RB-S algorithms. X-axis is a number of samples used each iteration to estimate transition probabilities. Y-axis is maximum absolute difference between rewards of optimal actions and actions currently implied by the algorithm. MDP sizes are $100$, probability to take random action is $1$, discount factor is $\gamma=0.95$. RB-S and Q-learning perform similarly on random MDP, but RB-S performs better on Grid World and Ring MDPs.}
\label{fig:stoch_exp1}
\end{figure*}

\begin{figure*}[t!] 
\begin{center}
\includegraphics[width=\textwidth]{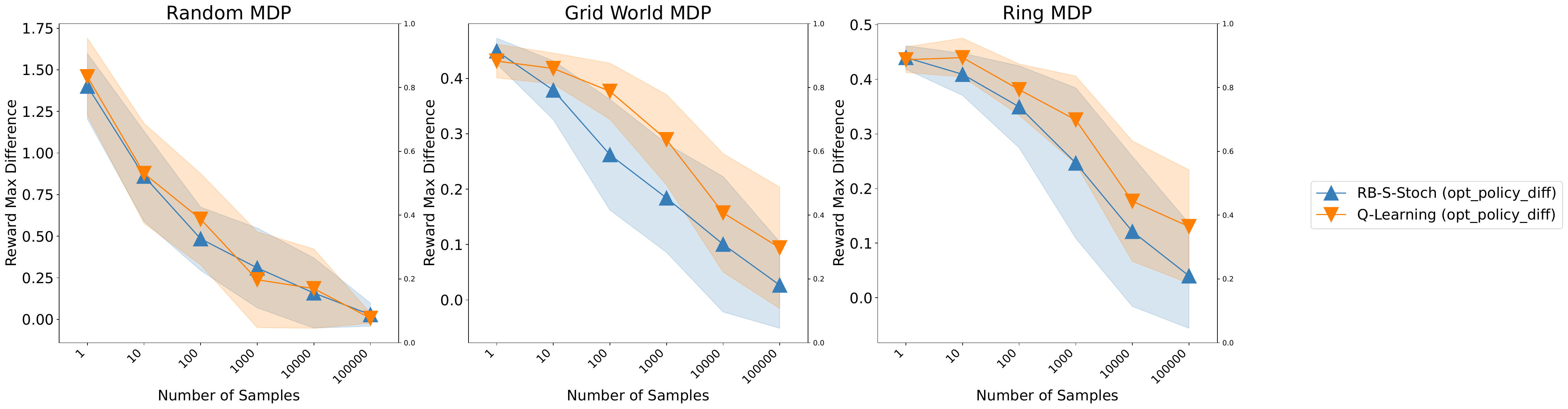}
\end{center}
\caption{Comparison of the performance of Q-learning and stochastic RB-S algorithms. X-axis is a number of samples used each iteration to estimate transition probabilities. Y-axis is maximum absolute difference between rewards of optimal actions and actions currently implied by the algorithm. MDP sizes are $200$, probability to take random action is $0.75$ and probability of self-loop action is $0.25$, discount factor is $\gamma=0.9$. RB-S and Q-learning perform similarly on random MDP, but RB-S performs better on Grid World and Ring MDPs.}
\label{fig:stoch_exp2}
\end{figure*}

\begin{figure*}[t!] 
\begin{center}
\includegraphics[width=\textwidth]{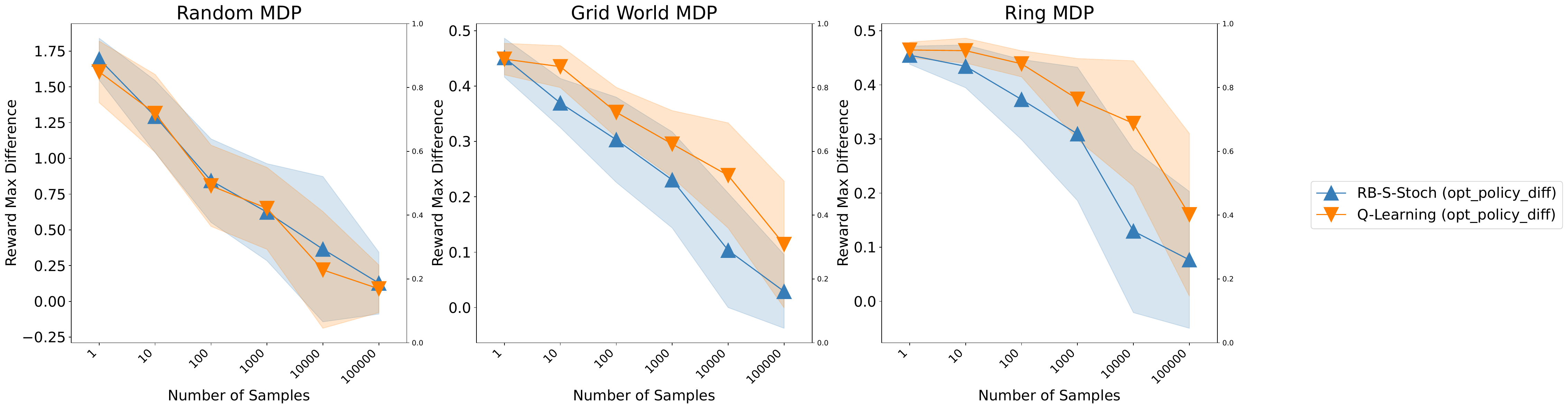}
\end{center}
\caption{Comparison of the performance of Q-learning and stochastic RB-S algorithms. X-axis is a number of samples used each iteration to estimate transition probabilities. Y-axis is maximum absolute difference between rewards of optimal actions and actions currently implied by the algorithm. MDP sizes are $400$, probability to take random action is $0.25$ and probability of self-loop action is $0.25$ and probability to execute the action is $0.5$, discount factor is $\gamma=0.9$. RB-S and Q-learning perform similarly on random MDP, but RB-S performs better on Grid World and Ring MDPs.}
\label{fig:stoch_exp3}
\end{figure*}

\end{document}